\newcommand{\vlat}{\vz}
\newcommand{\lat}{z}
\newcommand{\vLat}{\vZ}
\newcommand{\Lat}{Z}
\newcommand{\varpar}{\lambda}
\newcommand{\vvarpar}{\vlambda}
\newcommand{\vVarpar}{\vLambda}
\newcommand{\vmix}{\vw}
\newcommand{\mix}{w}
\newcommand{\varmean}{m}
\newcommand{\vvarmean}{\vm}
\newcommand{\vVarmean}{\vM}
\newcommand{\Varmean}{M}
\newcommand{\fim}{F}
\newcommand{\vfim}{\vF}
\newcommand\cut[1]{}
\newcommand{\elbofinal}{\mathcal{L}}
\newcommand{\squishlist}{
   \begin{list}{$\bullet$}
    { \setlength{\itemsep}{0pt}      \setlength{\parsep}{3pt}
      \setlength{\topsep}{3pt}       \setlength{\partopsep}{0pt}
      \setlength{\leftmargin}{1.5em} \setlength{\labelwidth}{1em}
      \setlength{\labelsep}{0.5em} } }
\newcommand{\squishlisttwo}{
   \begin{list}{$\bullet$}
    { \setlength{\itemsep}{0pt}    \setlength{\parsep}{0pt}
      \setlength{\topsep}{0pt}     \setlength{\partopsep}{0pt}
      \setlength{\leftmargin}{2em} \setlength{\labelwidth}{1.5em}
      \setlength{\labelsep}{0.5em} } }
\newcommand{\squishend}{
    \end{list}  }
\newtheorem{thm}{Theorem}{}
{}
{}
\newtheorem{defn}{Definition}{}
\newcommand{\half}{\mbox{$\frac{1}{2}$}}
\newcommand{\rnd}[1]{\left(#1\right)}
\newcommand{\sqr}[1]{\left[#1\right]}
\newcommand{\crl}[1]{\left\{#1\right\}}
\newcommand{\myang}[1]{\langle#1\rangle}
\newcommand{\myexpect}{\mathbb{E}}
\newcommand{\Unmyexpect}[1]{\mathbb{E}_{\scaleto{#1\mathstrut}{6pt}}}
\newcommand{\Unmyvar}[1]{\mathbb{V}_{\scaleto{#1\mathstrut}{6pt}}}
\newcommand{\expdist}{\mbox{Exp}}
\newcommand{\gauss}{\mbox{${\cal N}$}}
\newcommand{\chisq}{\mbox{${\chi^2 }$}}
\newcommand{\mgauss}{\mbox{${\cal MN}$}}
\newcommand{\IGauss}{\mbox{InvGauss}}
\newcommand{\myvec}[1]{\mbox{$\mathbf{#1}$}}
\newcommand{\myvecsym}[1]{\mbox{$\boldsymbol{#1}$}}
\newcommand{\valpha}{\mbox{$\myvecsym{\alpha}$}}
\newcommand{\vepsilon}{\mbox{$\myvecsym{\epsilon}$}}
\newcommand{\veta}{\mbox{$\myvecsym{\eta}$}}
\newcommand{\vmu}{\mbox{$\myvecsym{\mu}$}}
\newcommand{\vlambda}{\mbox{$\myvecsym{\lambda}$}}
\newcommand{\vLambda}{\mbox{$\myvecsym{\Lambda}$}}
\newcommand{\vphi}{\mbox{$\myvecsym{\phi}$}}
\newcommand{\vpi}{\mbox{$\myvecsym{\pi}$}}
\newcommand{\vsigma}{\mbox{$\myvecsym{\sigma}$}}
\newcommand{\vSigma}{\mbox{$\myvecsym{\Sigma}$}}
\newcommand{\va}{\mbox{$\myvec{a}$}}
\newcommand{\vb}{\mbox{$\myvec{b}$}}
\newcommand{\vg}{\mbox{$\myvec{g}$}}
\newcommand{\vm}{\mbox{$\myvec{m}$}}
\newcommand{\vs}{\mbox{$\myvec{s}$}}
\newcommand{\vu}{\mbox{$\myvec{u}$}}
\newcommand{\vw}{\mbox{$\myvec{w}$}}
\newcommand{\vx}{\mbox{$\myvec{x}$}}
\newcommand{\vz}{\mbox{$\myvec{z}$}}
\newcommand{\vA}{\mbox{$\myvec{A}$}}
\newcommand{\vB}{\mbox{$\myvec{B}$}}
\newcommand{\vD}{\mbox{$\myvec{D}$}}
\newcommand{\vF}{\mbox{$\myvec{F}$}}
\newcommand{\vG}{\mbox{$\myvec{G}$}}
\newcommand{\vI}{\mbox{$\myvec{I}$}}
\newcommand{\vL}{\mbox{$\myvec{L}$}}
\newcommand{\vM}{\mbox{$\myvec{M}$}}
\newcommand{\vS}{\mbox{$\myvec{S}$}}
\newcommand{\vU}{\mbox{$\myvec{U}$}}
\newcommand{\vV}{\mbox{$\myvec{V}$}}
\newcommand{\vW}{\mbox{$\myvec{W}$}}
\newcommand{\vX}{\mbox{$\myvec{X}$}}
\newcommand{\vZ}{\mbox{$\myvec{Z}$}}
\newcommand{\diag}{\mbox{$\mbox{diag}$}}
\newcommand{\calD}{\mbox{${\cal D}$}}
\newcommand{\data}{\calD}
\newcommand{\be}{\begin{equation}}
\newcommand{\ee}{\end{equation}}
\newcommand{\bea}{\begin{eqnarray}}
\newcommand{\eea}{\end{eqnarray}}
\newcommand{\beaa}{\begin{eqnarray*}}
\newcommand{\eeaa}{\end{eqnarray*}}
\newtheorem{lemma}{Lemma}
\newenvironment{proof}{\paragraph{Proof:}}{\hfill$\square$}
\icmltitlerunning{Fast and Simple Natural-Gradient Variational Inference with Mixture of Exponential-family Approximations}
\begin{document}

\twocolumn[
\icmltitle{Fast and Simple Natural-Gradient Variational Inference \\
with Mixture of Exponential-family Approximations}




\begin{icmlauthorlist}
\icmlauthor{Wu Lin}{ubc}
\icmlauthor{Mohammad Emtiyaz Khan}{riken}
\icmlauthor{Mark Schmidt}{ubc}
\end{icmlauthorlist}

\icmlaffiliation{ubc}{University of British Columbia, Vancouver, Canada.}
\icmlaffiliation{riken}{RIKEN Center for Advanced Intelligence Project, Tokyo, Japan}

\icmlcorrespondingauthor{Wu Lin}{wlin2018@cs.ubc.ca}

\icmlkeywords{Variational Inference, Approximate Bayesian Inference, Natural Gradients, Mixture of Exponential Family, Machine Learning, ICML}

\vskip 0.3in
]



\printAffiliationsAndNotice{}  

\begin{abstract}
Natural-gradient methods enable fast and simple algorithms for variational inference, but due to computational difficulties, their use is mostly limited to \emph{minimal} exponential-family (EF) approximations.
In this paper, we extend their application to estimate \emph{structured} approximations such as mixtures of EF distributions.
Such approximations can fit complex, multimodal posterior distributions and are generally more accurate than unimodal EF approximations.
By using a \emph{minimal conditional-EF} representation of such approximations, we derive simple natural-gradient updates. 
Our empirical results demonstrate a faster convergence of our natural-gradient method compared to black-box gradient-based methods with reparameterization gradients. Our work expands the scope of natural gradients for Bayesian inference and makes them more widely applicable than before.

\end{abstract}

\section{Introduction}
Variational Inference (VI) provides a cheap and quick approximation to the posterior distribution, and is now widely used in many areas of machine learning \citep{kingma2013auto, furmston2010variational, WainwrightJordan08, hensman2013gaussian, nguyen2017variational}.
%
In recent years, many natural-gradient methods have been proposed for VI \citep{sato2001online, honkela2007natural, Honkela:11, hensman2012fast,hoffman2013stochastic, khan2017conjugate}.
These works have shown that, for specific types of models and approximations, natural-gradient methods can result in simple updates which converge faster than gradient-based methods.
For example, stochastic variational inference (SVI) \citep{hoffman2013stochastic} is a popular natural-gradient method for conjugate exponential-family models.
\begin{figure}
   \center
   \includegraphics[width=3.0in]{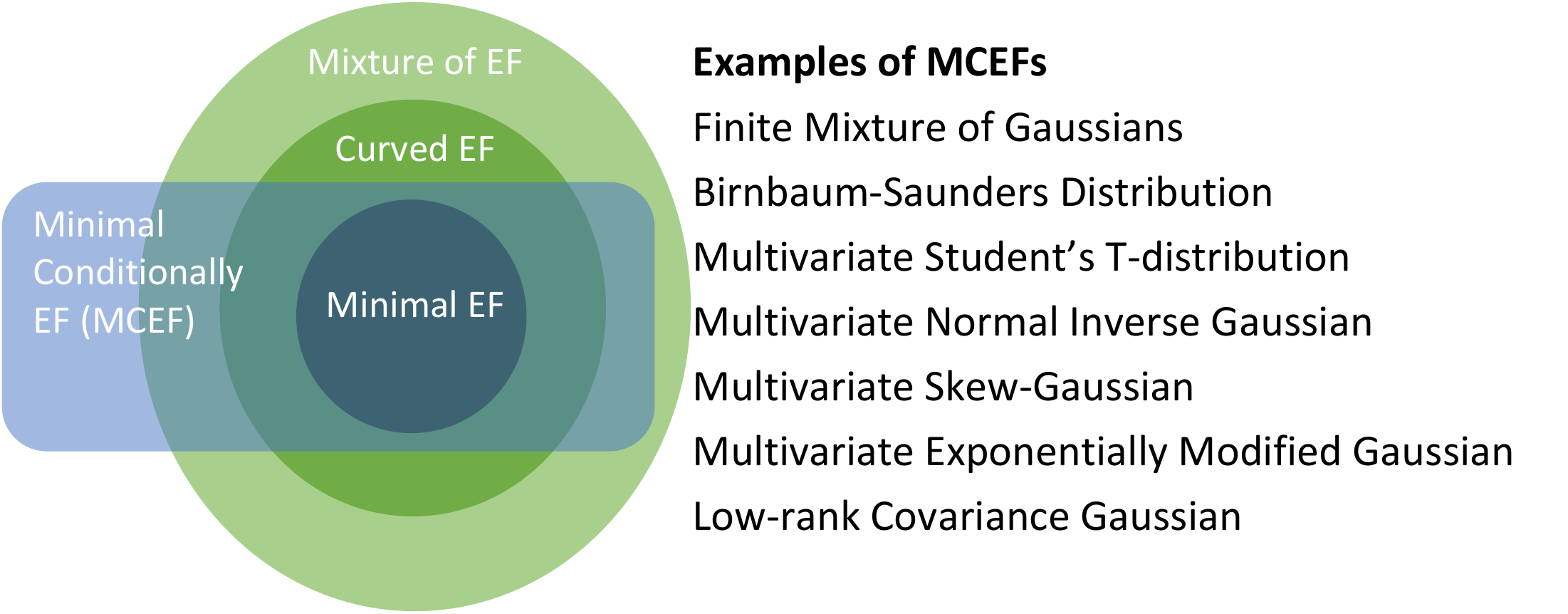}
   \vspace{-0.4cm}
   \caption{We derive simple natural-gradient updates for approximations with a \emph{minimal-conditional EF} (MCEF) representation. Such approximations include all minimal (normalized) EF distributions, and some curved and mixture of EF distributions.
   }
   \label{fig:fig1}
   \vspace{-0.3cm}
\end{figure}
Unfortunately, the simplicity of natural-gradient updates is currently limited to VI with \emph{minimal} exponential-family (EF) approximations.
For such approximations, we can efficiently compute natural-gradients in the natural-parameter space without explicitly computing the Fisher information matrix (FIM) \citep{khan2018fast}.
Unfortunately, this property does not extend to many other approximations such as mixtures of EF distributions.
Such \emph{structured} approximations are more appropriate for complex and multi-modal posterior distributions, giving a more accurate fit than minimal EF distributions. However, computation of natural-gradients is challenging for them.

In this paper, we propose a simple new natural-gradient method for VI with structured approximations.
We define a class of distributions which take a \emph{minimal conditional-EF} (MCEF) form. This includes many members of mixture and curved EF distributions (see Fig.~\ref{fig:fig1}).
Using the MCEF representation and an expectation parameterization associated with it, we derive simple natural-gradient updates.
We show examples on a variety of models where simple natural-gradient updates can be used to estimate flexible and accurate structured approximations.
Our empirical results show faster convergence of our method compared to gradient descent for VI.
Our work extends the simplicity of natural-gradient methods making them more widely applicable than before, while maintaining their fast convergence.

\subsection{Related Works}
Existing work on natural-gradient VI to obtain EF approximations all assume a minimal representation to ensure invertiblility of the FIM. Such methods show fast convergence, result in simple updates, and lead to a straightforward implementation for many types of models. They have been used for conditionally-conjugate EF models \citep{sato2001online, hoffman2013stochastic} and non-conjugate models \citep{khan2017conjugate}, including deep neural networks \citep{khan18a, zhang2017noisy, mishkin2018slang} and
Gaussian processes \citep{khan2016faster, salimbeni2018natural}. Our work presents fast and simple updates for approximations that are beyond the reach of these works.

%


For structured approximations, existing works employ stochastic-gradient methods \cite{ salimans2013fixed, hoffman2015structured, ranganath2016hierarchical, titsias2018unbiased, yin2018semi}.
Such methods are widely applicable, but not as widely used as their mean-field counterparts. This is because they are computationally expensive and slow to converge. Our work attempts to improve these aspects for a flexible class of approximations. 

%


\section{Natural-Gradient Variational Inference}
\label{sec:ngvi}
We begin with a description of natural-gradient descent for variational inference in probabilistic models.
Given a probabilistic model $p(\data,\vlat)$ to model data $\data$ using latent vector $\vlat$,
the goal of Bayesian inference is to compute the posterior distribution:
$p(\vlat|\data) = p(\data,\vlat)/p(\data)$. This requires computation of the \emph{marginal likelihood} $p(\data) = \int p(\data,\vlat) d\vlat$, which is a high dimensional integral and difficult to compute.
VI simplifies this problem by approximating the posterior distribution $p(\vlat|\data)$ by another distribution whose normalizing constant is easier to compute.
For example, a common choice to approximate the posterior is to use a \emph{regular}\footnote{An EF is regular when $\Omega$ is an open set.} exponential-family (EF) approximation,
\begin{align}
   q(\vlat|\vvarpar_\lat) := h_\lat(\vlat)\exp\sqr{ \myang{\vphi_{\lat}(\vlat), \vvarpar_{\lat}} - A_\lat(\vvarpar_\lat)} ,
   \label{eq:expfamdef}
\end{align}
where $q$ denotes the approximating distribution
, $\vphi_{\lat}(\vlat)$ are the sufficient statistics,  $h_\lat(\vlat)$ is the base measure, and $\vvarpar_\lat\in \Omega$ is the natural parameter with $\Omega$ being the set of valid natural-parameters (the set of $\vvarpar_\lat$ where the log-partition function $A_\lat(\vvarpar_\lat)$ \footnote{We assume $A_\lat(\vvarpar_\lat)$ can be efficiently computed.} is
finite) and $\myang{\cdot,\cdot}$ denotes an inner product.
Such parametrized approximations can be estimated by maximizing the variational lower bound:
\begin{align}
   \elbofinal(\vvarpar_\lat) := \myexpect_q \sqr{ \log p(\data,\vlat) - \log q(\vlat|\vvarpar_\lat) },
\end{align}
which can be solved by gradient descent, as shown below:
\begin{align}
   \textrm{GD : } \quad \vvarpar_\lat &\leftarrow \vvarpar_\lat + \alpha \nabla_{\varpar_\lat} \elbofinal(\vvarpar_\lat) ,
\end{align}
where $\nabla$ denotes the gradient and $\alpha>0$ is a scalar learning rate.
The GD algorithm is simple and convenient to implement by using modern automatic-differentiation methods and the reparameterization trick \citep{ranganath2014black, titsias2014doubly}.
Unfortunately, such \emph{first-order} methods can show suboptimal rates of convergence and be slow in practice.

An alternative approach is to use  natural-gradient descent which exploits the information geometry of $q$ to speed-up convergence. Assuming that the Fisher information matrix (FIM) of $q(\vlat|\vvarpar_\lat)$, denoted by $\vF_z(\vvarpar_\lat)$, is positive-definite for all $\vvarpar_\lat \in \Omega$, the natural-gradient descent (NGD) for VI  in the natural-parameter space is given as follows:
\begin{align}
   \textrm{NGD : } \quad \vvarpar_\lat &\leftarrow \vvarpar_\lat + \beta\,\, \sqr{ \vfim_\lat(\vvarpar_\lat) }^{-1} \nabla_{\varpar_\lat} \elbofinal(\vvarpar_\lat),
   \label{eq:ngd}
\end{align}
The preconditioning of the gradients by the FIM leads to a proper scaling of the gradient in each dimension, and takes into account dependencies between variables. This often leads to faster convergence, particularly when the FIM is well-conditioned.

A naive implementation of \eqref{eq:ngd} would require computation and inversion of the FIM. However, for specific types of models and approximations, NGVI could be simpler to compute than GD. This is true for mean-field VI in conjugate-exponential family models \citep{hoffman2013stochastic} as well as Bayesian neural networks \citep{khan18a, zhang2017noisy}. 
This computational efficiency is a result of a simple NGD update to estimate EF approximations, as shown by \citet{khan2018fast}. 
For EFs, we can use the expectation-parameter, defined as the function $\vvarmean_\lat(\vvarpar_\lat) := \myexpect_q \sqr{\vphi_{\lat}(\vlat)}$ from $\Omega\to \mathcal{M}$, to compute natural-gradients.
This is possible because of the following relation,
\begin{align}
   \nabla_{\varpar_\lat}\elbofinal &= \sqr{\nabla_{\varpar_\lat} \vvarmean_\lat^T} \nabla_{\varmean_\lat} \elbofinal = \sqr{\vfim_\lat(\vvarpar_\lat)}  \nabla_{\varmean_\lat}\elbofinal 
   \label{eq:chainrule1}
\end{align}
where the first equality is obtained by applying the chain rule and second equality is obtained by noting that $\nabla_{\varpar_\lat} \vvarmean_\lat^T  =  \nabla^2_{\varpar_\lat}  A(\vvarpar_\lat) =  \vfim_\lat(\vvarpar_\lat)$.
When the FIM is invertible, we get a simple update for NGD,
\begin{align}
   \vvarpar_\lat \leftarrow \vvarpar_\lat + \beta\,\, \nabla_{\varmean_\lat} \elbofinal(\vvarpar_\lat), \label{eq:ngd_meanpar}
\end{align}
where the gradient is computed with respect to $\vvarmean_\lat$. When the above gradient is easier to compute than the gradient with respect to $\vvarpar_\lat$, NGD admits a simpler form than GD. This is the case for many existing works on NGD for VI.

To rewrite NGD as in \eqref{eq:ngd_meanpar}, we need the FIM to be invertible. For EFs, a sufficient condition for invertibility is to use a \emph{minimal} representation which is defined\footnote{A more complete definition is given in Definition 1.3~of \citet{johansen1979introduction}. } below using the definition given in \citet{WainwrightJordan08}.
\begin{defn}
   {\bf (Minimal EF)} A regular EF representation is said to be minimal when there does not exist a nonzero vectors $\vvarpar$ such that $\myang{ \vphi_\lat(\vlat), \vvarpar}$ is equal to a constant.
\end{defn}
This essentially means that there are no linear dependencies in the parameterization of the distribution. When such a nonzero vector exists, we can add/subtract it from $\vvarpar$ without changing the distribution.
Minimality ensures that this never happens and the parametrization $\vvarpar_\lat$ is unique and identifiable up to multiplication with a nonsingular affine transformation.

Under a minimal representation, the log-partition function $A_\lat(\vvarpar_\lat)$ is \emph{strictly} convex, implying that the FIM is positive-definite.\footnote{A formal proof can be found in
\citet{johansen1979introduction}.}
For other types of representation, like \emph{curved} EFs, the FIM may not be positive-definite. Minimality ensures that the FIM is positive-definite and that NGD is well defined \footnote{In some cases, when FIM is not invertible, it is still possible to perform NGD by, for example, ignoring the zero eigenvalues or using damping, but the simplification shown
in \eqref{eq:ngd_meanpar} may not be possible.}.
The NGD update can be then carried out using the expectation parameter which
are a one-to-one function, as stated below \citep{WainwrightJordan08}.
\begin{thm}
   \label{thm:minimalEF}
   The representation \eqref{eq:expfamdef} is minimal  if and only if the mapping $\vvarmean_\lat(\cdot): \Omega \to \mathcal{M}$ is one-to-one.
\end{thm}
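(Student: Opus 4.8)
The plan is to prove the biconditional by linking minimality to strict convexity of the log-partition function, and strict convexity to injectivity of its gradient. The one fact I would lean on throughout is the standard exponential-family identity $\vm_\lat(\vlambda_\lat) = \nabla_{\vlambda_\lat} A_\lat(\vlambda_\lat)$, which is already implicit in \eqref{eq:chainrule1} (where $\nabla_{\vlambda_\lat} \vm_\lat^T = \nabla^2_{\vlambda_\lat} A_\lat = \vF_\lat$). In other words, the mean map $\vm_\lat(\cdot)$ is exactly the gradient map of $A_\lat$ over the open, convex domain $\Omega$, so the question of its injectivity reduces to a question about $A_\lat$.

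First I would establish that minimality implies the mean map is one-to-one. The excerpt already records that under a minimal representation $A_\lat$ is strictly convex on $\Omega$. For any differentiable strictly convex function, the gradient is a strictly monotone operator: for $\vlambda_1 \ne \vlambda_2$ in $\Omega$ one has $\myang{\nabla A_\lat(\vlambda_1) - \nabla A_\lat(\vlambda_2), \, \vlambda_1 - \vlambda_2} > 0$, which forces $\nabla A_\lat(\vlambda_1) \ne \nabla A_\lat(\vlambda_2)$. Since these gradients are precisely $\vm_\lat(\vlambda_1)$ and $\vm_\lat(\vlambda_2)$, the mean map is injective.

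For the converse I would argue the contrapositive: if the representation is not minimal, the mean map fails to be one-to-one. Non-minimality yields a nonzero vector $\myvec{c}$ and a constant $b$ with $\myang{\vphi_\lat(\vlat), \myvec{c}} = b$ for (base-measure-)almost-every $\vlat$. Fixing any $\vlambda \in \Omega$ and using that $\Omega$ is open, $\vlambda + t\myvec{c} \in \Omega$ for all small $t$. Substituting the constancy relation into the normalization integral shows $A_\lat(\vlambda + t\myvec{c}) = A_\lat(\vlambda) + t b$, whence $q(\vlat \given \vlambda + t\myvec{c}) = q(\vlat \given \vlambda)$ pointwise. Two distinct natural parameters then index the same distribution and hence share the same expected sufficient statistics, so $\vm_\lat(\vlambda + t\myvec{c}) = \vm_\lat(\vlambda)$ and the map is not injective.

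The main obstacle is the measure-theoretic care needed in the non-minimal direction: the phrase ``equal to a constant'' must be read as holding almost everywhere with respect to the base measure $h_\lat$, and I must ensure the perturbed parameter stays in $\Omega$, which is exactly where regularity (openness of $\Omega$) enters. A secondary point is to justify that $\Omega$ is convex and that strict convexity is global rather than merely pointwise, so that the monotonicity argument applies to arbitrary pairs $\vlambda_1, \vlambda_2$; both facts are standard for regular exponential families and I would cite them rather than re-derive them.
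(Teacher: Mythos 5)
Your proof is correct. One thing worth noting up front: the paper does not prove Theorem~\ref{thm:minimalEF} at all --- it is imported from \citet{WainwrightJordan08} --- so the only in-paper argument to compare against is the proof of its MCEF generalization (Theorem~\ref{thm:fimmcef}, via Lemma~\ref{fim_mini_cond} in Appendix~\ref{app:theorem1}). Your converse direction (non-minimality yields a nonzero $\va$ with $\myang{\vphi_\lat(\vlat),\va}$ constant, and the perturbation $\vvarpar_\lat + t\va$ then leaves both $q$ and $\vvarmean_\lat$ unchanged) is essentially the same computation the paper performs in \eqref{eq:pd_prf1}--\eqref{eq:pd_prf2}, just run in the opposite logical direction: the paper assumes injectivity and derives a contradiction after showing $\va^\top\vfim_\lat\va$ is a variance that must vanish, whereas you assume non-minimality and exhibit the failure of injectivity directly. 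Your forward direction is where you genuinely diverge: you chain minimality $\Rightarrow$ strict convexity of $A_\lat$ $\Rightarrow$ strict monotonicity of $\nabla A_\lat = \vvarmean_\lat$ $\Rightarrow$ injectivity, while the paper's appendix never uses convexity arguments and instead works through the variance identity. The monotone-gradient route is shorter and needs only convexity of $\Omega$ plus the identity $\vvarmean_\lat=\nabla A_\lat$, but be aware that its entire nontrivial content is the cited fact ``minimal $\Rightarrow$ strictly convex''; if you wanted a fully self-contained argument you would prove that fact by exactly the variance computation the paper uses (zero variance of $\va^\top\vphi_\lat$ forces it to be constant, contradicting minimality), which is also the version that survives the passage to conditional EFs and is why the paper adopts it for Theorem~\ref{thm:fimmcef}. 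The two caveats you flag yourself --- convexity of $\Omega$ for a regular EF, and reading ``equal to a constant'' as holding $h_\lat$-almost everywhere --- are the right ones and should be stated explicitly in a written-up version.
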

Unfortunately, minimal EF approximations are not always appropriate. Such approximations, especially unimodal ones, usually yield poor approximations of multimodal posterior distributions. Structured approximations, such as mixtures of EF distributions, are more suitable for approximating multimodal posteriors. 
Unfortunately, for such approximations, there is no straightforward way to define a minimal EF representation which can be exploited to derive a simple NGD update.
For example, a mixture of EF distributions expressed as
\begin{align}
   q(\vlat) := \int q(\vlat|\vmix)q(\vmix) d\vmix,
   \label{eq:mixexp}
\end{align}
with $q(\vlat|\vmix)$ as the component and $q(\vmix)$ as the mixing distribution, may not even have an EF form even when both the terms above are in the EF. A famous example is the finite mixture of Gaussians.
The conditions under which the FIM of \eqref{eq:mixexp} is invertible are also difficult to characterize in general.
Due to these reasons, it is difficult to simplify the NGD update for such structured distributions.

In this paper, we propose a new way to derive simple natural-gradients for structured approximations \eqref{eq:mixexp} using a minimal conditional representation which we define next.

\section{Minimal Conditional-EF Representation}
\label{sec:cmef}

In this section we define a minimal conditional-EF (MCEF) representation of the joint distribution $q(\vlat,\vmix)$ for structured approximations that take the form \eqref{eq:mixexp}.
Using this, we derive conditions under which the FIM of the joint is invertible, and show that it leads to a simple NGD similar to \eqref{eq:ngd_meanpar}.

We begin with a definition of the conditional EF distribution.
\begin{defn}
   {\bf (Conditional EF) }
   We call the joint distribution $q(\vlat,\vmix)$ defined in \eqref{eq:mixexp} a conditional EF when its components take the following form:\footnote{We assume that $A_\lat(\vvarpar_\lat,\vmix)$ and $A_\mix(\vvarpar_\mix)$ can be efficiently computed.}
   \begin{align}
      q(\vlat|\vmix) &:= h_\lat(\vlat,\vmix)\exp\sqr{\myang{\vphi_\lat(\vlat,\vmix), \vvarpar_\lat} - A_\lat(\vvarpar_\lat,\vmix)} , \nonumber\\
      q(\vmix) &:= h_\mix(\vmix)\exp\sqr{\myang{\vphi_\mix(\vmix), \vvarpar_\mix} - A_\mix(\vvarpar_\mix)} ,  \label{eq:qzw}
   \end{align}
   with $\vvarpar_\mix, \vphi_\mix(\vmix)$ and $A_\mix(\vvarpar_\mix)$ being the natural parameters, sufficient statistics, and log-partition function of $q(\vmix)$, and $\vvarpar_\lat, \vphi_\lat(\vlat,\vmix)$ and $A_z(\vvarpar_\lat,\vmix)$ are the same for $q(\vlat|\vmix)$. 
   We denote the set of natural parameters for $q(\vlat|\vmix)$ and $q(\vmix)$ by $\Omega_\lat$ and $\Omega_\mix$ respectively, and assume them to be open.\footnote{It is possible for $\Omega_\lat$ to be non-open since it is an intersection of all of the valid natural-parameter of $q(\vlat|\vmix)$ for each $\vmix\sim q(\vmix)$. The set is open when the set of valid natural-parameters for $q(\vlat|\vmix)$ conditioned on $\vmix$ does not depend on $\vmix$, or when the cardinality of the support of $\vmix$ is finite. For all the examples given in this paper, the set is an open set.}
   
   \label{def:cef}
\end{defn}
Note that the sufficient statistics $\vphi_\lat(\vlat,\vmix)$ and log-partition $A_\lat(\vvarpar_\lat,\vmix)$ both depend on $\vmix$, but, conditioned on $\vmix$,  the distribution is parametrized by the natural parameter $\vvarpar_\lat$.
For a well-defined conditional EF distribution,  $q(\vmix)$ is a regular EF distribution and conditioned on $\vmix$, $q(\vlat|\vmix)$ is also a regular EF distribution.
This is a type of conditional exponential-family distribution \citep{xing2002generalized,
liang2009learning,lindsey1996parametric,feigin1981conditional} with a special conditional structure.

We are interested in an NGD update that can exploit the FIM of the joint distribution. We denote the set of natural parameters by $\vvarpar := \{ \vvarpar_\lat, \vvarpar_\mix\}$ and the set of valid $\vvarpar_\lat$ and $\vvarpar_\mix$ by $\Omega_\lat$
and $\Omega_\mix$ 
respectively. We define the following FIM in the natural-parameter space $\Omega_\lat\times\Omega_\mix$  as follows: 
\begin{align}
   \vfim_{\mix\lat}(\vvarpar) := - \Unmyexpect{q(\lat,\mix)}\sqr{\nabla_\varpar^2 \log q(\vlat,\vmix)}.
   \label{eq:fimwz}
\end{align}
This is the FIM of the joint distribution $q(\vlat,\vmix)$ which is different from the one for the marginal distribution $q(\vlat)$.
Similar to the minimal EF case, our goal now is to find representations where the above FIM is invertible and can be exploited to compute NGD using expectations of the sufficient statistics. The expectation parameters of a CEF can be defined as shown below:
\begin{align}
   \vvarmean_\lat(\vvarpar_\lat, \vvarpar_\mix) & := \Unmyexpect{q(\lat|\mix)q(\mix)} \sqr{ \vphi_\lat(\vlat,\vmix) }, \label{eq:mz}\\
   \vvarmean_\mix(\vvarpar_\mix) & := \Unmyexpect{q(\mix)} \sqr{ \vphi_\mix(\vmix) } .  \label{eq:mw}
\end{align}
We denote the ranges of $\vvarmean_\lat$ and $\vvarmean_\mix$ by $\mathcal{M}_\lat$ and $\mathcal{M}_\mix$ respectively, and the whole set of expectation parameters by $\vvarmean(\vvarpar) := \{\vvarmean_\lat, \vvarmean_\mix\}$. Since $\vphi_\lat(\vlat,\vmix)$ and $A_\lat(\vvarpar_\lat,\vmix)$ both depend on $\vmix$, we may or may not be able to perform NGD using these expectation parameters. However, we next show NGD is always possible by restricting to a minimal representation of CEFs.

Below we define a minimal representation for CEF which ensures that the $\vfim_{\mix\lat}(\vvarpar)$ is positive definite, and NGD can be performed using $\vvarmean$.
\begin{defn}
   \label{eq:mcef}
   {\bf (Minimal Conditional-EF (MCEF))}
   A conditional EF defined in Definition \ref{def:cef} is said to have a minimal representation when $\vvarmean_\mix(\cdot):\Omega_\mix\to \mathcal{M}_\mix$ and  $\vvarmean_\lat(\cdot, \vvarpar_\mix):\Omega_\lat\to \mathcal{M}_\lat$ are both one-to-one, $\forall \vvarpar_\mix \in \Omega_\mix$.
\end{defn}
In the next section, we will show that all the examples shown in Figure~\ref{fig:fig1} (See ``Examples of MCEFs'') have an MCEF representation.
Similar to minimal EFs, an MCEF representation implies that $\vfim_{\mix\lat}(\vvarpar)$ is positive-definite and invertible, as stated in the following theorem (see a proof in Appendix \ref{app:theorem1}).
\begin{thm}
   \label{thm:fimmcef}
   For an MCEF representation given in Definition \ref{eq:mcef}, the FIM $\vfim_{\mix\lat}(\vvarpar)$ given in \eqref{eq:fimwz} is positive-definite and invertible for all $\vvarpar\in\Omega$.
\end{thm}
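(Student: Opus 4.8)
The plan is to exploit the factorization $q(\vlat,\vmix)=q(\vlat|\vmix)q(\vmix)$ to show that $\vfim_{\mix\lat}(\vvarpar)$ is block-diagonal in $\vvarpar=\{\vvarpar_\lat,\vvarpar_\mix\}$, and then argue that each diagonal block is positive-definite. First I would write $\log q(\vlat,\vmix)=\log q(\vlat|\vmix)+\log q(\vmix)$ and differentiate twice. Since $\log q(\vlat|\vmix)$ depends on $\vvarpar$ only through $\vvarpar_\lat$ and $\log q(\vmix)$ only through $\vvarpar_\mix$, the cross block $\nabla_{\vvarpar_\mix}\nabla_{\vvarpar_\lat}\log q$ vanishes identically. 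Taking $-\Unmyexpect{q(\lat,\mix)}[\cdot]$, and using the standard EF identities $\nabla^2_{\vvarpar_\lat}A_\lat=\mathrm{Cov}_{q(\lat|\mix)}[\vphi_\lat]$ and $\nabla^2_{\vvarpar_\mix}A_\mix=\mathrm{Cov}_{q(\mix)}[\vphi_\mix]$ together with the fact that $\nabla^2_{\vvarpar_\lat}A_\lat(\vvarpar_\lat,\vmix)$ does not depend on $\vlat$, this reduces the claim to showing that the two blocks $\vB_\mix:=\nabla^2_{\vvarpar_\mix}A_\mix(\vvarpar_\mix)$ and $\vB_\lat:=\Unmyexpect{q(\mix)}[\nabla^2_{\vvarpar_\lat}A_\lat(\vvarpar_\lat,\vmix)]$ are each positive-definite.

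The block $\vB_\mix$ is immediate: by Definition \ref{eq:mcef} the map $\vvarmean_\mix(\cdot)$ is one-to-one, so by Theorem \ref{thm:minimalEF} the marginal $q(\vmix)$ is a minimal regular EF, whence $A_\mix$ is strictly convex and $\vB_\mix=\mathrm{Cov}_{q(\mix)}[\vphi_\mix(\vmix)]\succ 0$. The substance of the argument lies in $\vB_\lat$.

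For $\vB_\lat$, I would first identify it with the Jacobian $\nabla_{\vvarpar_\lat}\vvarmean_\lat(\vvarpar_\lat,\vvarpar_\mix)$, equivalently the Hessian of the averaged log-partition $\bar A(\vvarpar_\lat):=\Unmyexpect{q(\mix)}[A_\lat(\vvarpar_\lat,\vmix)]$, a convex function with $\vvarmean_\lat=\nabla_{\vvarpar_\lat}\bar A$. The MCEF assumption states that $\vvarmean_\lat(\cdot,\vvarpar_\mix)$ is one-to-one, i.e. $\nabla_{\vvarpar_\lat}\bar A$ is injective. The main obstacle is that injectivity of a gradient map does not by itself force its Hessian to be positive-definite (e.g. $t\mapsto t^3$). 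To close this gap I would invoke the support-invariance of regular EFs: for fixed $\vmix$, all conditionals $\{q(\vlat|\vmix)\}_{\vvarpar_\lat\in\Omega_\lat}$ share the same support, so the event ``$\myang{\vu,\vphi_\lat(\vlat,\vmix)}$ is constant in $\vlat$'' is independent of $\vvarpar_\lat$. Writing $\vu^T\vB_\lat\vu=\Unmyexpect{q(\mix)}[\mathrm{Var}_{q(\lat|\mix)}[\myang{\vu,\vphi_\lat(\vlat,\vmix)}]]$, if this vanished at a single $\vvarpar_\lat^{0}$ for some $\vu\neq\vzero$, then the nonnegative inner variance would vanish for $q(\vmix)$-almost every $\vmix$, and by support-invariance it would vanish at every $\vvarpar_\lat$, giving $\vB_\lat(\vvarpar_\lat)\vu=\vzero$ throughout $\Omega_\lat$.

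Finally I would turn this into a contradiction with injectivity: $\vB_\lat(\vvarpar_\lat)\vu=\vzero$ for all $\vvarpar_\lat$ means $\frac{d}{dt}\vvarmean_\lat(\vvarpar_\lat+t\vu,\vvarpar_\mix)=\vzero$, so $\vvarmean_\lat$ is constant along the segment $\vvarpar_\lat+t\vu$ (which remains in the open set $\Omega_\lat$ for small $t$), contradicting one-to-one. Hence $\vB_\lat\succ0$ everywhere; both diagonal blocks are positive-definite, so the block-diagonal $\vfim_{\mix\lat}(\vvarpar)$ is positive-definite and therefore invertible for all $\vvarpar\in\Omega$. I expect the support-invariance step bridging injectivity and positive-definiteness to be the crux; the block-diagonal reduction and the $\vB_\mix$ block are routine.
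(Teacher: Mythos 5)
Your proposal is correct and follows essentially the same route as the paper: block-diagonalize the FIM using $q(\vlat,\vmix)=q(\vlat|\vmix)q(\vmix)$, handle the $\vvarpar_\mix$ block via minimality of $q(\vmix)$, and rule out a null direction $\va$ of the $\vvarpar_\lat$ block by contradiction with injectivity of $\vvarmean_\lat(\cdot,\vvarpar_\mix)$. The only cosmetic difference is the last step: the paper shows directly that perturbing $\vvarpar_\lat$ along the degenerate direction leaves $\vvarmean_\lat$ unchanged because the constant $\myang{\va,\vphi_\lat(\vlat,\vmix)}$ cancels in the normalized integrand, whereas you propagate the vanishing conditional variance along the whole segment via support-invariance and integrate the zero derivative --- both validly close the gap you correctly identify as the crux (injectivity of a gradient map alone does not force a positive-definite Hessian).
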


Since the FIM is well-defined, it is reasonable to perform natural-gradient steps in the Riemannian manifold defined by $\vfim_{wz}(\vvarpar)$. The steps can be taken using an update that takes a simple form, similar to the one shown in \eqref{eq:ngd_meanpar}.
As shown in Lemma \ref{lemma:ng_update_fim} in Appendix \ref{app:theorem1}, similar to \eqref{eq:chainrule1} for minimal EF, we have the following relationship for MCEFs:
\begin{align}
   \nabla_{\varpar}\elbofinal &= \sqr{\nabla_{\varpar} \vvarmean^T} \nabla_{\varmean} \elbofinal = \sqr{\vfim_{wz}(\vvarpar)}  \nabla_{\varmean}\elbofinal 
\end{align}
Since the FIM is invertible, we can compute the natural-gradient by using gradients with respect to $\vvarmean$. The following theorem shows the simplicity of NGD.
\begin{thm}
   \label{thm:mixexp}
   For minimal conditional-EF approximations, the following updates are equivalent: 
   \begin{align}
      \vvarpar &\leftarrow \vvarpar + \beta\,\, \sqr{\fim_{wz}(\vvarpar)}^{-1} \nabla_{\varpar} \elbofinal(\vvarpar) \\
      \vvarpar & \leftarrow \vvarpar + \beta\,\, \nabla_{\varmean} \mathcal{L}(\vvarpar).
      \label{eq:simplengvi}
   \end{align}
\end{thm}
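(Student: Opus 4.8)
The plan is to read the theorem as a one-line algebraic consequence of the two facts already available: the invertibility of the joint FIM and the chain-rule/FIM relationship displayed immediately before the statement. Concretely, I would start from $\nabla_{\varpar}\elbofinal = \sqr{\vfim_{wz}(\vvarpar)}\nabla_{\varmean}\elbofinal$. By Theorem~\ref{thm:fimmcef}, the MCEF assumption (Definition~\ref{eq:mcef}) guarantees that $\vfim_{wz}(\vvarpar)$ is positive-definite, hence invertible, for every $\vvarpar\in\Omega$. Left-multiplying by $\sqr{\vfim_{wz}(\vvarpar)}^{-1}$ then gives $\sqr{\vfim_{wz}(\vvarpar)}^{-1}\nabla_{\varpar}\elbofinal = \nabla_{\varmean}\elbofinal$, so the search direction of the preconditioned update is exactly $\nabla_{\varmean}\elbofinal$ and the two updates produce identical iterates for any step size $\beta$. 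This settles the theorem at face value; all the real work lives in justifying the displayed relationship.

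Accordingly, the substance is to establish $\nabla_{\varpar}\elbofinal = \sqr{\nabla_{\varpar}\vvarmean^T}\nabla_{\varmean}\elbofinal = \sqr{\vfim_{wz}(\vvarpar)}\nabla_{\varmean}\elbofinal$, which I would split into its two equalities. For the first (the chain rule), I would use the MCEF conditions to argue that the joint map $\vvarpar\mapsto\vvarmean$ is a bijection: $\vvarmean_\mix(\cdot)$ is one-to-one in $\vvarpar_\mix$, so $\vvarpar_\mix$ is recovered from $\vvarmean_\mix$, and $\vvarmean_\lat(\cdot,\vvarpar_\mix)$ is one-to-one in $\vvarpar_\lat$, so $\vvarpar_\lat$ is recovered given $\vvarpar_\mix$. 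Hence $\elbofinal$ may be regarded as a function of $\vvarmean$, and the ordinary multivariate chain rule gives $\nabla_{\varpar}\elbofinal = \sqr{\nabla_{\varpar}\vvarmean^T}\nabla_{\varmean}\elbofinal$, exactly mirroring \eqref{eq:chainrule1} in the minimal-EF case.

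The second equality, $\nabla_{\varpar}\vvarmean^T = \vfim_{wz}(\vvarpar)$, is the crux and the step I expect to be hardest. I would compute both sides block-wise over $\vvarpar=\{\vvarpar_\lat,\vvarpar_\mix\}$ from the factorization $\log q(\vlat,\vmix)=\log q(\vlat|\vmix)+\log q(\vmix)$ together with the definitions \eqref{eq:mz}--\eqref{eq:mw} and the FIM \eqref{eq:fimwz}. The diagonal blocks are the familiar EF identities: $\nabla_{\varpar_\mix}\vvarmean_\mix^T=\nabla^2_{\varpar_\mix}A_\mix$, and, using $\myexpect_{q(\lat|\mix)}\sqr{\vphi_\lat(\vlat,\vmix)}=\nabla_{\varpar_\lat}A_\lat(\vvarpar_\lat,\vmix)$, the $\vvarpar_\lat$ block is the average of $\nabla^2_{\varpar_\lat}A_\lat$ over $q(\vmix)$. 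The delicate point is the dependence of $\vphi_\lat$ and $A_\lat$ on $\vmix$: since $\vvarmean_\lat=\myexpect_{q(\mix)}\sqr{\nabla_{\varpar_\lat}A_\lat(\vvarpar_\lat,\vmix)}$ is averaged against $q(\vmix)$, it also varies with $\vvarpar_\mix$, producing an off-diagonal Jacobian block that must be reconciled against the corresponding block of $\vfim_{wz}(\vvarpar)$. Controlling this cross term, via the conditional score identity $\myexpect_{q(\lat|\mix)}\sqr{\vphi_\lat-\nabla_{\varpar_\lat}A_\lat}=\vzero$ and the induced block structure of the joint FIM, is exactly what the appendix lemma carries out and where I would concentrate the effort; minimality re-enters only at the end to certify each block, and hence $\vfim_{wz}(\vvarpar)$, is nonsingular so that the inversion used in the first paragraph is legitimate.
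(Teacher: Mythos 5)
Your opening paragraph and the diagonal-block computations are exactly the paper's route: the theorem is obtained by left-multiplying the identity $\nabla_{\varpar}\elbofinal=\sqr{\vfim_{wz}(\vvarpar)}\nabla_{\varmean}\elbofinal$ (Lemma~\ref{lemma:ng_update_fim}) by the inverse FIM, whose existence is Theorem~\ref{thm:fimmcef}, and the identities $\vfim_\lat=\nabla_{\varpar_\lat}\vvarmean_\lat^T$ and the block-diagonality of $\vfim_{wz}$ are Lemmas~\ref{lemma:fimwrtmean} and~\ref{lemma:fim}. The gap is in how you plan to close the step you correctly single out as the crux. The full-matrix identity $\nabla_{\varpar}\vvarmean^T=\vfim_{wz}(\vvarpar)$ is false in general: since $\vvarmean_\lat=\Unmyexpect{q(\mix)}\sqr{\nabla_{\varpar_\lat}A_\lat(\vvarpar_\lat,\vmix)}$, the cross block is
\begin{align*}
\nabla_{\varpar_\mix}\vvarmean_\lat^T \;=\; \mathrm{Cov}_{q(\mix)}\!\left(\vphi_\mix(\vmix),\,\nabla_{\varpar_\lat^T}A_\lat(\vvarpar_\lat,\vmix)\right),
\end{align*}
which is generically nonzero (for the mixture of Gaussians, $\vvarmean_{\lat}$ contains $\pi_c\vvarmean_c$, which visibly depends on $\vvarpar_\mix$). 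The conditional score identity $\Unmyexpect{q(\lat|\mix)}\sqr{\vphi_\lat-\nabla_{\varpar_\lat}A_\lat}=\vzero$ is what kills the off-diagonal blocks of the \emph{FIM} (Lemma~\ref{lemma:fim}); it does nothing to the \emph{Jacobian}, which remains block-triangular. So the cross term cannot be ``reconciled'' away: the Jacobian and the FIM genuinely differ in that block, and a proof organized around equating them as matrices would fail.

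What actually makes the theorem true is that it only needs two block identities, each with its own convention for which coordinates are held fixed. For the $\vvarpar_\lat$ block, one differentiates holding $\vvarmean_\mix$ fixed; since $\vvarmean_\mix$ is a bijection of $\vvarpar_\mix$ alone, this is the same as holding $\vvarpar_\mix$ fixed, the cross term $\nabla_{\varpar_\lat}\vvarmean_\mix^T$ vanishes, and $\nabla_{\varpar_\lat}\elbofinal=\vfim_\lat\nabla_{\varmean_\lat}\elbofinal$ follows from Lemma~\ref{lemma:fimwrtmean}. For the $\vvarpar_\mix$ block, $\nabla_{\varmean_\mix}\elbofinal$ must be taken holding $\vvarpar_\lat$ --- not $\vvarmean_\lat$ --- fixed; this is what the paper's ``holds trivially'' conceals, and it is what the update for $\pi_c$ in Appendix~\ref{app:finitemix} implements (the derivative $\nabla_{\pi_c}q(\vlat)$ is computed with $\vmu_c,\vSigma_c$ frozen). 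Had you held $\vvarmean_\lat$ fixed instead, the extra term $\sqr{\nabla_{\varmean_\mix}\vvarpar_\lat^T}\nabla_{\varpar_\lat}\elbofinal$ would survive and the second update would no longer reproduce the natural-gradient step. Restate the reduction block-wise with these conventions explicit rather than attempting the matrix identity.
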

The above theorem generalizes the previous result for minimal EF approximations to minimal conditional-EF approximations. It also implies that using the expectation parameterization $\vvarmean$ enables us to exploit the geometry of the joint $q(\vlat,\vmix)$ to improve convergence.

\section{Examples}
\label{sec:examples}
In this section, we give examples of approximations with a minimal conditional-EF form. We discuss finite mixtures of EFs, scale mixtures of Gaussians, and multivariate Skew-Gaussians. 
We give updates that can be implemented efficiently by using existing NGD implementations, e.g., the variational online Newton (VON) method of \citet{khan18a}. We also propose many new versions.
To derive these updates, we use the extended Bonnet's and Price's theorems \citep{wu-report} for Gaussian mixtures. 
For exponential-family mixtures, we use the implicit reparameterization trick \citep{salimans2013fixed,figurnov2018implicit}. \citet{wu-report} discuss a weaker version of the reparameterization trick for exponential-family mixtures.

\begin{table*}[!t]
\center
   \caption{We give expressions for various components of $q(\vlat,\vmix)$ as specified in \eqref{eq:qzw} for a minimal conditional-EF representation. We give three examples. More examples can be found at the Appendix. The first two rows give expressions for the sufficient statistics, and the subsequent rows show natural and expectation parameters. For the
first column, $\mathbb{I}_c(\mix)$ denotes the indicator function which is 1 when $w=c$ and 0 otherwise. For the second column, $\psi$ is the digamma function. For the third column, $ c:=\sqrt{2/\pi }$. Note that both the natural and expectation parameters may lie in a constrained set. Due to space limitations, we have not explicitly given the description of these sets.}
\begin{tabular}{l|l|l|l}
    & Mixture of Gaussians  
    & T-Distribution
    & Skew-Gaussian
    \\
    \hline
          
   $\phi_\mix(\mix)$ 
      & $\crl{ \mathbb{I}_c(\mix) }_{c=1}^{K-1} $%
      & $-1/\mix-\log \mix $  %
      & $ \mix, \,  \mix^2  $ %
      \\

   $\vphi_\lat(\vlat,\mix)$
      & $\crl{ \mathbb{I}_c(\mix) \vz,\,\, \mathbb{I}_c(\mix) \vz\vz^T }_{c=1}^K $%
      & $\crl{ \vlat/\mix,\, \vlat \vlat^{T}/\mix } $ %
      & $\crl{  \vlat,\, |\mix|\vlat, \, \vlat \vlat^{T} } $  %
      \\

   $\vvarpar_\mix $
      & $\crl{\log \rnd{\pi_c/\pi_K} }_{c=1}^{K-1} $%
      & $a$ %
      & constants $\crl{0, \, -\half}$ 
      \\
  
   $\vvarmean_\mix(\vvarpar_\mix) $
      & $\{\pi_c\}_{c=1}^{K-1}$
      & $-1 - \log a + \psi(a)$
      & constants $\crl{0, \, 1}$
     \\
     
  $\vvarpar_\lat$ 
      & $\crl{\vSigma_c^{-1}\vmu_c,\,\, -\half\vSigma_c^{-1} }_{c=1}^K$%
      & $\crl{ \vSigma^{-1}\vmu, -\half \vSigma^{-1}  } $
      & $\crl{ \vSigma^{-1}\vmu, \,\, \vSigma^{-1}\valpha , \,\, -\half \vSigma^{-1}  } $
      \\

   $\vvarmean_\lat(\vvarpar) $
      & $\crl{ \pi_c \vmu_c, \pi_c \rnd{\vmu_c\vmu_c^T + \vSigma_c} }_{c=1}^K$ %
      & $\crl{\vmu, \vmu \vmu^T + \vSigma } $ 
      & $\big\{\vmu + c \valpha, \,\, \valpha + c  \vmu,$ 
      \\
      
     ~
     & ~
     & ~
     &  $ \vmu \vmu^T + \valpha \valpha^T+ \vSigma + c \left( \vmu \valpha^T + \valpha \vmu^T \right) \big\} $ \\
     \hline
     
\end{tabular}
\label{tab:examples}
\end{table*}

\subsection{Finite Mixture of Exponential Family Distributions}
\label{sec:finiteef}
Finite mixtures of EFs are a powerful approximation where components in EF form are mixed using a discrete distributions such as a multinomial distribution, as shown below:
\begin{align}
   q(\vlat) = \sum_{c=1}^K  \pi_c q(\vlat|\vvarpar_c), \,\, \textrm{such that } \sum_{c=1}^K \pi_c = 1,
   \label{eq:finitemixexp1}
\end{align}
where $q(\vlat|\vvarpar_c)$ are EF distributions with natural parameters $\vvarpar_c$. The support of $\vlat$ 
of all the components is assumed to be the same, and $\pi_K$ is fixed to $1 - \sum_{c=1}^{K-1} \pi_c$.

This distribution cannot be written in an exponential form in general, and therefore none of the existing NGD methods can be used to derive a simple expression for NGVI. Directly applying NGD using the FIM of $q(\vlat)$ would be too expensive since the number of parameters are in $O(D^2K)$, and the FIM could be extremely large when computed naively.
Fortunately, the joint distribution does take an MCEF form when all $q(\vlat|\vvarpar_c)$ are minimal EFs.
A formal statement is given in Appendix \ref{app:finitemix}.
Using this conditional EF form, we can derive a much simpler NGD update which reduces to simple parallel updates on mixture components.

We will now demonstrate the simplicity of our update for a finite mixture of Gaussians (MOG) where components $q(\vlat|\vvarpar_c) := \gauss(\vlat|\vmu_c,\vSigma_c )$ are Gaussian.
The sufficient statistics, natural parameters, and expectation parameters of the corresponding CEF are given in Table \ref{tab:examples}.

We consider the following model where the likelihood $p(\data_n|\vlat)$ is defined using neural-network weights $\vlat$ with a prior $p(\vlat)$: $p(\data,\vlat) = \prod_{n=1}^N p(\data_n|\vlat) p(\vlat)$.
We approximate the posterior with a MOG by optimizing the following variational lower bound:
$\mathcal{L}(\vvarpar) := \myexpect_{q(\lat)}[- h(\vlat)]$ where $h(\vlat) := \log \sqr{q(\vlat)/p(\vlat)} -\sum_{n}\log p(\data_n|\vlat)$.

We now summarize the NGD update derived in Appendix \ref{app:finitemix}. We first generate samples $(\vlat,\mix)$ from $q(\vlat,\mix)$. 
The mean $\vmu_c$ and covariance $\vSigma_c$ are similarly updated by the variational online Newton (VON) algorithm \citep{khan18a}:
\begin{align}
   \vSigma_c^{-1} &\leftarrow \vSigma_c^{-1} + \beta  \delta_c \, \sqr{  \nabla_{\lat}^2 h(\vlat) } , \nonumber\\
   \vmu_c &\leftarrow \vmu_c - \beta \delta_c \vSigma_c  \sqr{\nabla_{\lat}  h(\vlat) } ,  \label{eq:dis_mix}\\
   \log \rnd{ \pi_{c}/\pi_K} &\leftarrow  \log \rnd{ \pi_{c}/\pi_K}  - \beta  (\delta_{c} - \delta_{K})  h(\vlat) \nonumber
\end{align}
where $\delta_{c}:= \gauss(\vlat|\vmu_c,\vSigma_c) / \sum_{k=1}^K \pi_{k} \gauss(\vlat|\vmu_{k},\vSigma_{k}) $. This update can be implemented efficiently using approximations discussed in \citet{khan18a, zhang2017noisy, mishkin2018slang}. For example, implementation of the Adam optimizer can be utilized. This is simpler and more efficient than the update which requires computation of the FIM.

A similar example to MOG is the fatigue life distribution \citep{birnbaum1969new} discussed in Appendix \ref{app:BS}.

\subsection{Scale Mixture of Gaussians}
\label{sec:tdis}
A multivariate scale-mixture of Gaussian (SMG) distribution \citep{andrews1974scale, eltoft2006multivariate} takes the following form where the covariance matrix is ``scaled'' by a vector $\vmix$ sampled from $q(\vmix)$ such that $w_i>0$:
\begin{align*}
   q(\vlat) = \int \gauss(\vlat|\vmu,\vL \vW \vL^T) \prod_{i=1}^d q(w_i) d\vw ,
\end{align*}
where $\vW := \diag(\vw)$ is a diagonal matrix containing $\vw$ as the diagonal and $\vL$ is a matrix with determinant 1 (e.g., a Cholesky factor) that determines the covariance matrix.
SMG includes many well-known distributions like Student's t, Laplace, logistic, doubly-exponential, normal-gamma, normal-inverse Gaussian, normal-Jeffreys, and their nonparametric extensions \cite{caron2008sparse}.
For example, the multivariate Student's t-distribution is obtained by using a scalar $w$ from an inverse-gamma distribution:
\begin{align}
   \mathcal{T}(\vlat|\vmu,\vSigma,a) := \int \gauss(\vlat|\vmu,\mix\vSigma) \,\, \mathcal{IG}(\mix|a,a) d \mix, 
\label{eq:studentt}
\end{align}
where $\vSigma := \vL\vL^T$ and both the shape and scale parameters of the inverse-gamma distribution are equal to $a$. We assume $a>1$, since for $a\leq1$ the variance of $q(\vlat)$ does not exist.

SMG is another class of approximations where existing methods cannot be applied to obtain a simple natural-gradient update.~For example, for Student's t the joint-distribution $q(\vz,w)$ takes an EF form, but, as we show in Lemma \ref{lemma:curvedTdist} in Appendix \ref{app:tdist}, the number of free parameters is equal to 3, while the number of natural parameters is equal to 4. Such representations are not minimal EFs but \emph{curved} where the FIM is not positive-definite because the number of free parameters is less than the number of natural parameters. Therefore, the update \eqref{eq:ngd_meanpar} does not apply.
In contrast, our update can be applied since the joint distribution is a minimal CEF. This can be verified from Table \ref{tab:examples} where both $\varmean_\mix(\varpar_\mix)$ and $\vvarmean_\lat(\vvarpar)$ are one-to-one functions. A formal proof is given in Lemma \ref{lemma:tdistmcefproof} in Appendix \ref{app:tdist}.

We now demonstrate the simplicity of our update to obtain a Student's t-approximation on a Bayesian neural network.
We assume the following model with a likelihood $p(\data_n|\vlat)$ specified using a neural network with weights $\vlat$ and a Student's t-prior: $p(\data,\vlat) = \prod_{n=1}^N p(\data_n|\vlat) \mathcal{T}(\vlat|\mathbf{0}, \vI, a_0)$.
The Student's t-prior is better than a Gaussian one if we expect the weights to follow a heavy-tailed distribution.
We approximate the posterior by the approximation \eqref{eq:studentt} using parameters $\vSigma$, $\vmu$, and $a$.
We use the variational lower bound defined in the joint distribution $p(\data,\vlat,w)$ space:
\vspace{-0.1cm}
\begin{align}
   \mathcal{L}(\vvarpar) = \myexpect_{q(\lat, \mix|\lambda)} \Big[ \sum_{n=1}^N \overbrace{\log p(\data_n|\vlat) }^{:= - f_n(\boldsymbol{\lat})} + \log \frac{p(\vlat,\mix)}{q(\vlat, \mix)} \Big]  \label{eq:t_elbo} 
\end{align}
Below, we summarize the NGD updates derived in Appendix \ref{app:tdist}. We first sample $(\vlat,\mix)$ from $q$ and randomly sampled an example $n$. The update is then a small modification of the VON update \cite{khan18a}:
\begin{align}
   \vSigma^{-1} &\leftarrow (1-\beta) \vSigma^{-1} +  \beta \sqr{ u \nabla_z^2 f_n(\vlat) + \vI/N },  \\
   \vmu &\leftarrow \vmu -\beta \vSigma \sqr{ \nabla_z f_n(\vlat) + \vmu /N }, \\
   a &\leftarrow (1-\beta) a + \beta \sqr{ a_0 - \delta \mathrm{Tr}\left(\nabla_z^2 f_n(\vlat) \vSigma \right) }, \label{eq:updatea}
\end{align} 
where $\beta>0$, $\delta \leftarrow N w^2/(2(1-w))$, and $u$ is a pre-multiplier defined below ($d$ is length of $\vlat$):
$   u = (a-1 + d/2)^{-1} \sqr{a + \left( \vlat-\vmu \right)^T \vSigma^{-1} \left( \vlat-\vmu \right)/2  }$.
Similarly to the previous section, this update can be implemented efficiently using the method of \citet{khan18a}.

Another example is given at Appendix \ref{app:sym_nig}.
Extensions using an Adam-like optimizer for this kind of mixtures are given in Appendix \ref{app:vadam}.

\begin{figure*}[t]
	\centering
   \includegraphics[width=0.33\linewidth]{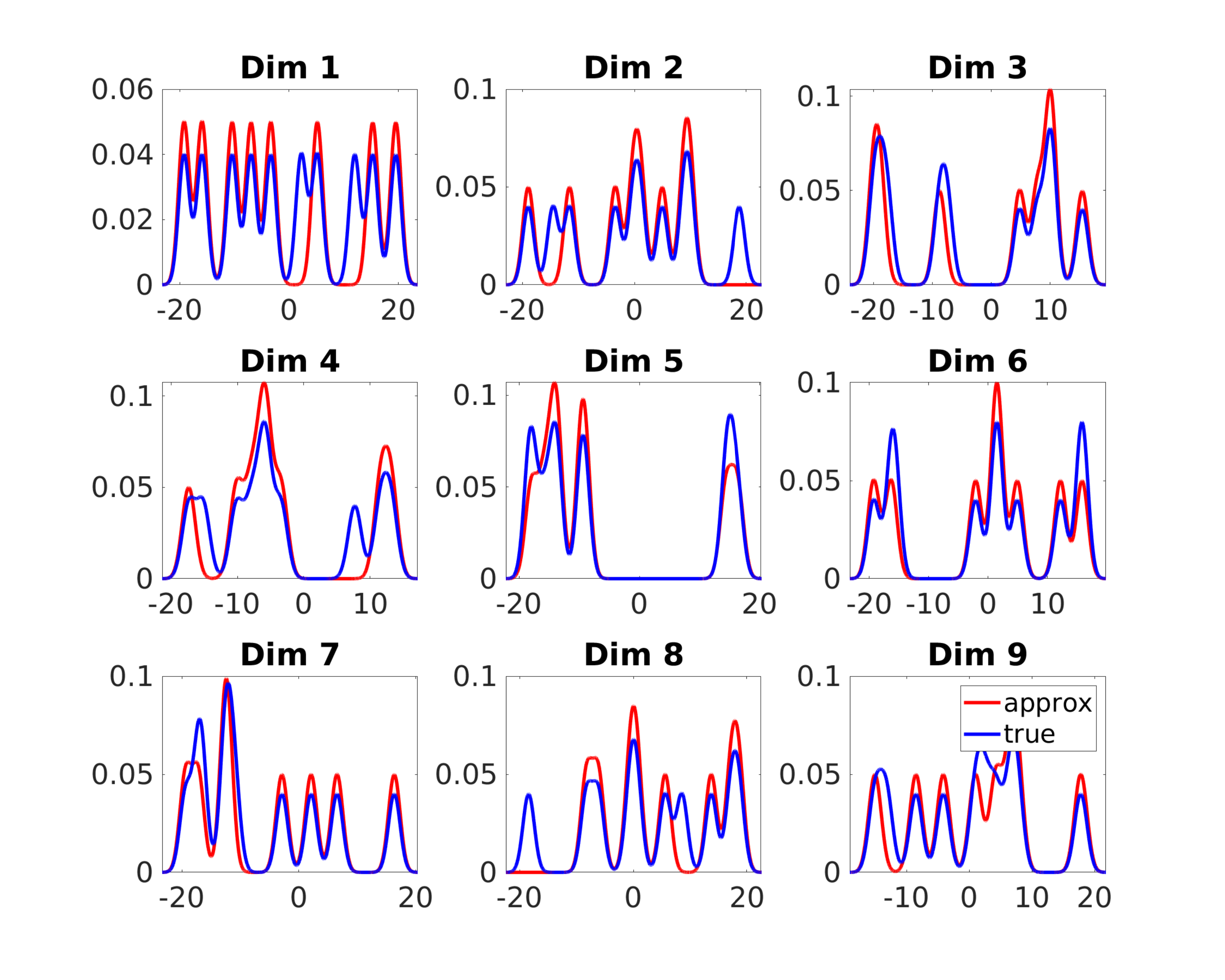}
	\includegraphics[width=0.33\linewidth]{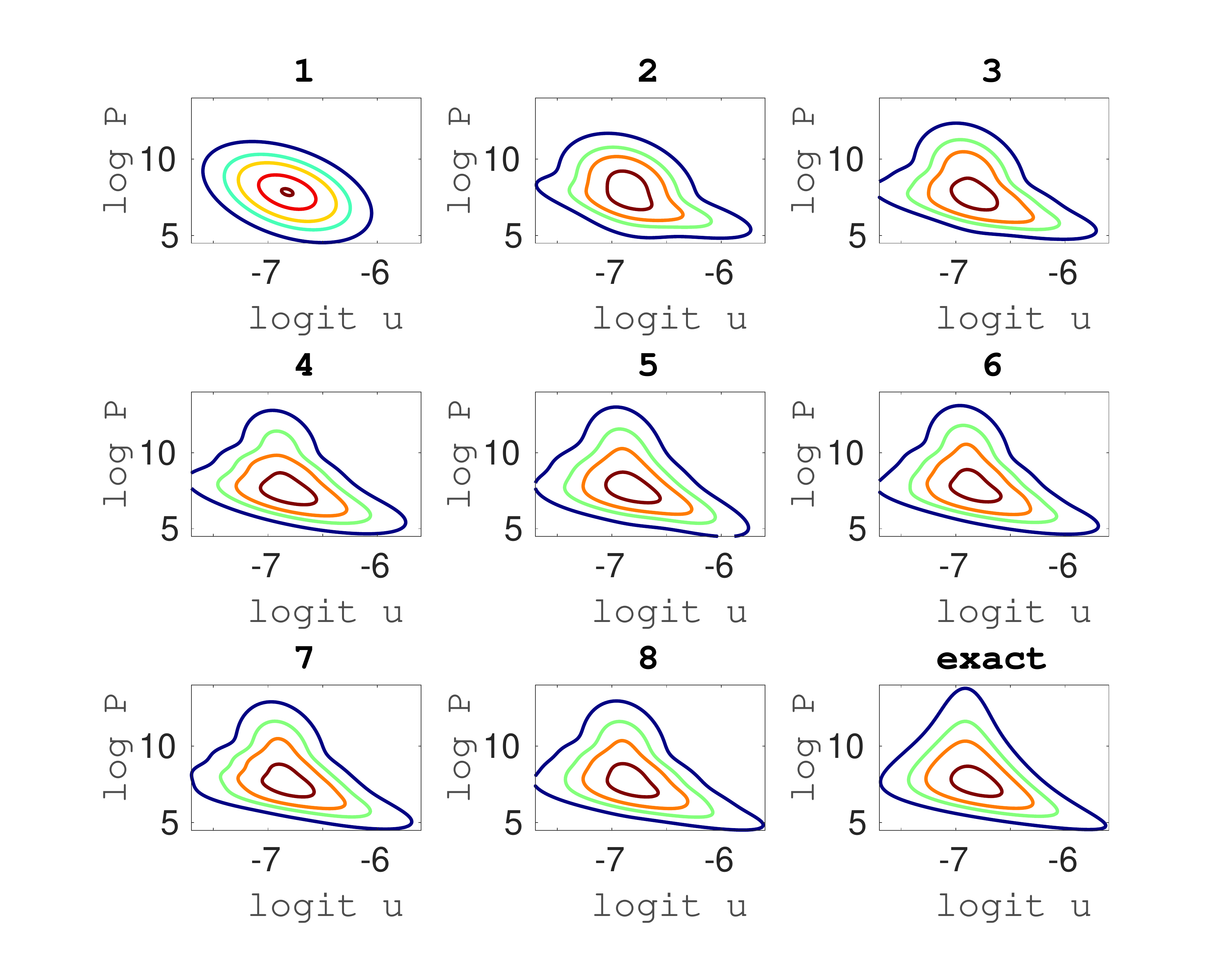}
	\includegraphics[width=0.33\linewidth]{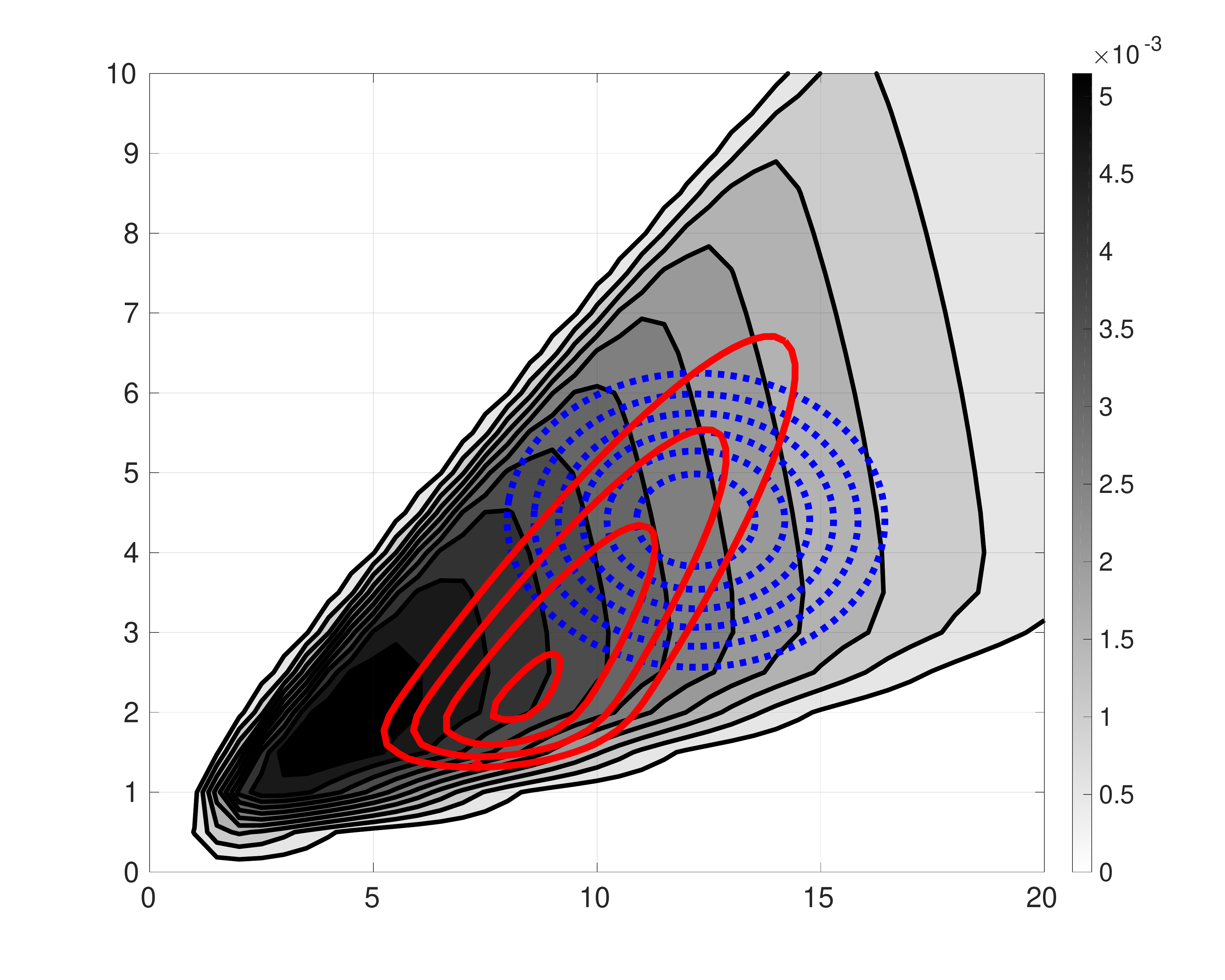}
   \caption{Quantitative results on three toy examples showing the flexibility obtained by using structured approximations considered in this paper. The leftmost figure shows the MOG approximation (with $K=20$) to fit an MOG model with 10 components in a 20 dimensional problem. The first 9 dimensions are shown in the figure where we see that MOG approximation fits the marginals well. The middle figure shows MOG approximation fit to a beta-binomial model for a 2-D problem. The number indicates
   the number of mixture components used. By increasing the number of components in our approximation, we get better results. The last figure shows a Skew Gaussian (in red) and a Gaussian (in blue) fit on a 2-D logistic regression posterior. We see that skew-Gaussian captures the skewness in the distribution in the right direction, and gives better approximation than a single Gaussian.}
	\label{figure:toy_examples}
\end{figure*}

\subsection{Gaussian Mean Mixture}
\label{sec:skewGdist}
We consider the following Gaussian mixture.
\begin{align*}
 q(\vlat| \vmu, \valpha, \vSigma) = \int \gauss(\vlat| \vmu+ \sum_{i=1}^k u(\mix_i) \valpha_i , \vSigma) \prod_{i=1}^{k} q(\mix_i) d\vmix,
\end{align*} where $\valpha$ is a $d$-by-$k$ matrix and $\vlat \in \mathcal{R}^d$.
An example is the rank-1 covariance Gaussian with $k=1$, $u(\mix)=\mix$, and $\vD$ as a diagonal covariance matrix: $q(\vlat|\vmu,\valpha,\vD)=\int \gauss(\vlat| \vmu+ \mix \valpha , \vD) \gauss(\mix|0,1)d\mix=\gauss(\vlat|\vmu,\valpha\valpha^T+\vD)$.

The multivariate skew Gaussian \citep{azzalini1996multivariate,genton2004skew}, defined below, is another example and allows for non-zero skewness (asymmetric approximations):
\begin{align*}
 q(\vlat| \vmu, \valpha, \vSigma) = \int \gauss(\vlat| \vmu+ |\mix| \valpha , \vSigma) \gauss(\mix|0,1) d\mix.  \label{eq:skewgauss}
\end{align*}
This distribution is not a minimal EF and the FIM of the marginal $q(\vlat)$ can be singular \citep{azzalini2013skew}. However, the joint distribution is a minimal conditional-EF distribution as we show in Lemma \ref{lemma:jointcef_skew} and \ref{lemma:skew_g_proof} in Appendix \ref{app:skew_gauss}. The sufficient statistics, natural parameters, expectation parameters of the conditional EF form are given in Table \ref{tab:examples}.

Similar to other examples, we get a simple and efficient NGD update. We summarize the updates for a model with a neural-network likelihood $p(\data_n|\vlat)$ using weights $\vlat$ and a Gaussian prior $\gauss(\vlat|0,\vI)$.
Denoting $f_n(\vlat)= -\log p(\data_n|\vlat)$, the lower bound is $\mathcal{L}(\vvarpar) :=\Unmyexpect{q(\lat)} [ - \sum_n  f_n(\vlat) + \log p(\vlat) - \log q(\vlat)  ]$.
The updates derived in Appendix \ref{app:skew_gauss} is a variant of the VON update:
\begin{align}
 \vSigma^{-1} & \leftarrow (1-\beta) \vSigma^{-1} + \beta (\vI + N \vg_{S}^n) \\
   \vmu & \leftarrow  \vmu - \beta \vSigma \sqr{\bar{c} ( \vg_{\mu}^n - c \vg_{\alpha}^n) + \vmu} \\
   \valpha & \leftarrow  \valpha - \beta \vSigma \sqr{ \bar{c} ( \vg_{\alpha}^n - c \vg_{\mu}^n) + \valpha }
\end{align}
where $\bar{c} := N/(1-2/\pi)$ and $\vg_S^n, \vg_{\mu}^n, \vg_{\alpha}^n$ are gradients obtained by gradient and Hessian of $f_n(\vlat)$ at a sample of $q(\vlat)$. The gradients are defined in \eqref{eq:ske_g_update_mu}-\eqref{eq:ske_g_update_sigma} in Appendix \ref{app:skew_gauss}.

Another example of the mixture is the exponentially modified Gaussian distribution \citep{grushka1972characterization,carr2009saddlepoint} given in Appendix \ref{app:exp_gauss}.
Extensions using an Adam optimizer for the class of mixtures are given in Appendix \ref{app:vadam}.

 

\section{Generalization to Multilinear EF}
\label{sec:generalization}
We now extend the approach to an approximation with multilinear EFs which contain blocks of natural parameters.
We start by specifying a distribution over $\vlat$ by a function $f(\cdot)$:  
\begin{align}
   q(\vlat| \vvarpar) = h_\lat(\vlat) \exp\sqr{f\rnd{\vlat, \vvarpar} - A_\lat(\vvarpar)}.
   \label{eq:qz_multilinear}
\end{align}
Then we divide the vector $\vvarpar := \{\vvarpar_1,\vvarpar_2,\ldots,\vvarpar_K\}$ into $K$ blocks with $\vvarpar_j \in \Omega_j$ being the $j$-th block of parameters.
In EF, the function $f$ is just linear in  $\vvarpar_\lat$ at \eqref{eq:expfamdef}.
We can generalize the notion of linearity to multiple blocks of parameters by considering $f$ to be a \emph{multilinear} function.
\begin{defn}
   {\bf (Minimal Multilinear-EF)} We call $f$ a multilinear function when, for each block $j$, there exist functions $\vphi_j$ and $r_j$ such that $f$ is linear with respect to $\vvarpar_j$, i.e.,
\begin{align}
   f(\vlat, \vvarpar) = \myang{\vvarpar_j, \vphi_j\rnd{\vlat, \vvarpar_{-j} }} + r_j\rnd{\vlat, \vvarpar_{-j} }, 
\end{align}
   where $\vvarpar_{-j}$ is the parameter vector containing all $\vvarpar$ except $\vvarpar_j$.
   A distribution $q(\vlat| \vvarpar)$ defined as in \eqref{eq:qz_multilinear}, but with a multilinear $f$, is called a multilinear EF. Additionally, when $\Omega_j$ is open and the following expectation parameters
      $\vvarmean_j(\vvarpar) :=\Unmyexpect{q(\lat)}\sqr{ \vphi_{j}\left(\vlat, \vvarpar_{-j} \right) }$
   are one-to-one, we call the distribution a minimal multilinear EF distribution.
\end{defn}
Clearly, minimal EFs are minimal multilinear EFs.
The following theorem give a result about a \emph{block} NGD update performed on individual blocks of parameters $\vvarpar_j$.
\begin{thm}
For approximations with multilinearly-minimal EF representation, the following updates are equivalent: 
   \begin{align}
      \vvarpar_j &\leftarrow \vvarpar_j + \beta\,\, \sqr{\vfim_j(\vvarpar)}^{-1} \nabla_{\varpar_j} \elbofinal(\vvarpar) \\
      \vvarpar_j &\leftarrow \vvarpar_j + \beta \,\, \nabla_{\varmean_j} \mathcal{L}(\vvarpar) 
   \end{align}
\end{thm}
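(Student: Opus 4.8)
The plan is to reduce the block update to the already-established minimal-EF identity \eqref{eq:chainrule1} by freezing every block except $j$. The central observation is that multilinearity makes $q(\vlat|\vvarpar)$, viewed as a function of $\vvarpar_j$ alone with $\vvarpar_{-j}$ held fixed, an ordinary regular exponential family. Writing $f(\vlat,\vvarpar) = \myang{\vvarpar_j, \vphi_j(\vlat,\vvarpar_{-j})} + r_j(\vlat,\vvarpar_{-j})$, the remainder $r_j$ does not depend on $\vvarpar_j$ and can be absorbed into a modified base measure $\tilde{h}_\lat(\vlat) := h_\lat(\vlat)\exp\sqr{r_j(\vlat,\vvarpar_{-j})}$. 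Hence, conditioned on $\vvarpar_{-j}$, the distribution is an exponential family with natural parameter $\vvarpar_j$, sufficient statistic $\vphi_j(\vlat,\vvarpar_{-j})$, and log-partition $A_\lat(\vvarpar)$ regarded as a function of $\vvarpar_j$.

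First I would record the standard exponential-family identities for this conditional family. Differentiating the normalization with respect to $\vvarpar_j$ gives $\nabla_{\varpar_j}A_\lat = \myexpect_q\sqr{\vphi_j(\vlat,\vvarpar_{-j})} = \vvarmean_j$, and differentiating once more gives $\nabla_{\varpar_j}\vvarmean_j = \nabla_{\varpar_j}^2 A_\lat = \Cov_q\sqr{\vphi_j(\vlat,\vvarpar_{-j})}$, a symmetric matrix. Since $\vphi_j$ does not depend on $\vvarpar_j$, the block FIM $\vfim_j(\vvarpar) := -\myexpect_q\sqr{\nabla_{\varpar_j}^2\log q}$ collapses to $\nabla_{\varpar_j}^2 A_\lat = \nabla_{\varpar_j}\vvarmean_j$, exactly mirroring the minimal-EF relation used in \eqref{eq:chainrule1}.

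Next I would invoke minimality. By the Definition of minimal multilinear EF, the map $\vvarpar_j\mapsto\vvarmean_j(\vvarpar)$ is one-to-one for every fixed $\vvarpar_{-j}$, so the conditional family is a minimal EF in $\vvarpar_j$; by Theorem \ref{thm:minimalEF} and the strict convexity of its log-partition, $\vfim_j(\vvarpar)$ is positive-definite and hence invertible. I then apply the chain rule to $\mathcal{L}$ viewed as a function of $\vvarpar_j$ (with $\vvarpar_{-j}$ frozen) and reparameterized through the bijection $\vvarpar_j\mapsto\vvarmean_j$, obtaining $\nabla_{\varpar_j}\mathcal{L} = \sqr{\nabla_{\varpar_j}\vvarmean_j^T}\nabla_{\varmean_j}\mathcal{L} = \vfim_j(\vvarpar)\,\nabla_{\varmean_j}\mathcal{L}$, using symmetry of $\vfim_j$. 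Left-multiplying by $\sqr{\vfim_j(\vvarpar)}^{-1}$ yields $\sqr{\vfim_j(\vvarpar)}^{-1}\nabla_{\varpar_j}\mathcal{L} = \nabla_{\varmean_j}\mathcal{L}$, which is precisely the claimed equivalence of the two block updates.

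The main obstacle is conceptual rather than computational: one must be careful about what is held fixed when forming $\nabla_{\varmean_j}\mathcal{L}$. Changing $\vvarpar_j$ also perturbs the other expectation parameters $\vvarmean_k$ with $k\neq j$, since each $\vphi_k$ depends on $\vvarpar_{-k}$ which contains $\vvarpar_j$. The argument above sidesteps this by treating the block update as a problem in $\vvarpar_j$ alone with $\vvarpar_{-j}$ frozen, so that only the single bijection $\vvarpar_j\leftrightarrow\vvarmean_j$ enters and no cross-block Jacobian terms appear. The two points that require care are verifying that $r_j$ is genuinely independent of $\vvarpar_j$ (so the base-measure absorption is legitimate) and that the one-to-one hypothesis is exactly the minimality needed to invoke Theorem \ref{thm:minimalEF}; both follow directly from the Definition of minimal multilinear EF.
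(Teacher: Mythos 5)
Your proposal is correct and follows essentially the route the paper intends: the paper states only that the proof is ``similar to Theorem \ref{thm:mixexp}'', and your argument --- freezing $\vvarpar_{-j}$, absorbing $r_j$ into the base measure so that the distribution becomes a regular EF in $\vvarpar_j$ with sufficient statistic $\vphi_j$, identifying $\vfim_j(\vvarpar)=\nabla_{\varpar_j}^2 A_\lat=\nabla_{\varpar_j}\vvarmean_j^T$, invoking the one-to-one hypothesis for invertibility, and applying the chain rule --- is exactly the specialization of the Lemma \ref{lemma:fimwrtmean}/Lemma \ref{lemma:ng_update_fim} machinery to a single block. Your explicit remark about which variables are held fixed when forming $\nabla_{\varmean_j}\mathcal{L}$ is a point the paper leaves implicit, and you resolve it the same way the block update requires.
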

The proof of this theorem is similar to Theorem \ref{thm:mixexp}. We now give an example and demonstrate the simplicity of the NGD update. Let's consider the Matrix-Variate Gaussian (MVG) distribution  defined as follows:
\begin{align}
   \mgauss(\vLat|\vW,\vU,\vV) := \gauss(\textrm{vec}(\vLat)| \textrm{vec}(\vW), \vV \otimes \vU). \nonumber
\end{align}
This distribution has been used for Bayesian neural networks \cite{louizos2016structured, sun2017learning}. An approximate NGD update is also derived by \citet{zhang2017noisy} where the FIM is approximated by a block-diagonal matrix and K-FAC approximation.
Our update have a similar block-diagonal approximation, but the update for the each block is an exact NGD unlike \citet{zhang2017noisy} where the steps are approximated by K-FAC.

In Appendix \ref{app:MGauss}, we show that the MVG distribution $\mgauss(\vLat|\vW,\vU,\vV)$ can be written in the minimal multi-linear form.
The NGD update, derived in Appendix \ref{app:MGauss}, is summarized below to optimize the lower bound as $\elbofinal(\vvarpar) = E_q[- h(\vLat)]$ where $h(\vLat):=-\log p(\data, \vLat) + \log q(\vLat)$. To simplify our implementation, we use the Gauss-Newton approximation \citep{graves2011practical} although it is not necessary to do so. The resulting block NGD update is shown below,
\begin{align}
\vW  & \leftarrow \vW - \beta_1 \vU\vG\vV, \\
\vU^{-1} & \leftarrow \vU^{-1} + \beta_2 \vG\vV\vG^\top, \\
\vV^{-1} & \leftarrow \vV^{-1} + \beta_2 \vG^\top \vU\vG ,
\end{align}
where we sample $\vLat$ from the MVG distribution and evaluate the gradient $\vG := \nabla_{\Lat} h(\vLat)$.
These updates extend the VON update obtained in \citet{khan18a} to MVG approximations. The gradient $\vG$ is pre-conditioned, which is very similar to other preconditioned algorithms, such as K-FAC \citep{martens2015optimizing,zhang2017noisy} and Shampoo \citep{gupta18shampoo}. The update can be extended to Tensor-Variate Gaussian \citep{ohlson2013multilinear}.


\section{Experimental Results }
\label{sec:results}
The code is available at:\\\url{https://github.com/yorkerlin/VB-MixEF}.
\subsection{Qualitative Results on Synthetic Examples}
First, we show qualitative results on three toy examples and visualize the results obtained by structured approximations.

The first toy example is the Gaussian mixture example from \citet{wang2018variational}.
In this example, the true distribution is a finite mixture of Gaussians (MOG) $p(\vlat)=\sum_{i=1}^{C} \frac{1}{C} \gauss(\vlat|\vu_i, \vI), \vlat \in \mathcal{R}^d$, where each element of $\vu_i$ is uniformly drawn from the interval $[-s,s]$. 
We approximate the posterior distribution by an MOG approximation described in Section \ref{sec:finiteef}.
We consider a case with $K=20$, $C=10$, $s=20$, and $d=20$.
We initialize $\pi_c=\frac{1}{K}$ and $\vSigma_c=100 \vI$. Each element of $\vmu_c$ is randomly initialized by Gaussian noise with mean $0$ and variance $100$.
We use 10 Monte Carlo samples to compute the gradients.
The leftmost plot in Figure \ref{figure:toy_examples} shows the first $9$ marginal distributions of the true distribution and its approximations, where we clearly see that MOG closely matches the marginals.
All 20 marginal distributions are in Figure \ref{figure:toy_examples_all} in Appendix \ref{sec:fmGauss_plot}.

In the second toy example we approximate the beta-binomial model for overdispersion considered in \citet{salimans2013fixed,salimans2015markov}  by using MOG ($N=20$, $d=2$).
The model is to used to estimate the rates of death from stomach cancer for cities in Missouri.
The exact posterior of the model is non-standard and extremely skewed.
In the middle plot in Figure \ref{figure:toy_examples}, we see that our MOG approximation approximates the true posterior better and better as we increase the number of mixture components.

In the last toy example, we visualize the skew-Gaussian approximations for the two-dimensional Bayesian Logistic regression example taken from \citet{murphy2013machine} ($N=60$, $d=2$).
In the rightmost plot in Figure \ref{figure:toy_examples}, we can see that the skewness of the true posterior is captured well by the skew-Gaussian distribution. The Gaussian approximation results in a worse approximation than the skew-Gaussian.

\begin{figure}[t]
    \includegraphics[width=.23\textwidth]{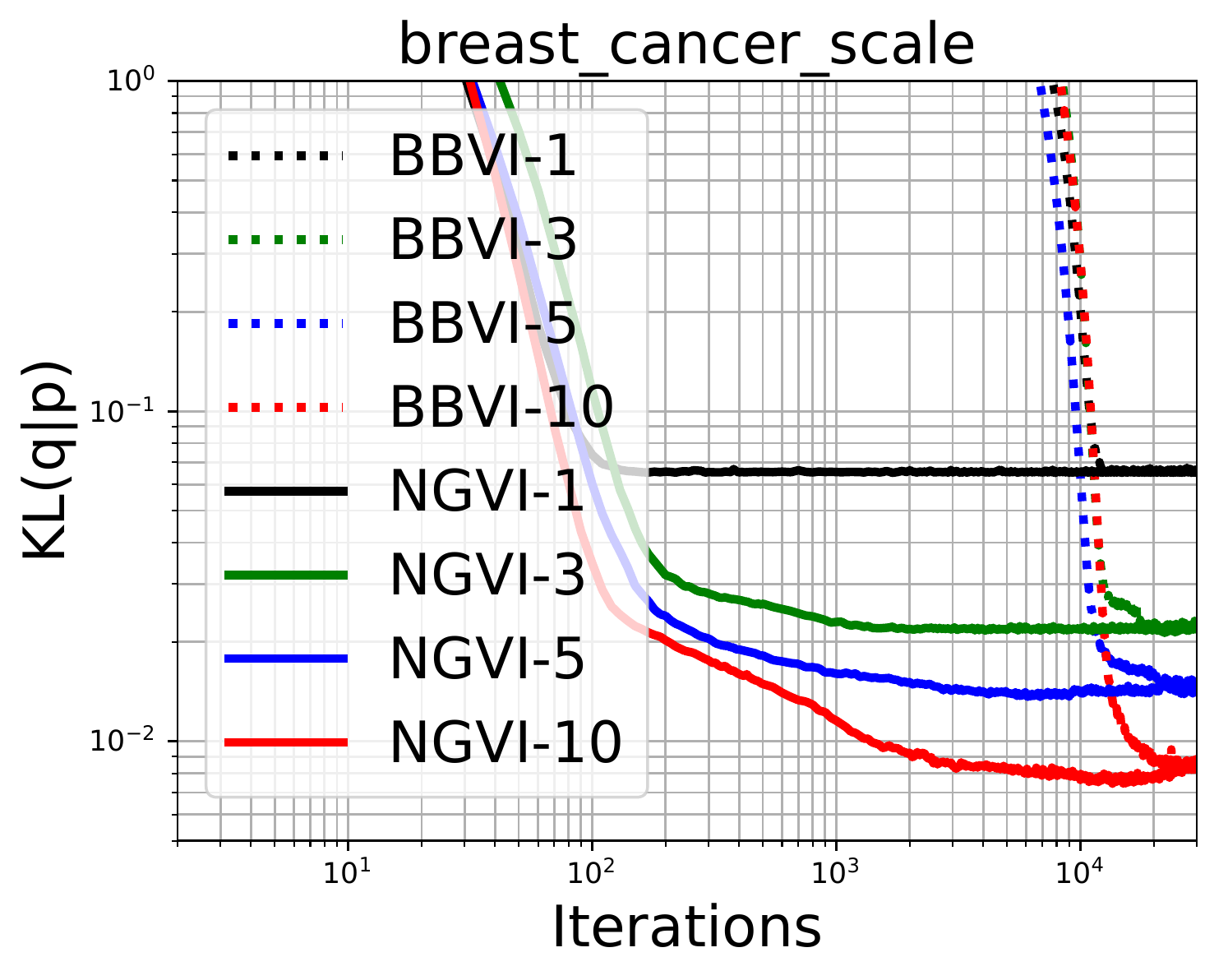} \hfill
    \includegraphics[width=.23\textwidth]{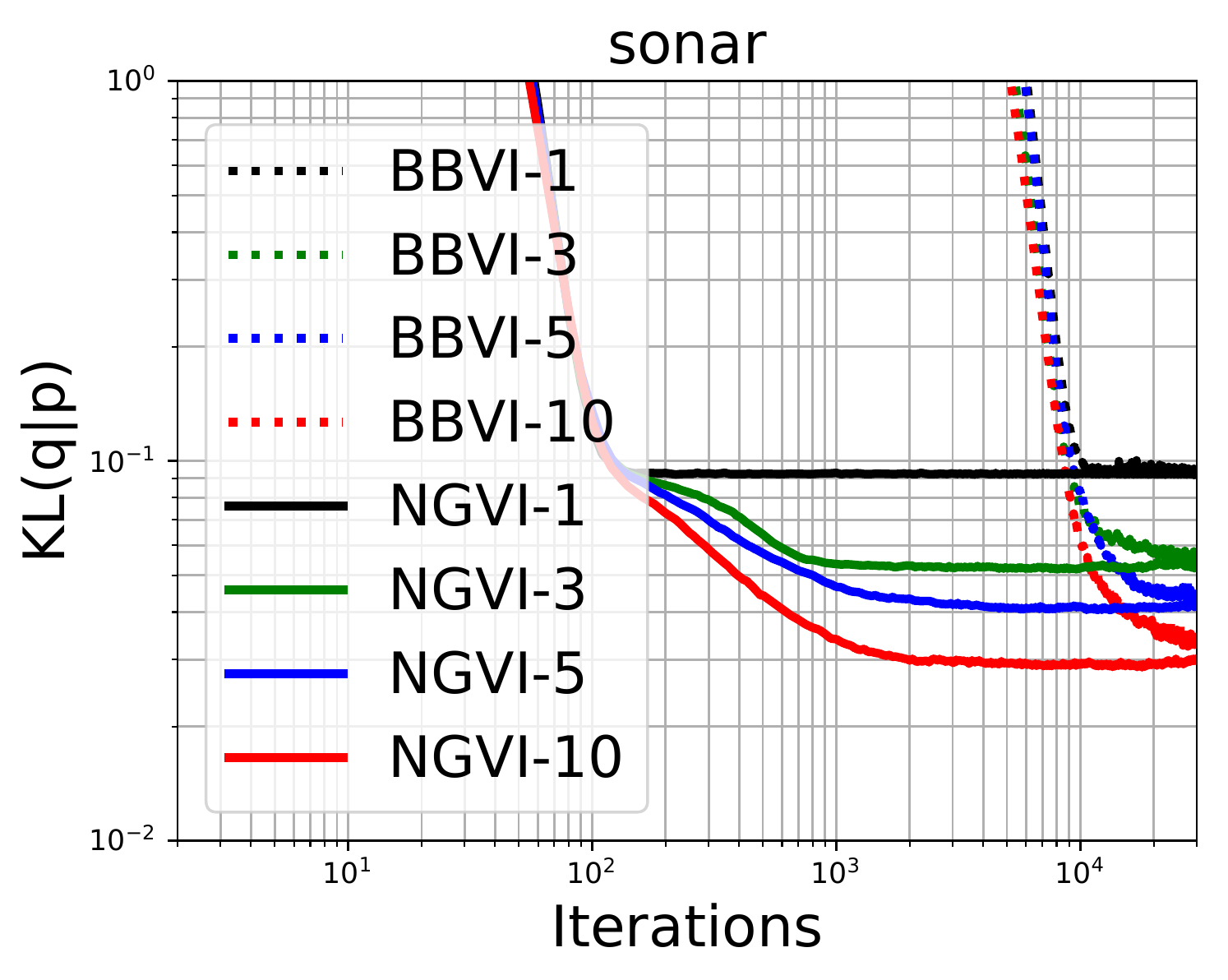}
	\caption{Bayesian logistic regression approximated by MoG: This figure demonstrates a fast convergence of NGVI over BBVI. We use a mixture of Gaussians with full covariance matrix as the approximating distribution. The number next to the method name indicates the number of mixture components used. The plot shows the KL obtained using $10^6$ MC samples, where $p$ is the true posterior distribution. For both algorithms, we used full-batches by using 20 MC samples to compute stochastic approximations. }
	\label{figure:blr_mog}
\end{figure}

\begin{figure}[t]
   \center
\includegraphics[width=.24\textwidth]{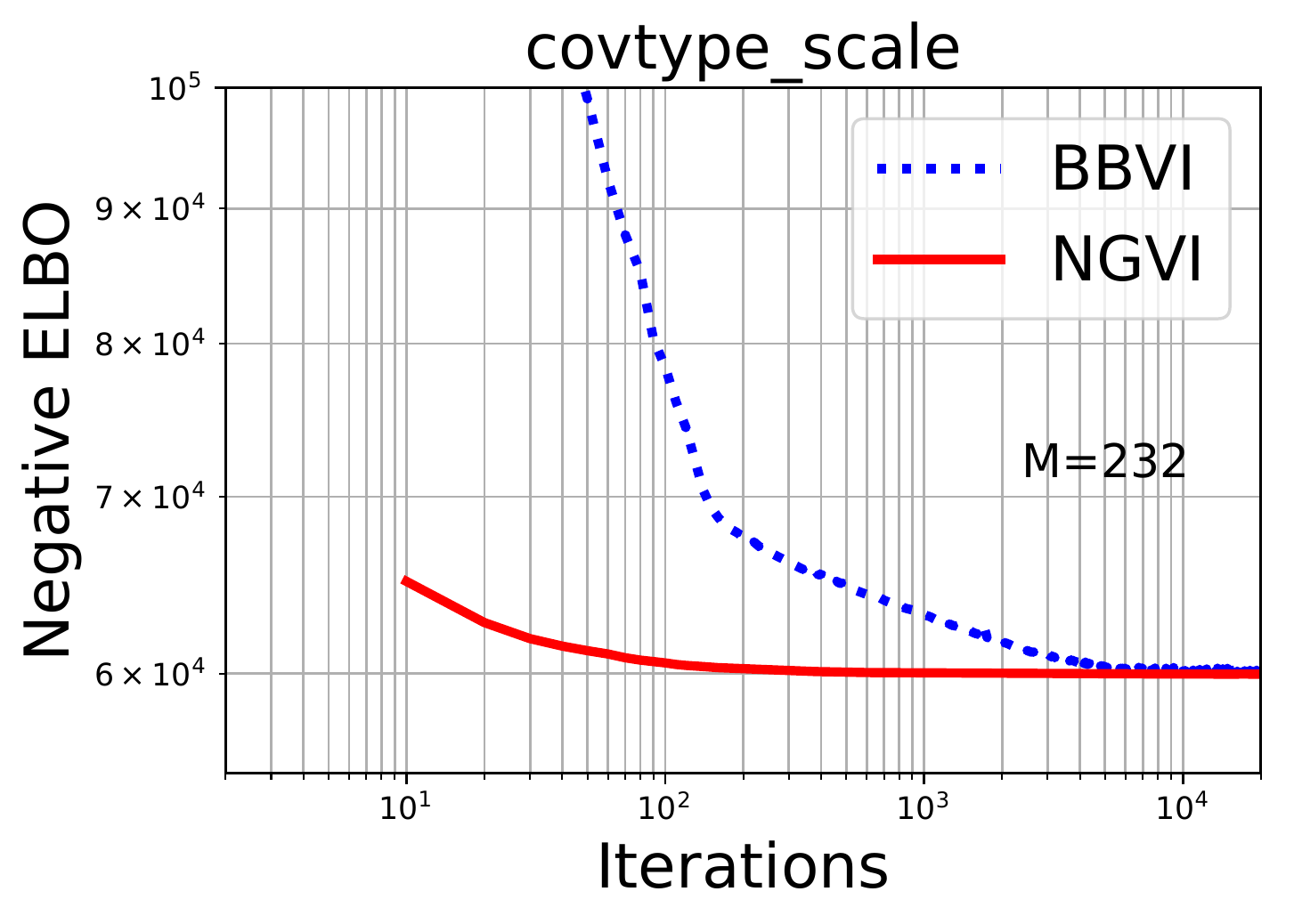}
\includegraphics[width=.23\textwidth]{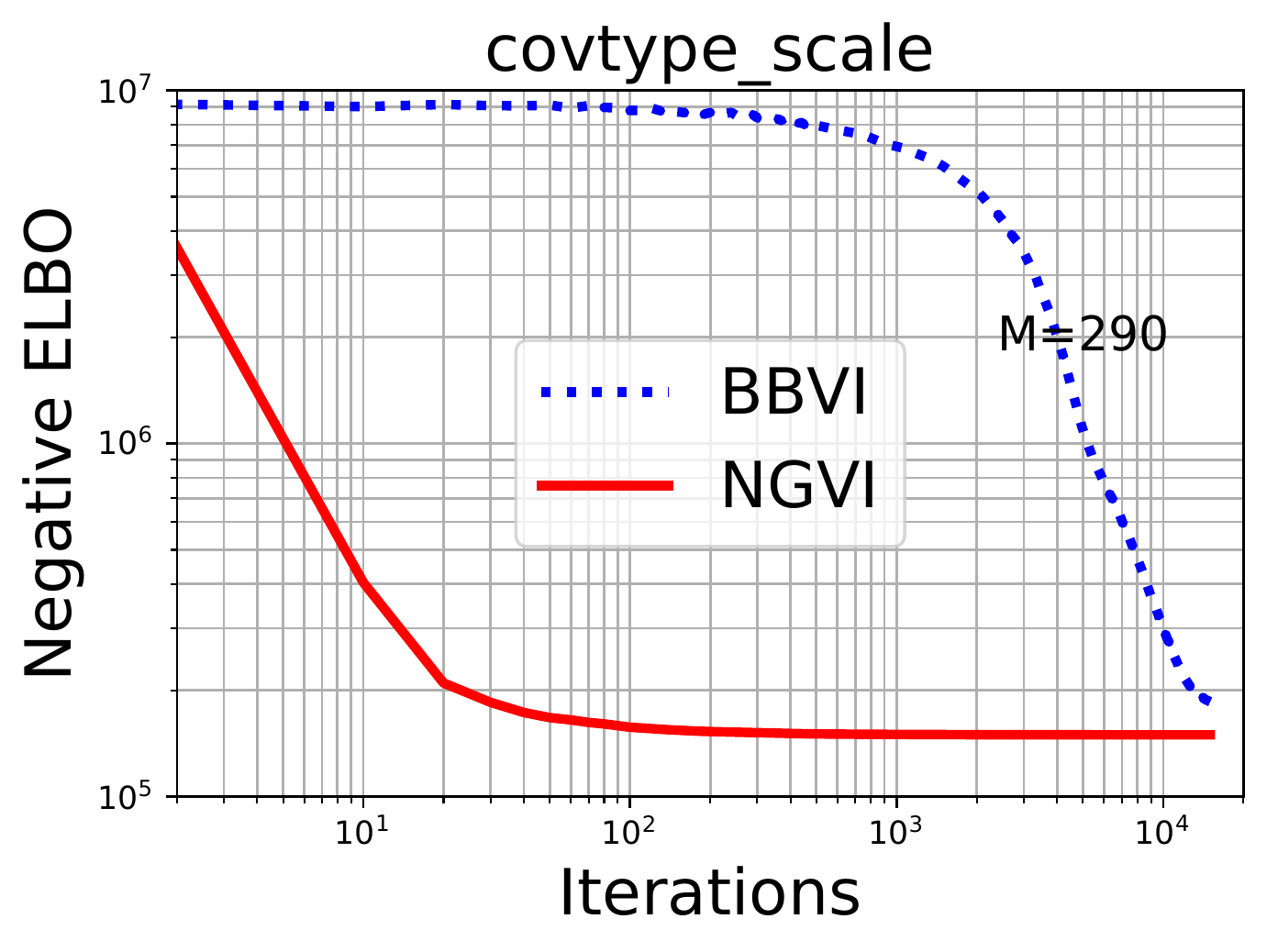}
   \caption{Bayesian logistic regression with Student's t (left) and skew-Gaussian approximations (right). We use $10$ MC samples for training, and $M$ denotes the mini-batch size.}
\vspace{-0.3cm}
	\label{figure:blr_t_skew}
\end{figure}

\subsection{Results on Real Data}
Next, we show results on real-world datasets. We consider two models in our experiments. 
We start with Bayesian Logistic regression (BLR) 
and present results for MOG approximations on two small UCI datasets.
The `Breast-Cancer' dataset has $N=683,d=10$ with $341$ chosen for training, and regularization parameter is set to $1.0$.
The `Sonar' dataset has $N=208,d=6$ with $100$ chosen for training, and regularization parameter is set to $0.204$.
We vary the number of mixture components to $K=1,3,5,10$.
In Figure \ref{figure:blr_mog}, we plot the KL divergence between the true posterior and the MOG approximation, and compare our method (referred to as `NGVI') proposed in Section \ref{sec:finiteef} to the black-box gradient method (referred to as `BBVI') with the re-parametrization trick \citep{salimans2013fixed,kingma2013auto,figurnov2018implicit}.
For both methods, we use a full batch for each update.
We observe that NGVI always converges faster than BBVI.
We also see that MoG is a better approximation than the single Gaussian, and the quality of the posterior approximation improves as the number of mixture increases.

Next, we show results on a larger dataset using two other kinds of variational approximation: Skew-Gaussian and Student's t. 
We use the UCI dataset ``covtype-binary-scale'' with $d=54, N=581,012$ with $464,809$ chosen for training and regularization parameter $0.002$.
We use the algorithm discussed in Section \ref{sec:tdis} and \ref{sec:skewGdist}.
For black-box methods, we use the Adam optimizer and refer to it as BBVI.
In the skew-Gaussian case, we use a Gaussian prior, and, for the Student's t-distribution, we use a Student's t-prior as shown in \eqref{eq:t_elbo}.
Figure \ref{figure:blr_t_skew} demonstrates the fast convergence of our method compared to BBVI.

Finally, we discuss results on Bayesian neural network (BNN) with a standard normal prior on weights. We use one
hidden layer, 50 hidden units, and ReLU activation functions.
We approximate the posterior by a skew-Gaussian distribution using NGVI. 
We also compare to two other methods where we used BBVI to fit a skew-Gaussian approximation as well as a Gaussian approximation.
For scalability reasons, we use of a diagonal covariance for all methods.
We use 10 Monte Carlo (MC) samples and minibatch size of 32.
For NGVI, we use the gradient magnitude-approximation as explained in Appendix \ref{app:vadam}.
Figure \ref{figure:bnn_skew} shows the performance of all methods in terms of the test RMSE. We can see that our method converges faster than BBVI, although the performance of skew-Gaussian methods seem to be similar to a Gaussian. 

\begin{figure}[t]
\includegraphics[width=.23\textwidth]{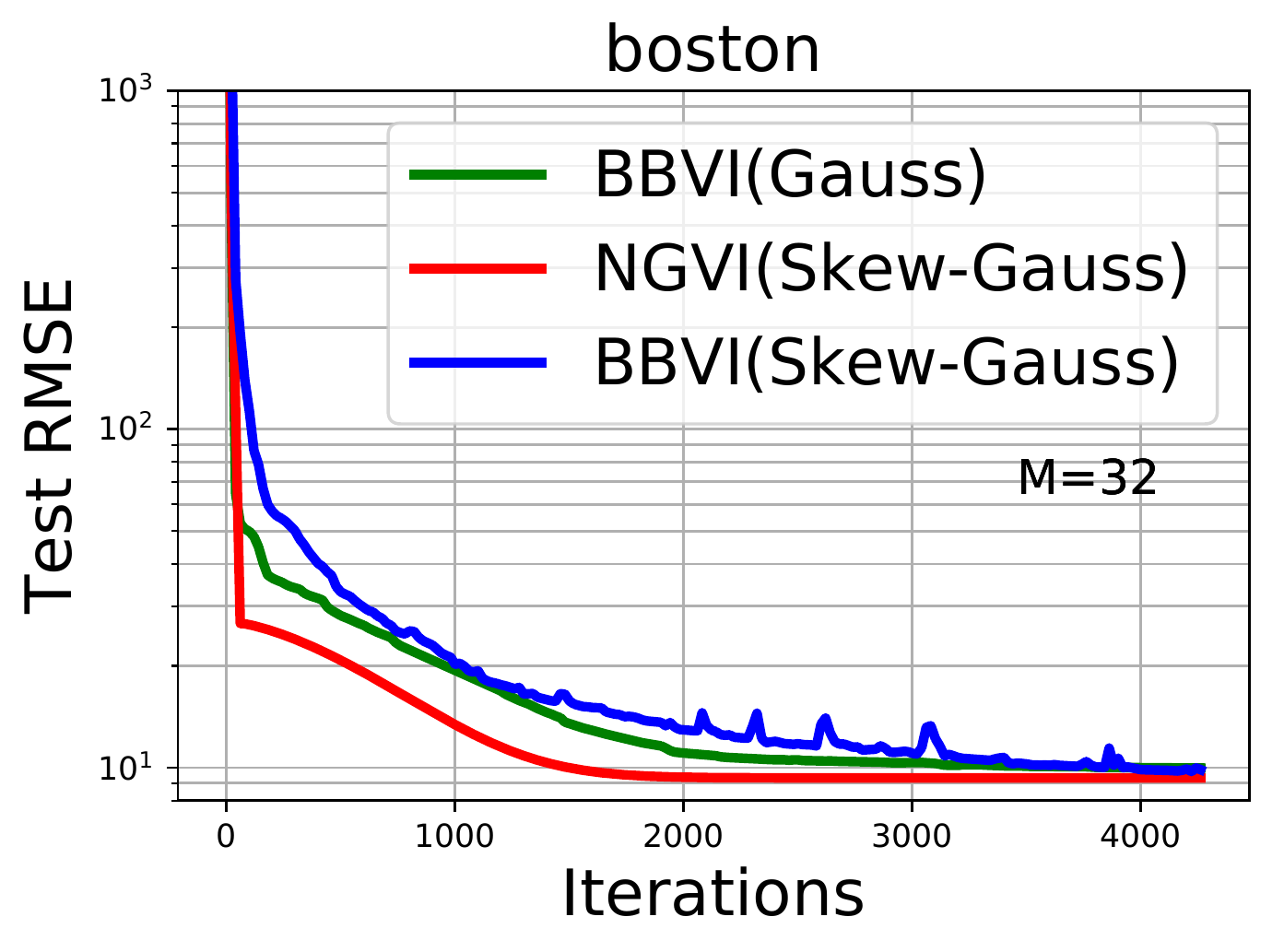}
\includegraphics[width=.23\textwidth]{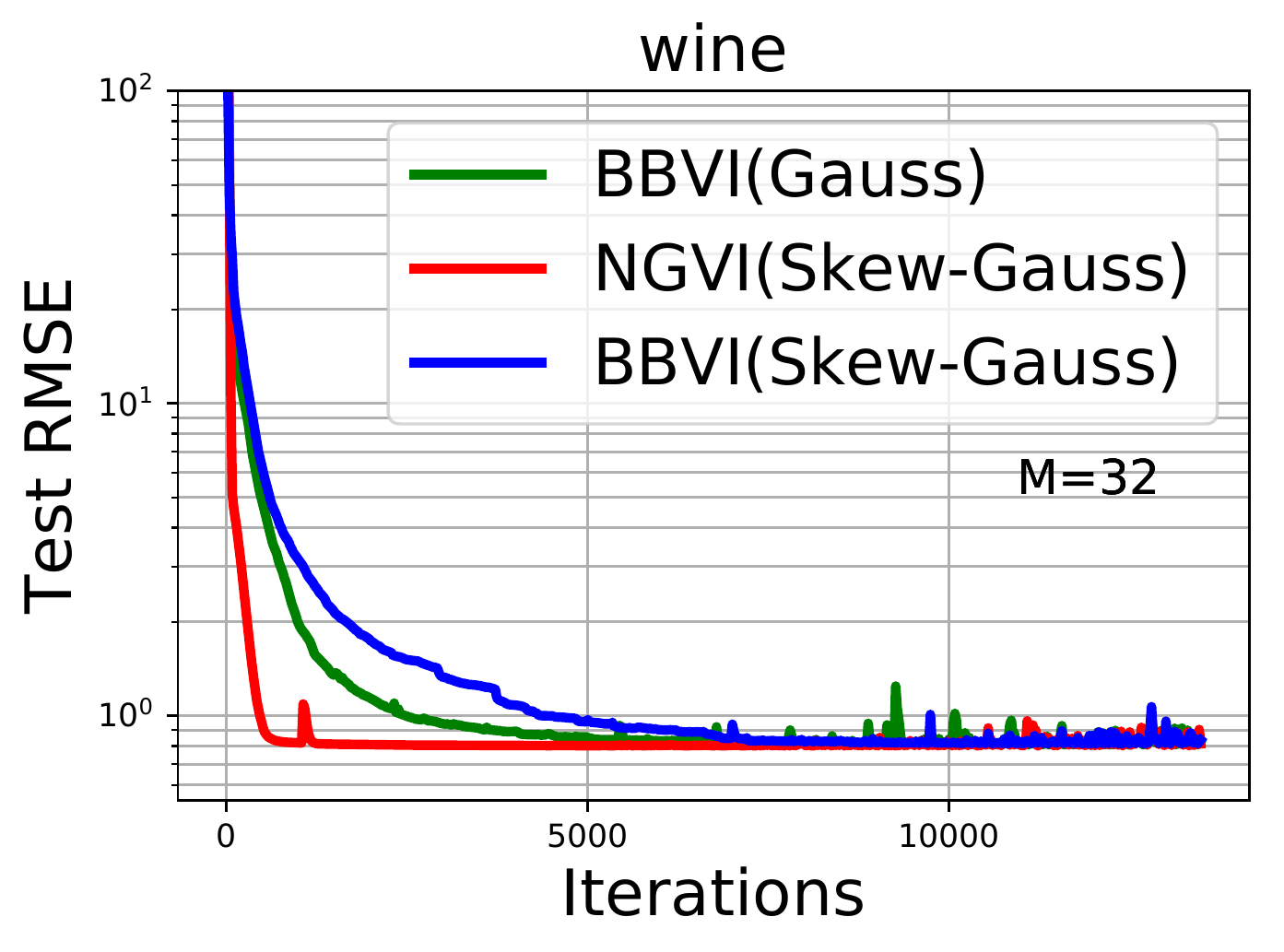}
   \caption{BNN using skew-Gaussian approximation: This figure shows a fast convergence of NGVI over BBVI to approximate the posterior of BNN. For all methods, the prior is a Gaussian. We use 10 MC samples for training. $M$ in the figures denotes the size of a mini-batch. BBVI (Gauss) uses a Gaussian approximation while BBVI (Skew-Gauss) uses a skew-Gaussian approximation. Our NGVI method with skew-Gaussian approximation converges faster than the other two methods.}
	\label{figure:bnn_skew}
\vspace{-1.0cm}
\end{figure}

\section{Discussion}
\label{sec:discussion}
In this paper, we present fast and simple NGD for VI with structured approximations. The approximations we have considered are currently beyond the reach of existing methods, and our approach extends these existing approaches to perform NGD updates with a simple update which can also be implemented efficiently in some cases.
Our current proposal is limited to a certain class of approximations, and further work is needed to generalize our results to many other types of structured approximations. The minimality condition we proposed uses one-to-one mappings of the expectation parameterization. We believe that this condition can be relaxed which will enable simple NGD update for many types of approximations.

Our main focus has been on the derivation of simple updates. We have presented examples where the updates can also be implemented efficiently. There are however implementation bottlenecks in existing software frameworks to implement some of the reparameterization tricks used in our algorithms. 
It is important to find ways to enable efficient updates by modifying the existing software frameworks. Another issue is that the NGD update needs to make sure that the parameters stay inside $\Omega$, and this issue deserves further exploration.
Another important direction is to apply structured approximations to large problems, especially those involving deep networks. We hope to perform such extensive experiments in the future to establish the benefits obtained by our NGD update for Bayesian deep learning.

\section*{Acknowledgements}
We would like to thank Voot Tangkaratt for useful discussions during this work.
WL is supported by a UBC International Doctoral Fellowship.

\bibliography{refs}
\bibliographystyle{icml2019}


\appendix
\onecolumn
\begin{appendix}
   \section{Proof of Theorem \ref{thm:fimmcef}}
   \label{app:theorem1}
      In this section, we provide a proof for Theorem \ref{thm:fimmcef}:

\noindent\fbox{%
   \parbox{0.9\textwidth}{%
      For an MCEF representation given in Definition \ref{eq:mcef}, the FIM $\vfim_{\mix\lat}(\vvarpar)$ given in \eqref{eq:fimwz} is positive-definite and invertible for all $\vvarpar\in\Omega$.
      }
}

We prove this using a sequence of lemmas.
\begin{lemma}
\label{lemma:diff}
$\log q(\vmix,\vlat|\vvarpar)$ is twice differentiable with respect to $\vvarpar$.
\end{lemma}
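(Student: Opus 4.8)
The plan is to expand $\log q(\vmix,\vlat|\vvarpar)$ using the conditional-EF form in \eqref{eq:qzw} and thereby reduce the claim to the smoothness of the two log-partition functions. Writing the joint as $q(\vlat,\vmix) = q(\vlat|\vmix)q(\vmix)$ and taking logarithms gives
\begin{align*}
\log q(\vmix,\vlat|\vvarpar) &= \myang{\vphi_\lat(\vlat,\vmix), \vvarpar_\lat} - A_\lat(\vvarpar_\lat,\vmix) + \myang{\vphi_\mix(\vmix), \vvarpar_\mix} \\
&\quad - A_\mix(\vvarpar_\mix) + \log h_\lat(\vlat,\vmix) + \log h_\mix(\vmix).
\end{align*}
For a fixed realization $(\vlat,\vmix)$, the base-measure terms $\log h_\lat(\vlat,\vmix)$ and $\log h_\mix(\vmix)$ are constants with respect to $\vvarpar$, and the two inner-product terms are linear in $\vvarpar_\lat$ and $\vvarpar_\mix$ respectively, hence infinitely differentiable. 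Thus it suffices to show that $A_\lat(\cdot,\vmix)$ is twice differentiable on $\Omega_\lat$ for each fixed $\vmix$ and that $A_\mix(\cdot)$ is twice differentiable on $\Omega_\mix$.

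Both facts follow from the standard regularity theory of exponential families. By Definition \ref{def:cef}, $q(\vmix)$ is a regular EF and, for each fixed $\vmix$, $q(\vlat|\vmix)$ is a regular EF, with the corresponding natural-parameter sets $\Omega_\mix$ and $\Omega_\lat$ assumed open. For any regular EF, the log-partition function is real-analytic --- in particular $C^\infty$, and so in particular twice differentiable --- on the interior of its natural-parameter domain, with the first and second derivatives equal to the mean and covariance of the sufficient statistics. I would cite the classical statement of this fact (e.g.\ \citet{johansen1979introduction}, already referenced in the paper) rather than re-derive it. Applying it to $A_\mix$ on $\Omega_\mix$, and to $A_\lat(\cdot,\vmix)$ on $\Omega_\lat$ for each fixed $\vmix$, then closes the reduction from the first paragraph.

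The one point that needs care, and which I expect to be the main obstacle, is justifying differentiation of the partition integral under the integral sign twice. This is precisely where the openness of $\Omega_\lat$ and $\Omega_\mix$ (assumed in Definition \ref{def:cef}) enters: for $\vvarpar$ in the interior of the natural-parameter set the partition integral converges and admits a uniform local exponential bound on a neighborhood of $\vvarpar$, so dominated convergence licenses exchanging differentiation and integration up to second order. For the conditional piece there is the extra subtlety that $A_\lat$ carries a $\vmix$-dependence through both $h_\lat$ and $\vphi_\lat$; but since differentiation is with respect to $\vvarpar$ only while $\vmix$ is held fixed at the sampled value, this dependence is inert --- for each fixed $\vmix$ we simply invoke the regular-EF result for the family $q(\cdot|\vmix)$. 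The conclusion is in fact stronger than stated, namely that $\log q(\vmix,\vlat|\vvarpar)$ is $C^\infty$ in $\vvarpar$, but twice differentiability is all that is needed to make sense of the Hessian-based FIM in \eqref{eq:fimwz} used in the remaining lemmas.
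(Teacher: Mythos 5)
Your proposal is correct and follows essentially the same route as the paper's proof: reduce twice-differentiability of $\log q(\vlat,\vmix|\vvarpar)$ to that of $A_\mix(\vvarpar_\mix)$ and $A_\lat(\vvarpar_\lat,\vmix)$ for each fixed $\vmix$, and invoke the standard regularity of log-partition functions for regular exponential families (citing \citet{johansen1979introduction}). You merely spell out the details the paper leaves implicit --- the smoothness of the linear and base-measure terms and the dominated-convergence justification for differentiating under the integral --- which is a faithful elaboration rather than a different argument.
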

\begin{proof}
   From Definition \ref{def:cef}, we see that the $\log q(\vlat,\vmix)$ is differentiable when $A_\mix(\vvarpar_\mix)$ and $A_\lat(\vvarpar_\lat, \vmix)$ are twice differentiable for each $\vmix$ sampled from $q(\vmix)$.
   Since $q(\vmix)$ is an EF, $A_\mix(\vvarpar_\mix)$ is twice differentiable~\cite{johansen1979introduction}.  Similarly, since conditioned on $\vmix$, $q(\vlat|\vmix)$ is also an EF,  $A_\lat(\vvarpar_\lat, \vmix)$ is twice differentiable too. Therefore, the log of the joint distribution is twice differentiable.
\end{proof}

\begin{lemma}
\label{lemma:fim}
The FIM $\vfim_{wz}(\vvarpar)$ is block-diagonal with two blocks: 
\begin{align}
   \vfim_{wz}(\vvarpar) = \begin{bmatrix}
   \vfim_{z}(\vvarpar) & \mathbf{0} \\
   \mathbf{0} & \vfim_{w}(\vvarpar_\mix)
   \end{bmatrix}, 
   \label{eq:blockdiag}
\end{align}
   where $\vfim_\mix(\vvarpar_\mix)$ is the FIM of $q(\vmix)$ and $\vfim_{\lat}(\vvarpar)$ is the expected of the FIM of $q(\vlat|\vmix)$ where expectation is taken under $q(\vmix)$ as shown below:
\begin{align*}
   \vfim_{\mix}(\vvarpar_\mix) &:= - \Unmyexpect{q(\mix)} \sqr{\nabla_{\varpar_\mix}^2 \log q(\vmix|\vvarpar_\mix)} \\
   \vfim_{\lat}(\vvarpar) &:= - \Unmyexpect{q(\mix)} \sqr{ \Unmyexpect{q(\lat|\mix)}  \sqr{\nabla_{\varpar_\lat}^2 \log q(\vlat|\vmix,\vvarpar_\lat)} },
\end{align*}
\end{lemma}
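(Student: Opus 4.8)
The plan is to exploit the factorization $q(\vlat,\vmix)=q(\vlat|\vmix)q(\vmix)$, giving $\log q(\vlat,\vmix)=\log q(\vlat|\vmix)+\log q(\vmix)$, together with the defining feature of a conditional EF (Definition \ref{def:cef}): although the sufficient statistics and log-partition of $q(\vlat|\vmix)$ depend on the \emph{realization} $\vmix$, the summand $\log q(\vlat|\vmix)$ depends on the \emph{parameters} only through $\vvarpar_\lat$, whereas $\log q(\vmix)$ depends on the parameters only through $\vvarpar_\mix$. This clean separation of parameter dependence is exactly what forces the off-diagonal blocks of the Hessian, and hence of the FIM, to vanish.

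First I would partition $\vvarpar=\{\vvarpar_\lat,\vvarpar_\mix\}$ and regard $\nabla_\varpar^2\log q(\vlat,\vmix)$ as a $2\times 2$ block matrix; by Lemma \ref{lemma:diff} this Hessian exists pointwise in $(\vlat,\vmix)$. Computing the blocks: the $(\vvarpar_\lat,\vvarpar_\lat)$ block is $\nabla_{\varpar_\lat}^2\log q(\vlat|\vmix)$, since $\log q(\vmix)$ contributes nothing; the $(\vvarpar_\mix,\vvarpar_\mix)$ block is $\nabla_{\varpar_\mix}^2\log q(\vmix)$ by the symmetric argument; and the cross block vanishes, because $\nabla_{\varpar_\lat}\log q(\vlat,\vmix)=\nabla_{\varpar_\lat}\log q(\vlat|\vmix)$ carries no dependence on $\vvarpar_\mix$, so a further derivative in $\vvarpar_\mix$ is zero (and the other cross block is its transpose, also zero).

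Next I would take the negative expectation under $q(\vlat,\vmix)$ block by block, using the tower property $\Unmyexpect{q(\lat,\mix)}[\cdot]=\Unmyexpect{q(\mix)}\Unmyexpect{q(\lat|\mix)}[\cdot]$. The $(\vvarpar_\lat,\vvarpar_\lat)$ block becomes $-\Unmyexpect{q(\mix)}\Unmyexpect{q(\lat|\mix)}[\nabla_{\varpar_\lat}^2\log q(\vlat|\vmix)]=\vfim_\lat(\vvarpar)$; for the $(\vvarpar_\mix,\vvarpar_\mix)$ block the integrand is independent of $\vlat$, so the inner expectation collapses and it becomes $-\Unmyexpect{q(\mix)}[\nabla_{\varpar_\mix}^2\log q(\vmix)]=\vfim_\mix(\vvarpar_\mix)$. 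Assembling the four blocks yields the claimed block-diagonal form \eqref{eq:blockdiag}.

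The argument is mechanical once the structural observation is in hand, so there is no real obstacle, only two points to state carefully. The heart of the lemma is the vanishing of the cross block, which rests entirely on $q(\vlat|\vmix)$ being parametrized by $\vvarpar_\lat$ alone with no dependence on $\vvarpar_\mix$. The second point is merely that differentiation (pointwise) and expectation (taken afterwards) are separate operations, so no interchange issue arises, and Lemma \ref{lemma:diff} supplies the differentiability that makes the Hessian well defined. This lemma then feeds directly into Theorem \ref{thm:fimmcef}: a block-diagonal matrix is positive-definite if and only if each diagonal block is, reducing invertibility of $\vfim_{wz}(\vvarpar)$ to positive-definiteness of $\vfim_\lat(\vvarpar)$ and $\vfim_\mix(\vvarpar_\mix)$, each of which follows from minimality of the MCEF representation.
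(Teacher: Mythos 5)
Your proposal is correct and follows essentially the same route as the paper's proof: decompose $\log q(\vlat,\vmix)=\log q(\vlat|\vmix,\vvarpar_\lat)+\log q(\vmix|\vvarpar_\mix)$, observe that each summand depends on only one parameter block so the cross blocks of the Hessian vanish, and then push the expectation through block by block (collapsing the inner expectation for the $\vvarpar_\mix$ block). The structural observation you single out as the heart of the argument is exactly the step the paper labels as the fourth step of its derivation.
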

\begin{proof}
   By Lemma \ref{lemma:diff}, $\log q(\vmix,\vlat|\vvarpar)$ is twice differentiable, so the FIM is well defined. Below, we simplify the FIM to show that it has a block-diagonal structure. The first step below follows from the definition of the FIM. The second step is simply writing the FIM in a $2\times 2$ block corresponding to $\vvarpar_\lat$ and $\vvarpar_\mix$. In the third step, we write the joint as the product of $q(\vlat|\vmix)$ and
   $q(\vmix)$. The fourth step is obtained since the two blocks are separable in $\vvarpar_\lat$ and $\vvarpar_\mix$. In the fifth step, we take the expectation inside which give us the desired result in the last step.
\begin{align*}
 \vfim_{wz}(\vvarpar) &= -\Unmyexpect{q(\lat,\mix|\varpar)} \sqr{ \nabla_{\varpar}^2 \log q(\vlat,\vmix|\vvarpar) } \\
 &= -\Unmyexpect{q(\lat,\mix|\varpar)} \begin{bmatrix}
  \nabla_{\varpar_\lat}^2 \log q(\vlat,\vmix|\vvarpar) &  \nabla_{\varpar_\mix} \nabla_{\varpar_\lat^T} \log q(\vlat,\vmix|\vvarpar)\\
  \nabla_{\varpar_\lat} \nabla_{\varpar_\mix^T} \log q(\vlat,\vmix|\vvarpar) & \nabla_{\varpar_\mix}^2 \log q(\vlat,\vmix|\vvarpar) 
\end{bmatrix}  \\
 &= -\Unmyexpect{q(\lat,\mix|\varpar)} \begin{bmatrix}
  \nabla_{\varpar_\lat}^2 \rnd{ \log q(\vlat|\vmix,\vvarpar_\lat) +\log q(\vmix|\vvarpar_\mix)  } &  \nabla_{\varpar_\mix} \nabla_{\varpar_\lat^T}  \rnd{ \log q(\vlat|\vmix,\vvarpar_\lat) +\log q(\vmix|\vvarpar_\mix)  } \\
  \nabla_{\varpar_\lat} \nabla_{\varpar_\mix^T}  \rnd{ \log q(\vlat|\vmix,\vvarpar_\lat) +\log q(\vmix|\vvarpar_\mix)  }  & \nabla_{\varpar_\mix}^2  \rnd{ \log q(\vlat|\vmix,\vvarpar_\lat) +\log q(\vmix|\vvarpar_\mix)  } 
\end{bmatrix} \\ 
 &= -\Unmyexpect{q(\lat,\mix|\varpar)} \begin{bmatrix}
  \nabla_{\varpar_\lat}^2  \log q(\vlat|\vmix,\vvarpar_\lat)  &  \mathbf{0} \\
  \mathbf{0}  & \nabla_{\varpar_\mix}^2 \log q(\vmix|\vvarpar_\mix) 
\end{bmatrix} \\ 
 &= -\begin{bmatrix}
  \Unmyexpect{q(\lat,\mix|\varpar)} \sqr{ \nabla_{\varpar_\lat}^2  \log q(\vlat|\vmix,\vvarpar_\lat) } &  \mathbf{0} \\
  \mathbf{0}  & \Unmyexpect{q(\mix|\varpar_\mix)} \sqr{ \nabla_{\varpar_\mix}^2 \log q(\vmix|\vvarpar_\mix) } 
\end{bmatrix}  \\
&= \begin{bmatrix}
\vfim_{z}(\vvarpar) & \mathbf{0} \\
\mathbf{0} & \vfim_{w}(\vvarpar_\mix),
\end{bmatrix} 
\end{align*}
\end{proof}

\begin{lemma}
   The first block of the FIM matrix $\vfim_{\lat}$ is equal to the derivative of the expectation parameter $\vvarmean_\lat(\vvarpar)$:
   \begin{align*}
   \vfim_{\lat}(\vvarpar) := \nabla_{\varpar_\lat} \vvarmean_\lat^T(\vvarpar) 
   \end{align*}
   \label{lemma:fimwrtmean}
\end{lemma}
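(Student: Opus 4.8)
The plan is to establish the conditional analogue of the standard minimal-EF identity $\nabla_{\varpar_\lat}\vvarmean_\lat^T = \nabla^2_{\varpar_\lat} A = \vfim$ used in \eqref{eq:chainrule1}, by reducing both sides to derivatives of the conditional log-partition $A_\lat(\vvarpar_\lat,\vmix)$ and showing they coincide. Throughout I treat $\vvarpar_\mix$ (hence the measure $q(\vmix)$) as fixed, since in this lemma we only differentiate with respect to the block $\vvarpar_\lat$.

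First I would rewrite the expectation parameter. From \eqref{eq:mz}, $\vvarmean_\lat(\vvarpar) = \Unmyexpect{q(\mix)}[\Unmyexpect{q(\lat|\mix)}[\vphi_\lat(\vlat,\vmix)]]$. Conditioned on $\vmix$, the distribution $q(\vlat|\vmix)$ in \eqref{eq:qzw} is a regular EF with natural parameter $\vvarpar_\lat$, so the classical gradient-of-log-partition identity gives $\Unmyexpect{q(\lat|\mix)}[\vphi_\lat(\vlat,\vmix)] = \nabla_{\varpar_\lat} A_\lat(\vvarpar_\lat,\vmix)$. Hence $\vvarmean_\lat(\vvarpar) = \Unmyexpect{q(\mix)}[\nabla_{\varpar_\lat} A_\lat(\vvarpar_\lat,\vmix)]$.

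The key step is then to differentiate this with respect to $\vvarpar_\lat$. Because $q(\vmix)$ depends only on $\vvarpar_\mix$ and not on $\vvarpar_\lat$, the outer expectation is taken against a measure that is constant in $\vvarpar_\lat$, so I can pull the gradient inside to get $\nabla_{\varpar_\lat}\vvarmean_\lat^T(\vvarpar) = \Unmyexpect{q(\mix)}[\nabla^2_{\varpar_\lat} A_\lat(\vvarpar_\lat,\vmix)]$. For the other side, differentiating $\log q(\vlat|\vmix,\vvarpar_\lat)$ from \eqref{eq:qzw} twice in $\vvarpar_\lat$ annihilates the linear term $\myang{\vphi_\lat,\vvarpar_\lat}$ and the $\vvarpar_\lat$-free base measure, leaving $\nabla^2_{\varpar_\lat}\log q(\vlat|\vmix,\vvarpar_\lat) = -\nabla^2_{\varpar_\lat} A_\lat(\vvarpar_\lat,\vmix)$, a quantity that does not depend on $\vlat$. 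Substituting this into the definition of $\vfim_\lat$ from Lemma \ref{lemma:fim} makes the inner expectation over $q(\lat|\mix)$ trivial and yields $\vfim_\lat(\vvarpar) = \Unmyexpect{q(\mix)}[\nabla^2_{\varpar_\lat} A_\lat(\vvarpar_\lat,\vmix)]$, which matches the expression just derived for $\nabla_{\varpar_\lat}\vvarmean_\lat^T$, completing the identification.

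The only real obstacle is justifying the two interchanges of differentiation and integration. The inner one, moving $\nabla_{\varpar_\lat}$ through $\int\!\cdots q(\vlat|\vmix)\,d\vlat$ to obtain the gradient- and Hessian-of-log-partition identities, is the standard regularity property of regular exponential families on the open set $\Omega_\lat$, and I would invoke it by citation rather than reprove it. The outer interchange, pulling $\nabla_{\varpar_\lat}$ through $\int\!\cdots q(\vmix)\,d\vmix$, is the easy one precisely because the integrating measure is $\vvarpar_\lat$-free; combined with the twice-differentiability established in Lemma \ref{lemma:diff}, this supplies the domination needed to legitimize the swap and close the argument.
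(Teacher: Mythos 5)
Your proposal is correct and follows essentially the same route as the paper's proof: both reduce $\vvarmean_\lat$ to $\Unmyexpect{q(\mix)}\sqr{\nabla_{\varpar_\lat} A_\lat(\vvarpar_\lat,\vmix)}$ via the conditional gradient-of-log-partition identity, and both reduce $\vfim_\lat$ to $\Unmyexpect{q(\mix)}\sqr{\nabla^2_{\varpar_\lat} A_\lat(\vvarpar_\lat,\vmix)}$ by noting that only the $-A_\lat$ term survives two differentiations of $\log q(\vlat|\vmix,\vvarpar_\lat)$ and that the result is $\vlat$-free. The only differences are presentational: you meet in the middle at the common expression and make the differentiation-under-the-integral steps explicit, whereas the paper derives the gradient identity by direct computation and chains the equalities in one direction.
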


\begin{proof}
   We first show that the gradient of $A_\lat(\vvarpar_\lat,\vmix)$ with respect to $\vvarpar_\lat$ is equal to $\Unmyexpect{ q(\lat|\mix) } \sqr{ \vphi_\lat(\vlat,\vmix)  }$. By using the definition of $A_\lat(\vvarpar_\lat,\vmix)$, this is straightforward to show:
\begin{align}
\nabla_{\varpar_\lat} A_\lat(\vvarpar_\lat,\vmix) &= \nabla_{\varpar_\lat} \log \int  h_\lat(\vlat,\vmix)\exp\sqr{\myang{\vphi_\lat(\vlat,\vmix), \vvarpar_\lat}}  d\vlat \nonumber \\
&= \frac{ \int \nabla_{\varpar_\lat} h_\lat(\vlat,\vmix)\exp\sqr{\myang{\vphi_\lat(\vlat,\vmix), \vvarpar_\lat}}  d\vlat } { \int  h_\lat(\vlat,\vmix)\exp\sqr{\myang{\vphi_\lat(\vlat,\vmix), \vvarpar_\lat}}  d\vlat } \nonumber \\
&= \frac{ \int \vphi_\lat(\vlat,\vmix)  h_\lat(\vlat,\vmix)\exp\sqr{\myang{\vphi_\lat(\vlat,\vmix), \vvarpar_\lat}}  d\vlat } { \int  h_\lat(\vlat,\vmix)\exp\sqr{\myang{\vphi_\lat(\vlat,\vmix), \vvarpar_\lat}}  d\vlat } \label{eq:fim_psd1} \\
&= \Unmyexpect{ q(\lat|\mix) } \sqr{ \vphi_\lat(\vlat,\vmix)  } \label{eq:fim_psd2}
\end{align}
Using this, the expectation parameter $\vvarmean_\lat$ is simply the expected value of the gradient of the log-partition function.
\begin{align}
\vvarmean_\lat & =  \Unmyexpect{ q(\mix) q(\lat|\mix) } \sqr{ \vphi_\lat(\vlat,\vmix)  } = \Unmyexpect{ q(\mix)  } \sqr{ \nabla_{\varpar_\lat} A_\lat(\vvarpar_\lat,\vmix) } \label{eq:diff_cond_mean}
\end{align}
Using this, it is easy to show the result by simply using the definition of the conditional EF, as shown below:
\begin{align*}
\vfim_{\lat}(\vvarpar) &= -\Unmyexpect{q(\lat,\mix)} \sqr{\nabla_{\varpar_\lat}^2 \log q(\vlat|\vmix,\vvarpar_\lat)}\\
&= -\Unmyexpect{ q(\lat,\mix|\varpar) } \sqr{ \nabla_{\varpar_\lat}^2 \rnd{ \log h_\lat(\vlat,\vmix) + \myang{\vphi_\lat(\vlat,\vmix), \vvarpar_\lat} - A_\lat(\vvarpar_\lat,\vmix)} } \\
&= \Unmyexpect{ q(\lat,\mix|\varpar) } \sqr{ \nabla_{\varpar_\lat}^2  A_\lat(\vvarpar_\lat,\vmix)}  \\
&= \Unmyexpect{ q(\mix|\varpar_\mix) } \sqr{ \nabla_{\varpar_\lat}^2  A_\lat(\vvarpar_\lat,\vmix)} \\ 
&= \nabla_{\varpar_\lat} \Unmyexpect{ q(\mix|\varpar_\mix) } \sqr{ \nabla_{\varpar_\lat^T}  A_\lat(\vvarpar_\lat,\vmix)}  \\
&= \nabla_{\varpar_\lat} \vvarmean_\lat^T 
\end{align*}
\end{proof}


\begin{lemma}
\label{fim_mini_cond}
Let $\Omega_\mix \times \Omega_\lat$ be relatively open.
If the mapping $\vvarmean_\mix(\cdot):  \Omega_\mix \to \mathcal{M}_\mix$ is one-to-one, and, 
given every $\vvarpar_\mix \in \Omega_\mix$, the conditional mapping $\vvarmean_{\lat}(\cdot,\vvarpar_\mix): \Omega_\lat  \to \mathcal{M}_\lat$ is one-to-one,
then $\vfim_{\mix\lat}(\vvarpar)$ is positive-definite in $\Omega_\lat\times\Omega_\mix$.
\end{lemma}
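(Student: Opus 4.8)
The plan is to exploit the block-diagonal structure already established in Lemma \ref{lemma:fim}. Since
\[
\vfim_{\mix\lat}(\vvarpar) = \begin{bmatrix} \vfim_{\lat}(\vvarpar) & \mathbf{0} \\ \mathbf{0} & \vfim_{\mix}(\vvarpar_\mix) \end{bmatrix},
\]
and a symmetric block-diagonal matrix is positive-definite if and only if each diagonal block is, I would reduce the claim to showing $\vfim_{\mix}(\vvarpar_\mix)\succ 0$ and $\vfim_{\lat}(\vvarpar)\succ 0$ separately, each under its corresponding injectivity hypothesis.

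The block $\vfim_{\mix}(\vvarpar_\mix)$ is simply the Fisher matrix of the ordinary regular EF $q(\vmix)$, i.e. $\vfim_{\mix}(\vvarpar_\mix)=\nabla^2_{\varpar_\mix}A_\mix(\vvarpar_\mix)=\Cov_{q(\mix)}[\vphi_\mix(\vmix)]$. Injectivity of $\vvarmean_\mix(\cdot)$ is exactly the minimality condition for this EF by Theorem \ref{thm:minimalEF}, and minimality makes $A_\mix$ strictly convex and hence its Hessian positive-definite; this is also the special case $\vmix\equiv\text{const}$ of the argument below.

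The substantive step is the block $\vfim_{\lat}(\vvarpar)$. By Lemma \ref{lemma:fimwrtmean} together with \eqref{eq:diff_cond_mean} we have $\vfim_{\lat}(\vvarpar)=\Unmyexpect{q(\mix)}[\nabla^2_{\varpar_\lat}A_\lat(\vvarpar_\lat,\vmix)]$, and each inner Hessian equals $\Cov_{q(\lat|\mix)}[\vphi_\lat(\vlat,\vmix)]\succeq 0$, so for any vector $\vu$,
\[
\vu^\top \vfim_{\lat}(\vvarpar)\,\vu = \Unmyexpect{q(\mix)}\!\left[ \mathrm{Var}_{q(\lat|\mix)}\!\left( \myang{\vu, \vphi_\lat(\vlat,\vmix)} \right) \right] \ge 0 .
\]
I would then argue by contradiction: suppose at some $\vvarpar^{*}=(\vvarpar_\lat^{*},\vvarpar_\mix^{*})$ there is $\vu\neq\mathbf{0}$ with $\vu^\top\vfim_{\lat}\vu=0$. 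Nonnegativity of the integrand forces $\myang{\vu,\vphi_\lat(\vlat,\vmix)}$ to be constant over the support of $q(\vlat|\vmix)$ for $q(\vmix)$-almost every $\vmix$. The key observation is that this support is fixed by the base measure $h_\lat(\vlat,\vmix)$ and does \emph{not} depend on $\vvarpar_\lat$, so the constancy persists for every $\vvarpar_\lat\in\Omega_\lat$; consequently shifting $\vvarpar_\lat$ along $\vu$ only multiplies each conditional density by a $\vlat$-independent factor that is absorbed into the normalizer, giving $q(\vlat|\vmix,\vvarpar_\lat^{*}+t\vu)=q(\vlat|\vmix,\vvarpar_\lat^{*})$ for small $t$ (relative openness of $\Omega_\lat$ guarantees $\vvarpar_\lat^{*}+t\vu\in\Omega_\lat$). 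Integrating the sufficient statistics then yields $\vvarmean_\lat(\vvarpar_\lat^{*}+t\vu,\vvarpar_\mix^{*})=\vvarmean_\lat(\vvarpar_\lat^{*},\vvarpar_\mix^{*})$, contradicting the assumed injectivity of $\vvarmean_\lat(\cdot,\vvarpar_\mix^{*})$. Hence $\vfim_{\lat}(\vvarpar)\succ 0$, and together with the previous block the whole FIM is positive-definite.

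I expect the main obstacle to be precisely this propagation step: the vanishing of the quadratic form at a single $\vvarpar^{*}$ only exhibits degeneracy of the sufficient statistics at that one parameter value, whereas contradicting \emph{global} injectivity requires the degeneracy to hold along an entire segment. The clean resolution is to recognize that the almost-sure constancy of $\myang{\vu,\vphi_\lat(\cdot,\vmix)}$ is a statement about the common support determined by $h_\lat$, not about the particular $\vvarpar_\lat^{*}$, which is exactly why it transfers to nearby parameters and collapses the conditional mean map.
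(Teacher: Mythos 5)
Your proposal is correct and follows essentially the same route as the paper: block-diagonal reduction, reducing the $\vmix$-block to minimality of the ordinary EF $q(\vmix)$, writing $\vu^\top\vfim_\lat\vu$ as an expected conditional variance of $\myang{\vu,\vphi_\lat}$, and then contradicting injectivity of $\vvarmean_\lat(\cdot,\vvarpar_\mix)$ by perturbing $\vvarpar_\lat$ along the null direction using openness of $\Omega_\lat$. If anything, your version is slightly more careful on one point: zero variance only forces $\myang{\vu,\vphi_\lat(\cdot,\vmix)}$ to be \emph{constant} on the ($\vvarpar_\lat$-independent) support rather than zero, which is what you state, whereas the paper's write-up asserts it equals zero; the argument goes through either way since a constant shift in the inner product is absorbed by the normalizer.
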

\begin{proof}


   When the mapping  $\vvarmean_\mix$  is one-to-one, $q(\vmix|\vvarpar_\mix)$ is a minimal EF, and given that $\Omega_\mix$ is open, using the result discussed in Section \ref{sec:ngvi}, we conclude that the second block $\vfim_{\mix}(\vvarpar_\mix)$ of $\vfim_{\mix\lat}(\vvarpar)$ given in \eqref{eq:blockdiag} 
   is positive definite and invertible for all $\Omega_\lat$. Now we prove that the first block $\vfim_\lat(\vvarpar)$ is also positive definite.
 
   The steps below establish the positive-\emph{semi} definiteness first. The first step is simply the definition of the FIM, while the second step is obtained by using the fact that $\nabla \log f(\vvarpar) = \nabla f(\vvarpar)/ f(\vvarpar)$. The third step is obtained by using the chain-rule, and the fourth step simply uses the log-trick above to simply the second term. In the fifth step, we take the derivative out of the first term which cancels out
   $q(\vlat|\vmix,\vvarpar_\lat)$. The last step is straightfoward since the outer products are always nonnegative. 
\begin{align}
  \nabla_{\varpar_\lat}^2 A_\lat(\vvarpar_\lat,\vmix) & = -\Unmyexpect{ q(\lat|\mix) } \sqr{ \nabla_{\varpar_\lat}^2 \log q(\vlat|\vmix,\vvarpar_\lat) } , \nonumber \\
&= - \Unmyexpect{ q(\lat|\mix) } \sqr{\nabla_{\varpar_\lat} \left( \frac{ \nabla_{\varpar_\lat^T} q(\vlat|\vmix,\vvarpar_\lat)} { q(\vlat|\vmix,\vvarpar_\lat)} \right)} , \nonumber \\ 
   &= - \Unmyexpect{ q(\lat|\mix) } \sqr{ \frac{ \nabla_{\varpar_\lat}^2 q(\vlat|\vmix,\vvarpar_\lat)} { q(\vlat|\vmix,\vvarpar_\lat)}  - \frac{ \nabla_{\varpar_\lat} q(\vlat|\vmix,\vvarpar_\lat)} { q(\vlat|\vmix,\vvarpar_\lat)}  \frac{ \nabla_{\varpar_\lat^T} q(\vlat|\vmix,\vvarpar_\lat)} { q(\vlat|\vmix,\vvarpar_\lat)}  } \nonumber \\ 
&= \Unmyexpect{ q(\lat|\mix) } \sqr{ -\frac{ \nabla_{\varpar_\lat}^2 q(\vlat|\vmix,\vvarpar_\lat)} { q(\vlat|\vmix,\vvarpar_\lat)}}  + \Unmyexpect{ q(\lat|\mix) } \sqr{ \nabla_{\varpar_\lat} \log q(\vlat|\vmix,\vvarpar_\lat) \nabla_{\varpar_\lat^T} \log q(\vlat|\vmix,\vvarpar_\lat) }, \nonumber \\
&= \int  - \nabla_{\varpar_\lat}^2 q(\vlat|\vmix,\vvarpar_\lat) d\vlat  + \Unmyexpect{ q(\lat|\mix) } \sqr{ \nabla_{\varpar_\lat} \log q(\vlat|\vmix,\vvarpar_\lat) \nabla_{\varpar_\lat^T} \log q(\vlat|\vmix,\vvarpar_\lat) }, \nonumber \\
&= \underbrace{ - \nabla_{\varpar_\lat}^2 \overbrace{\int q(\vlat|\vmix,\vvarpar_\lat)  d\vlat}^{=1} }_{=0} + \Unmyexpect{ q(\lat|\mix) } \sqr{ \nabla_{\varpar_\lat} \log q(\vlat|\vmix,\vvarpar_\lat) \nabla_{\varpar_\lat^T} \log q(\vlat|\vmix,\vvarpar_\lat) }, \nonumber \\
&=  \Unmyexpect{ q(\lat|\mix) } \sqr{ \nabla_{\varpar_\lat} \log q(\vlat|\vmix,\vvarpar_\lat) \nabla_{\varpar_\lat^T} \log q(\vlat|\vmix,\vvarpar_\lat) } \succeq \mathbf{0} \label{ieq:conconvex}.
\end{align}
   Using Lemma \ref{lemma:fimwrtmean} and \eqref{eq:diff_cond_mean}, we see that FIM is positive semi-definite:
   \begin{align*}
      \vfim_{\lat}(\vvarpar) = \nabla_{\varpar_\lat} \vvarmean_\lat^T = \nabla_{\varpar_\lat} \Unmyexpect{ q(\mix|\varpar_\mix)  } \sqr{ \nabla_{\varpar_\lat^T} A_\lat(\vvarpar_\lat,\vmix) } =\Unmyexpect{q(\mix)} \sqr{ \nabla_{\varpar_\lat}^2 A_\lat(\vvarpar_\lat,\vmix) } \succeq \mathbf{0}
   \end{align*}
   Now, we prove the final claim that, for every $\vvarpar_\mix \in \Omega_\mix$, if the conditional mapping $\vvarmean_{\lat}(\cdot,\vvarpar_\mix)$ is one-to-one, then
$\vfim_{\lat}(\vvarpar)$ is positive definite.
   We will prove this statement by contradiction.
Suppose there exists $\vvarpar$ such that $\vfim_{\lat}(\vvarpar)$ is positive semi-definite, since $\vfim_{\lat}(\vvarpar)$ is positive semi-definite,
   there exists a non-zero vector $\va$ such that $\va^T \vfim_{\lat}(\vvarpar) \va = 0$. Simplifying below, we show that this leads to a contradiction. The first and second step are obtained by simply plugging \eqref{ieq:conconvex}, while the third step is obtained by using the definition of $q(\vlat|\vmix,\vvarpar_\lat)$ and the fourth step is obtained by using \eqref{eq:fim_psd2}. The last step is obtained by noting that the quantity is simply the variance of
   $\va^T\vphi_\lat(\vlat,\vmix)$ conditioned on $\vmix$ .  
\begin{align*}
   \va^T \vfim_\lat(\vvarpar) \va  &= \Unmyexpect{q(\mix)} \sqr{\va^T \nabla_{\varpar_\lat}^2 A_\lat(\vvarpar_\lat,\vmix) \va } \\
&= \Unmyexpect{ q(\mix) } \sqr{ \va^T \Unmyexpect{ q(\lat|\mix) } \crl{ \nabla_{\varpar_\lat} \log q(\vlat|\vmix,\vvarpar_\lat) \nabla_{\varpar_\lat^T} \log q(\vlat|\vmix,\vvarpar_\lat) } \va } \\
&= \Unmyexpect{ q(\mix) q(\lat|\mix) } \sqr{ \va^T \left( \vphi_\lat(\vlat,\vmix) - \nabla_{\varpar_\lat} A_\lat(\vvarpar_\lat,\vmix)  \right)  \left( \vphi_\lat(\vlat,\vmix) - \nabla_{\varpar_\lat} A_\lat(\vvarpar_\lat,\vmix)  \right)^T   \va } \\ 
&= \Unmyexpect{ q(\mix) q(\lat|\mix) } \sqr{ \va^T \left( \vphi_\lat(\vlat,\vmix) - \Unmyexpect {q(\lat|\mix) } \sqr{ \vphi_\lat(\vlat,\vmix)}  \right)  \left( \vphi_\lat(\vlat,\vmix) - \Unmyexpect {q(\lat|\mix) } \sqr{ \vphi_\lat(\vlat,\vmix)} \right)^T   \va } \\ 
&= \Unmyexpect{ q(\mix)  } \Unmyvar{q(\lat|\mix)} \sqr{ \va^T\vphi_\lat(\vlat,\vmix) }
\end{align*}
The expectation of a function positive quantity is equal to zero only when each function value is equal to zero, therefore for the above to be zeros, we need 
   $\va^T\vphi_\lat(\vlat,\vmix)= 0$. However, as we show below, this is not possible since the representation $q(\vlat|\vmix)$ is minimal conditioned on $\vmix$.

   Since $\Omega_\lat$ is  open, there exists a small $\delta>0$ to always be able to obtain a perturbed version $\vvarpar_\lat' =  \vvarpar_\lat + \delta \va$, such that $\vvarpar_\lat' \in \Omega_\lat$.
   Since the conditional mapping is one-to-one, $\vvarmean_{\lat}(\vvarpar_\lat', \vvarpar_\mix) \neq \vvarmean_{\lat}(\vvarpar_\lat, \vvarpar_\mix)$.
   By using \eqref{eq:diff_cond_mean} and \eqref{eq:fim_psd1}, when $\va^T\vphi_\lat(\vlat,\vmix)=0$, we get a contradiction:  
\begin{align}
 \vvarmean_{\lat}(\vvarpar_\lat', \vvarpar_\mix) & = \Unmyexpect{q(\mix|\varpar_\mix) } \sqr{ \nabla_{\varpar_\lat'} A_\lat(\vvarpar_\lat', \vmix) } \nonumber \\
&= \Unmyexpect{q(\mix|\varpar_\mix) } \sqr{ \frac{ \int \vphi_\lat(\vlat,\vmix)  h_\lat(\vlat,\vmix)\exp\sqr{\myang{\vphi_\lat(\vlat,\vmix), \vvarpar_\lat'}}  d\vlat } { \int  h_\lat(\vlat,\vmix)\exp\sqr{\myang{\vphi_\lat(\vlat,\vmix), \vvarpar_\lat'}}  d\vlat } } \label{eq:pd_prf1} \\
&= \Unmyexpect{q(\mix|\varpar_\mix) } \sqr{ \frac{ \int \vphi_\lat(\vlat,\vmix)  h_\lat(\vlat,\vmix)\exp\sqr{\myang{\vphi_\lat(\vlat,\vmix), \vvarpar_\lat}}  d\vlat } { \int  h_\lat(\vlat,\vmix)\exp\sqr{\myang{\vphi_\lat(\vlat,\vmix), \vvarpar_\lat}}  d\vlat } } \label{eq:pd_prf2} \\
&=\vvarmean_{\lat}(\vvarpar_\lat, \vvarpar_\mix) \nonumber
\end{align} where we can 
 move from \eqref{eq:pd_prf1} to \eqref{eq:pd_prf2},
since  $\myang{\vphi_\lat(\vlat,\vmix), \vvarpar_\lat'} = \myang{\vphi_\lat(\vlat,\vmix), \vvarpar_\lat+ \delta \va}=\myang{\vphi_\lat(\vlat,\vmix), \vvarpar_\lat}$, 
   Due to the contradiction,  $\vfim_{\lat}(\vvarpar)$  must be positive definite. This proves that both the blocks are positive definite and invertible.

\end{proof}

\begin{lemma}
   The gradient with respect to $\vvarpar$ can be expressed as the gradient with respect to $\vvarmean$:
   \begin{align}
      \nabla_{\varpar}\elbofinal &= \sqr{\nabla_{\varpar} \vvarmean^T} \nabla_{\varmean} \elbofinal = \sqr{\vfim_{wz}(\vvarpar)}  \nabla_{\varmean}\elbofinal 
      \label{eq:ng_update_fim}
   \end{align}
   \label{lemma:ng_update_fim}
\end{lemma}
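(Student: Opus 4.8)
The plan is to reduce the claim to two decoupled block identities, one for each group of natural parameters, and then reassemble them using the block-diagonal structure already established in Lemma~\ref{lemma:fim}. Since that lemma gives $\vfim_{wz}(\vvarpar) = \mathrm{blockdiag}\left(\vfim_\lat(\vvarpar),\, \vfim_\mix(\vvarpar_\mix)\right)$, it suffices to prove the two relations
\begin{align*}
   \nabla_{\varpar_\lat}\elbofinal = \vfim_\lat(\vvarpar)\,\nabla_{\varmean_\lat}\elbofinal, \qquad \nabla_{\varpar_\mix}\elbofinal = \vfim_\mix(\vvarpar_\mix)\,\nabla_{\varmean_\mix}\elbofinal,
\end{align*}
and then stack them. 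Once both hold, the first equality in \eqref{eq:ng_update_fim} is just the chain rule written block-wise, and the second follows by identifying each block Jacobian with the corresponding FIM block through Lemma~\ref{lemma:fim}.

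For the $\vmix$-block I would use that $q(\vmix)$ is a minimal EF on the open set $\Omega_\mix$, so $\vvarmean_\mix(\cdot)$ is a bijection with $\nabla_{\varpar_\mix}\vvarmean_\mix^T = \nabla_{\varpar_\mix}^2 A_\mix(\vvarpar_\mix) = \vfim_\mix(\vvarpar_\mix)$. Holding $\vvarpar_\lat$ fixed, I can therefore regard $\elbofinal$ as a function of $\vvarmean_\mix$ and apply the chain rule, $\nabla_{\varpar_\mix}\elbofinal = [\nabla_{\varpar_\mix}\vvarmean_\mix^T]\nabla_{\varmean_\mix}\elbofinal = \vfim_\mix(\vvarpar_\mix)\nabla_{\varmean_\mix}\elbofinal$. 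For the $\vlat$-block I would instead fix $\vvarpar_\mix$ and invoke the MCEF assumption that $\vvarmean_\lat(\cdot,\vvarpar_\mix)$ is one-to-one together with Lemma~\ref{lemma:fimwrtmean}, which supplies $\nabla_{\varpar_\lat}\vvarmean_\lat^T = \vfim_\lat(\vvarpar)$; the same chain-rule argument then yields $\nabla_{\varpar_\lat}\elbofinal = \vfim_\lat(\vvarpar)\nabla_{\varmean_\lat}\elbofinal$.

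The main obstacle, and the point that deserves real care, is that the global map $\vvarpar\mapsto\vvarmean$ does \emph{not} have a block-diagonal (nor even symmetric) Jacobian: because $\vvarmean_\lat = \Unmyexpect{q(\mix|\varpar_\mix)}[\nabla_{\varpar_\lat} A_\lat(\vvarpar_\lat,\vmix)]$ depends on $\vvarpar_\mix$ through $q(\vmix|\vvarpar_\mix)$, the cross-derivative $\nabla_{\varpar_\mix}\vvarmean_\lat^T$ is a generally nonzero covariance term. A single naive global chain rule would therefore produce a block lower-triangular Jacobian rather than the block-diagonal $\vfim_{wz}$, and the identity would fail. The resolution, which I would state explicitly, is to compute the two mean-parameter gradients under \emph{different} held-fixed conventions: for the $\vlat$-block hold $\vvarpar_\mix$ fixed (equivalently hold $\vvarmean_\mix$ fixed, since $\vvarmean_\mix$ is a bijection of $\vvarpar_\mix$ alone), while for the $\vmix$-block hold $\vvarpar_\lat$ fixed (which is \emph{not} the same as holding $\vvarmean_\lat$ fixed). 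With these conventions the relevant reparameterization is a genuine bijection within each block and the off-diagonal coupling never enters, so $[\nabla_{\varpar}\vvarmean^T]$ in \eqref{eq:ng_update_fim} is precisely the block-diagonal matrix with blocks $\nabla_{\varpar_\lat}\vvarmean_\lat^T$ and $\nabla_{\varpar_\mix}\vvarmean_\mix^T$, which Lemma~\ref{lemma:fim} identifies with $\vfim_{wz}(\vvarpar)$. I would close by noting that this is exactly why the block-diagonal FIM is the correct preconditioner, so that the update $\vvarpar \leftarrow \vvarpar + \beta\nabla_{\varmean}\elbofinal$ coincides with the exact natural-gradient step $\vvarpar + \beta\,\vfim_{wz}^{-1}\nabla_{\varpar}\elbofinal$ of Theorem~\ref{thm:mixexp}.
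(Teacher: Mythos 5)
Your proposal is correct and follows essentially the same route as the paper: a block-wise chain rule, with Lemma~\ref{lemma:fimwrtmean} supplying $\nabla_{\varpar_\lat}\vvarmean_\lat^T = \vfim_\lat(\vvarpar)$ for the $\vlat$-block and the standard minimal-EF identity $\nabla_{\varpar_\mix}\vvarmean_\mix^T = \nabla^2_{\varpar_\mix}A_\mix = \vfim_\mix$ for the $\vmix$-block, reassembled via the block-diagonal structure of Lemma~\ref{lemma:fim}. The one place you go beyond the paper is your explicit handling of the nonzero cross-derivative $\nabla_{\varpar_\mix}\vvarmean_\lat^T$: the paper dispatches the $\vvarpar_\mix$-block with ``this result holds trivially,'' whereas you correctly observe that the global Jacobian of $\vvarpar\mapsto\vvarmean$ is only block-triangular and that the identity requires the mixed held-fixed convention (differentiate in $\vvarmean_\mix$ with $\vvarpar_\lat$ fixed, and in $\vvarmean_\lat$ with $\vvarpar_\mix$ fixed) — a convention the paper does adopt implicitly in its example derivations (e.g., the $\nabla_{\pi_c}$ computation in Appendix~\ref{app:finitemix} holds $\vmu_c,\vSigma_c$ fixed), so your clarification makes rigorous a step the paper leaves unstated.
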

\begin{proof}
   Using Lemma \ref{lemma:fimwrtmean}, and chain rule, we can establish the results for $\vvarpar_\lat$:
   \begin{align*}
      \nabla_{\varpar_\lat} \elbofinal = \nabla_{\varpar_\lat} \vvarmean_\lat^T(\vvarpar) \sqr{ \nabla_{\varmean_\lat} \elbofinal } = \vfim_{\lat}(\vvarpar) \sqr{ \nabla_{\varmean_\lat} \elbofinal } 
   \end{align*}
   For $\vvarpar_\mix$, this result holds trivially, which proves the statement.
\end{proof}

   \section{Finite Mixture of Gaussians}
   \label{app:finitemix}
      The finite mixture of EF distribution has the following conditional distribution $q(\vlat|\mix)$:
\begin{align*}
q(\vlat|\mix) &= \sum_{c=1}^{K} \mathbb{I}_c(\mix) q(\vlat|\vvarpar_{c}) =  \sum_{c=1}^{K} \mathbb{I}_c(\mix) h_{\lat}(\vlat)\exp \sqr{ \myang{ \vvarpar_{c},  \vphi_{\lat}(\vlat) } - A_{\lat}(\vvarpar_{c}) }  \\
&= h_{\lat}(\vlat)\exp \crl{ \sum_{c=1}^{K} \myang{ \mathbb{I}_c(\mix) \vphi_{\lat}(\vlat) , \vvarpar_{c} } -  \sum_{c'=1}^{K}\mathbb{I}_{c'}(\mix) A_{\lat}(\vvarpar_{c'}) } 
\end{align*} where we assume each component admits the same parametric form with distinct parameter $\vvarpar_c$.
For a mixture using distinct parametric forms with tied parameters, please see Appendix \ref{app:BS}.

From the above expression and using the EF form for the multinomial distribution, we can write the sufficient statistics, natural parameters, and expectation parameters as shown below, where $\vvarmean_c:=\Unmyexpect{q(\lat|\mix=c)}\sqr{ \vphi_\lat(\vlat) }$ is the expectation parameter of a component $q(\vlat|\mix=c)$.
\begin{equation*}
\left[
\begin{array}{c}
\mathbb{I}_1(w)\\
\mathbb{I}_2(w)\\
\vdots\\
\mathbb{I}_{K-1}(w)\\
\mathbb{I}_{1}(w)\vphi_\lat(\vlat)\\
\mathbb{I}_{2}(w)\vphi_\lat(\vlat)\\
\vdots\\
\mathbb{I}_{K}(w)\vphi_\lat(\vlat)\\
\end{array}
\right]
\quad
\left[
\begin{array}{c}
\log (\pi_1/\pi_K)\\
\log (\pi_2/\pi_K)\\
\vdots\\
\log (\pi_{K-1}/\pi_K)\\
\vvarpar_1\\
\vvarpar_2\\
\vdots\\
\vvarpar_K\\
\end{array}
\right]
\quad
\left[
\begin{array}{c}
\pi_1\\
\pi_2\\
\vdots\\
\pi_{K-1}\\
\pi_1 \vvarmean_1\\
\pi_2 \vvarmean_2\\
\vdots\\
\pi_K \vvarmean_K\\
\end{array}
\right]
\end{equation*}
From the last two vectors, we can see that the mapping between $\vvarpar$ and $\vvarmean$ is going to be one-to-one, when each EF $q(\vlat|\vvarpar_c)$ is minimal (which makes sure that mapping $\vvarpar_c$ and $\vvarmean_c$ is one-to-one), and all $\vvarpar_c$ are distinct.

\subsection{The Model and ELBO}
\label{app:fmg_elbo}
We consider the following model: $p(\mathcal{D},\vlat) = \prod_{n=1}^{N} p(\mathcal{D}_n|\vlat)p(\vlat)$. We approximate the posterior by using the finite mixture of EFs whose marginal is denoted by $q(\vlat)$ as given in \eqref{eq:finitemixexp1}. 
The variational lower bound is given by the following: 
\begin{align*}
\elbofinal (\vvarpar ) &=  \sqr{ \sum_{n=1}^N \sqr{ \log p(\mathcal{D}_n|\vlat)} +  \log \frac{p(\vlat)}{q(\vlat)} } \\
&= \Unmyexpect{q(\lat)}\sqr{ - h(\vlat)}, \textrm{ where } h(\vlat)  :=-\sqr{  \log \frac{p(\vlat)}{q(\vlat)} + \sum_{n} \log p(\mathcal{D}_n|\vlat)   } .
\end{align*}
Note that the lower bound is defined with the marginal $q(\vlat)$ and the variable $w$ is not part of the model but only the variational approximation $q(\vlat,\mix)$.

\subsection{Finite Mixture of Gaussians Approximation}
\label{sec:fmGauss}
We now give details about the NGD update for finite mixture of Gaussians. Note that the NGD update for $\vvarpar_\lat$ and $\vvarpar_\mix$ can be computed separately since the FIM is block-diagonal. We first derive the NGD update for each component $q(\vlat|\mix=c)$, and then give an update for $\vvarpar_\mix$.

As shown in Table \ref{tab:examples}, the natural and expectation parameters of the $c$'th component is given as follows:
\begin{align*}
   \vLambda_c := - \half \vSigma_c^{-1} &\quad\quad \vM_c := \pi_c (\vmu_c\vmu_c^T + \vSigma_c) \\
   \vlambda_c := \vSigma_c^{-1} \vmu_c &\quad\quad \vm_c := \pi_c \vmu_c
\end{align*}

The expectation parameters $\vm_c$ and $\vM_c$ are functions of $\pi_c,\vmu_c,\vSigma_c$ and its gradient can be obtained in terms of the gradient with respect to these quantities by using the chain rule. The final expressions are shown below: 
\begin{align*}
\nabla_{m_{c}}  \elbofinal  &= \frac{1}{\pi_c} \left( \nabla_{\mu_c}  \elbofinal  - 2 \sqr{\nabla_{\Sigma_c}  \elbofinal} \vmu_c \right) \\
\nabla_{M_{c}} \elbofinal &= \frac{1}{\pi_c}\left( \nabla_{\Sigma_c} \elbofinal \right) 
\end{align*}

We can compute gradients with respect to $\vmu_c$ and $\vSigma_c$ by using the gradient and Hessian of $h(\vlat)$ at a sample $\vlat$ from $q(\vlat,\mix)$. This can be done by using the Bonnet's and Price's theorems \citep{bonnet1964transformations,price1958useful,opper2009variational, rezende2014stochastic}.
\citet{staines2012variational} and \citet{wu-report}
discuss the conditions of the target function $h(\vlat)$ when it comes to applying these theorems.
Firstly, we define $\delta_{c} = q(\vlat|\mix=c)/q(\vlat) := \gauss(\vlat|\vmu_c,\vSigma_c) / \sum_{c'=1}^K \pi_{c'} \gauss(\vlat|\vmu_{c'},\vSigma_{c'}) $.

Notice that
\begin{align*}
  \nabla_{\mu_c}   \elbofinal (\vvarpar ) &=  \nabla_{\mu_c}   \Unmyexpect{q(\lat)} \sqr{ -  h(\vlat) }  
  =  \Unmyexpect{ q(\lat|\mix=c)} \sqr{ -  \pi_c \nabla_{\lat} h(\vlat) }
  =  \Unmyexpect{ q(\lat)} \sqr{ - \frac{ q(\vlat|\mix=c) \pi_c}{ q(\vlat) } \nabla_{\lat} h(\vlat) } 
  =  \Unmyexpect{ q(\lat)} \sqr{ - \frac{ q(\vlat|\mix=c) q(\mix=c) }{ q(\vlat) } \nabla_{\lat} h(\vlat) }
\end{align*}

Using these theorems, we obtain the following stochastic-gradient estimations for the mixture of Gaussians:
\begin{align*}
  \nabla_{\mu_c}   \elbofinal (\vvarpar )
=& -  \Unmyexpect{q(\lat)} \sqr{  q(\mix=c|\vlat)  \nabla_{\lat} h(\vlat)  } \quad \approx - \pi_c \delta_c \nabla_\lat h(\vlat) \\
  \nabla_{\Sigma_c}   \elbofinal (\vvarpar )
=& - \Unmyexpect{q(\lat)} \sqr{  q(\mix=c|\vlat)  \nabla_{\lat}^2 h(\vlat)  } \quad \approx - \frac{\pi_c \delta_c}{2} \nabla_\lat^2 h(\vlat) . 
\end{align*}
where $q(\mix=c|\vlat)=\pi_c \delta_c$ and $\vlat$ is sampled from $q(\vlat)$.

For $\vLambda_c$, we can then plug these gradient estimations into the natural-gradient update and obtain the following update:
\begin{align*}
   -\half \sqr{ \vSigma_c^{\textrm{(new)}} }^{-1} &\leftarrow -\half \vSigma_c^{-1} + \beta \nabla_{M_c} \elbofinal \quad\quad \Rightarrow \sqr{ \vSigma_c^{\textrm{(new)}} }^{-1} \leftarrow \vSigma_c^{-1} + \beta \delta_c   \nabla_z^2 h(\vlat) 
\end{align*}
Similarly, for $\mathbb{\lambda}_c$, we have the following expression:
\begin{align*}
   \sqr{\vSigma_c^{\textrm{(new)}} }^{-1}\vmu_c^{\textrm{(new)}} &\leftarrow  \vSigma_c^{-1} \vmu_c + \beta \nabla_{m_c} \elbofinal \\
    &\leftarrow  \vSigma_c^{-1} \vmu_c + \beta  \frac{1}{\pi_c} \left( \nabla_{\mu_c}  \elbofinal  - 2 \sqr{\nabla_{\Sigma_c}  \elbofinal} \vmu_c \right)\\
    &\leftarrow  \sqr{ \vSigma_c^{-1}  - 2 \frac{\beta}{\pi_c} \sqr{\nabla_{\Sigma_c}  \elbofinal}}\vmu_c + \beta  \frac{1}{\pi_c} \left( \nabla_{\mu_c}  \elbofinal  \right)\\
    &\leftarrow  \sqr{ \vSigma_c^{-1}  +  \beta \delta_c \sqr{\nabla_\lat^2 h(\vlat)  } }\vmu_c + \beta  \frac{1}{\pi_c} \left( \nabla_{\mu_c}  \elbofinal  \right)\\
    &\leftarrow  \sqr{\vSigma_c^{\textrm{(new)}} }^{-1} \vmu_c + \beta  \frac{1}{\pi_c} \left( \nabla_{\mu_c}  \elbofinal  \right) 
\end{align*}
This gives the following update (by using the stochastic gradients):
\begin{align*}
    \vmu_c^{\textrm{(new)}} \leftarrow  \vmu_c - \beta \delta_c \vSigma_c^{\textrm{(new)}}  \nabla_z h(\vlat)
\end{align*}

%

\subsection{Natural Gradients for $q(\mix)$}
Now, we give the update for $q(\mix|\vvarpar_\mix)$.
Its natural parameter and expectation parameter are
\begin{align*}
\vvarpar_\mix &= \crl{ \log \frac{\pi_c}{\pi_K} }_{c=1}^{K-1}
\quad\quad\quad
\vvarmean_\mix = \crl{ \Unmyexpect{q(\mix)} \sqr{ \mathbb{I}_c(\mix) } }_{c=1}^{K-1} = \{ \pi_c \}_{c=1}^{K-1}
\end{align*}
To derive the gradients, we note that only $q(\vlat)$ depends on $\vpi$ since the model does not contain this as a parameter. Therefore, we need the gradient of the variational approximation which can be written as follows:
\begin{align*}
   \nabla_{\pi_c} q(\vlat) = \nabla_{\pi_c} \sum_{k=1}^K \pi_k q(\vlat|\mix=k) = q(\vlat|\mix=c) - q(\vlat|\mix=K).
\end{align*} where $q(\vlat|\mix=c)=q(\vlat|\vvarpar_c)=\gauss(\vlat|\vmu_c,\vSigma_c)$.
The second term appears in the above expression because  $\pi_K=1-\sum_{c=1}^{K-1} \pi_c$.

For the convenience of our derivation, we will separate the lower bound into terms that depend on $q(\vlat)$ and the rest of the terms, as shown below:
\begin{align*}
\nabla_{\pi_c} \elbofinal (\vvarpar ) &=\nabla_{\pi_c}   \Unmyexpect{q(\lat)}\sqr{  \sum_{n=1}^N \log p(\mathcal{D}_n|\vlat) + \log p(\vlat) - \log q(\vlat) }\\
&=  \int \underbrace{\nabla_{\pi_c} q(\vlat)}_{q(\boldsymbol{\lat}|\mix=c) - q(\boldsymbol{\lat}|\mix=K)}  \sqr{ \sum_{n=1}^N \log p(\mathcal{D}_n|\vlat) + \log p(\vlat)  -\log q(\vlat)}  d\vlat - \underbrace{ \int q(\vlat) \nabla_{\pi_c} \log q(\vlat) d\vlat}_{0} \nonumber \\
&= \int \sqr{ q(\vlat|\mix=c) - q(\vlat|\mix=K) } \sqr{ \sum_{n=1}^N \rnd{ \log p(\mathcal{D}_n|\vlat) +  \log \frac{p(\vlat)}{q(\vlat)} }} d\vlat   \\
&= \int q(\vlat) \sqr{ \frac{q(\vlat|\mix=c)}{q(\vlat)} - \frac{q(\vlat|\mix=K)}{q(\vlat)} } \sqr{  \sum_{n=1}^N \rnd{ \log p(\mathcal{D}_n|\vlat) + \log \frac{p(\vlat)}{q(\vlat)} }} d\vlat   \\
&= \Unmyexpect{q(z)} \Big[ \left( \delta_c - \delta_K \right) \big[  \underbrace{\sum_{n=1}^N \rnd{ \log p(\mathcal{D}_n|\vlat) +  \log \frac{p(\vlat)}{q(\vlat)} }}_{-h(\boldsymbol{\lat})} \big]  \Big]  \\
&\approx - (\delta_c - \delta_K) h(\vlat)
\end{align*}
where $\vlat$ is a sample from $q(\vlat)$.

Using this, we can perform the following NGD update:
\begin{align*}
   \log \rnd{ \pi_{c}/\pi_K} &\leftarrow  \log \rnd{ \pi_{c}/\pi_K}  - \beta  (\delta_{c} - \delta_{K})  h(\vlat)
\end{align*}

\subsection{Updates for Unnormalized Likelihoods}
When $p(\mathcal{D},\vlat)=\frac{\hat{p}(\mathcal{D},\vlat)}{\mathcal{C}}$ is unnormalized, where $\mathcal{C}$ is the normalizing constant, the NGD updates are still valid.
We define the following functions.
\begin{align*}
 \hat{h}(\vlat)&:=-\sqr{ \log  \hat{p}(\mathcal{D},\vlat) - \log q(\vlat)} \\
 h(\vlat)&:=\hat{h}(\vlat) + \log \mathcal{C} 
\end{align*}

It is easy to see that  the update for $q(\vlat|\mix)$ remains the same since $\nabla_\lat h(\vlat)=\nabla_\lat \hat{h}(\vlat)$ and $\nabla_\lat^2 h(\vlat)=\nabla_\lat^2 \hat{h}(\vlat)$.

For the update for $q(\mix)$, we use the following expression.
\begin{align*}
\nabla_{\pi_c} \elbofinal (\vvarpar ) &=\nabla_{\pi_c}   \Unmyexpect{q(\lat)}\sqr{ \log \hat{p}(\mathcal{D},\vlat) -  \log \mathcal{C}  - \log q(\vlat) }\\
&=  \int \underbrace{\nabla_{\pi_c} q(\vlat)}_{q(\boldsymbol{\lat}|\mix=c) - q(\boldsymbol{\lat}|\mix=K)}  \sqr{ \log \hat{p}(\mathcal{D},\vlat) -  \log \mathcal{C}   -\log q(\vlat)}  d\vlat - \underbrace{ \int q(\vlat) \nabla_{\pi_c} \log q(\vlat) d\vlat}_{0} \nonumber \\
&=  \int \left(q(\vlat|\mix=c) - q(\vlat|\mix=K)\right)  \sqr{  \log \hat{p}(\mathcal{D},\vlat)   -\log q(\vlat)}  d\vlat
- \underbrace{\int \left(q(\vlat|\mix=c) - q(\vlat|\mix=K)\right)   \log \mathcal{C} d\vlat}_{0} \\
&= \Unmyexpect{q(z)} \Big[ \left( \delta_c - \delta_K \right)  \rnd{ \log \hat{p}(\mathcal{D},\vlat) -  \log q(\vlat) }   \Big]  \\
&\approx - (\delta_c - \delta_K) \hat{h}(\vlat)
\end{align*} where the second term in the third step is $0$ since $\mathcal{C}$ does not depend on $\vlat$ and $\int q(\vlat|\mix=c) d\vlat=\int q(\vlat|\mix=K) d\vlat = 1$.

\subsection{Extension to Finite Mixture of EFs}
The algorithm presented in Section \ref{sec:finiteef} can be extended to handle generic minimal EF components. We now present a general gradient estimator to do so. The update of $\pi_c$ remains unaltered, so we do not discuss them here. We only discuss how to update natural parameters $\vvarpar_c$ of $q(\vlat|\vvarpar_c)$.

The natural parameter and sufficient statistics are $\vvarpar_{c}$  and $\mathbb{I}_c(\mix) \vphi_{\lat}(\vlat)$ respectively. We wish to perform the following update:
$\vvarpar_{c} \leftarrow \vvarpar_{c} + \beta_\lat \nabla_{\varmean_{c}} \elbofinal (\vvarpar ) $.
In general, we can compute the gradient $\nabla \varmean_{c} \elbofinal (\vvarpar ) $ by computing the FIM of each component as shown below:
\begin{align*}
\nabla_{\varmean_{c}} \elbofinal (\vvarpar ) &=  \left( \nabla_{\varpar_{c}} \vvarmean_{c} \right)^{-1} \nabla_{\varpar_{c}} \elbofinal (\vvarpar ) =   \left( \nabla_{\varpar_{c}} \Unmyexpect{q(\mix,\lat)}\sqr{ \mathbb{I}_c(\mix) \vphi_{\lat}(\vlat) }  \right)^{-1} \nabla_{\varpar_{c}}\elbofinal (\vvarpar ),
\end{align*}
Both of these gradients can be obtained given $\nabla_{\varpar_\lat} \vlat$, where $\vlat$ is a sample from $q(\vlat)$ as shown below.
\begin{align*}
 \nabla_{\varpar_{c}} \Unmyexpect{q(\mix,\lat)}\sqr{ \mathbb{I}_c(\mix) \vphi_\lat(\vlat) } =&  \nabla_{\varpar_{c}} \int \pi_c q(\vlat|\mix=c) \vphi_\lat(\vlat) d\vlat 
= \int \pi_c q(\vlat|\mix=c) \sqr{\nabla_{\lat} \vphi_\lat(\vlat)} \sqr{ \nabla_{\varpar_{c}} \vlat } d\vlat \\  
 \nabla_{\varpar_{c}}   \elbofinal (\vvarpar ) =&  \int \pi_c \nabla_{\varpar_{c}} q(\vlat|\mix=c)  \sqr{ - h(\vlat)} d \vlat + \underbrace{  \int   q(\vlat) \sqr{\nabla_{\varpar_{c}} \log q(\vlat) } d\vlat}_{0}  \\
=&  \int \pi_c q(\vlat|\mix=c) \sqr{\nabla_{\lat} \left( - h(\vlat) \right) } \sqr{ \nabla_{\varpar_{c}} \vlat} d\vlat 
\end{align*}
If we assume that $q(\lat|\mix)$ is an univariate continuous exponential family distribution, we can use the implicit re-parameterization trick \citep{salimans2013fixed,figurnov2018implicit} to get the following gradient. 
\citet{wu-report} discuss the trick under a weaker assumption than \citet{salimans2013fixed,figurnov2018implicit}.

\begin{align*}
 \nabla_{\varpar_{c}} \lat = - \frac{ \nabla_{\varpar_c} Q_c(\lat|\vvarpar_c) } { q(\lat|\mix=c) },
\end{align*} where $Q_c(\cdot|\vvarpar_c)$ is the cumulative distribution function (CDF) of $q(\lat|\mix=c)$.
Therefore, we now can compute the required gradient as below:
\begin{align*}
 \nabla_{\varpar_{c}}   \elbofinal (\vvarpar )
=&  \int \pi_c q(\lat|\mix=c) \sqr{ - \nabla_{\lat} h(\lat) } \sqr{ - \frac{ \nabla_{\varpar_c} Q_c(\lat|\vvarpar_c) } { q(\lat|\mix=c) }} d\lat \\
=&  \Unmyexpect{q(\lat)} \sqr{ \pi_c \frac{ q(\lat|\mix=c) }{ q(\lat) } \sqr{ - \nabla_{\lat}   h(\lat)  } \sqr{ - \frac{ \nabla_{\varpar_c} Q_c(\lat|\vvarpar_c) } { q(\lat|\mix=c) }} } \\
=&  \Unmyexpect{q(\lat)} \sqr{  \frac{ \pi_c \nabla_{\varpar_c} Q_c(\lat|\vvarpar_c) }{ q(\lat) } \sqr{\nabla_{\lat}  h(\lat)  }  },
\end{align*}
and 
\begin{align*}
 \nabla_{\varpar_{c}} \Unmyexpect{q(\mix,\lat)}\sqr{ \mathbb{I}_c(\mix) \vphi_\lat(\lat) }
= & \int \pi_c q(\vlat|\mix=c) \nabla_{\lat} \sqr{\vphi_\lat(\lat)} \sqr{  - \frac{ \nabla_{\varpar_c} Q_c(\lat|\vvarpar_c) } { q(\lat|\mix=c) } } d\vlat  \\
= & \Unmyexpect{q(\lat)} \sqr{  \frac{ -\pi_c \nabla_{\varpar_c} Q_c(\lat|\vvarpar_c)}{q(\lat)}  \nabla_{\lat} \sqr{\vphi_\lat(\lat)}  }.
\end{align*}
This is not the most efficient way to compute NGDs, however, for the specific cases (e.g., Gaussian, exponential, inverse Gaussian), we can get simplifications whenever gradients with respect to the expectation parameters are easy to compute.

The Birnbaum-Saunders distribution, which is a finite mixture of inverse Gaussians, is presented in Appendix \ref{app:BS}.
This example is different from examples given in this section since we allow each mixing component takes a distinct but tied parametric form.

\subsection{Result for the Toy Example}
\label{sec:fmGauss_plot}
\begin{figure*}[t]
	\centering
  \includegraphics[width=0.95\linewidth]{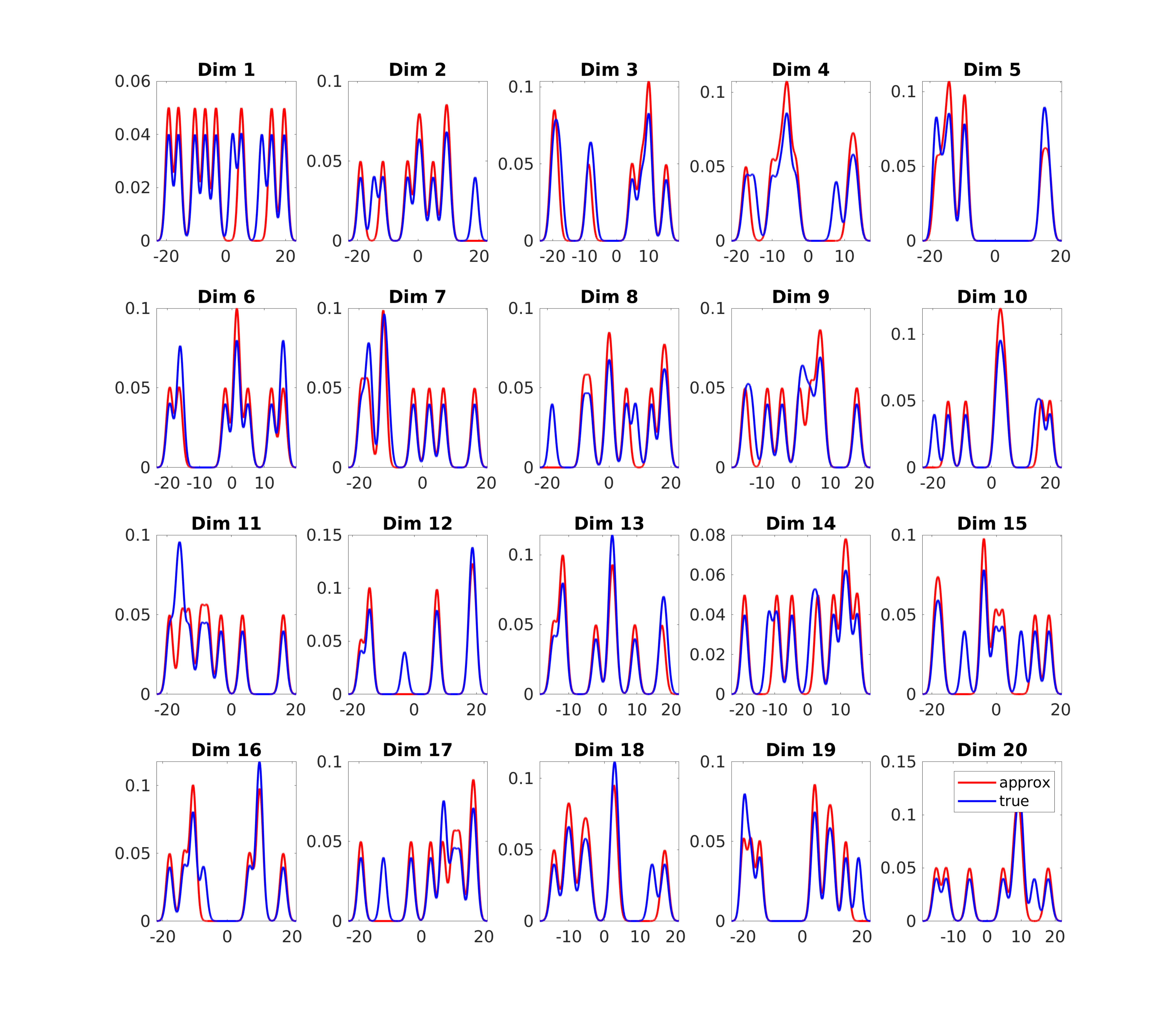}
   \caption{ This is a complete version of the leftmost figure in Figure \ref{figure:toy_examples}. The figure shows MOG approximation (with $K=20$) to fit an MOG model with 10 components in a 20 dimensional problem.}
	
	\label{figure:toy_examples_all}
\end{figure*}

See Figure \ref{figure:toy_examples_all}

   \section{Birnbaum-Saunders Distribution}
   \label{app:BS}
   Firstly, we denote the inverse Gaussian distribution by
$\IGauss(\lat;\mu,v)=\left( \frac{v}{2\pi \lat^3} \right)^{1/2} \exp\sqr{ -\frac{v \lat}{2\mu^2}  - \frac{v}{2\lat}  + \frac{v}{\mu}}$,
where  $\lat>0$, $v>0$, and $\mu>0$.
We consider the following mixture distribution.
\begin{align}
q(\mix) & = p^{\mathbb{I}(\mix=0)} (1-p)^{\mathbb{I}(\mix=1)} \nonumber \\
q(\lat|\mix) &= \mathbb{I}(\mix=0) \IGauss(\lat;\mu,v) + \mathbb{I}(\mix=1)\frac{\lat\IGauss(\lat;\mu,v)}{\mu}, \label{eq:bsdist}
\end{align} where $\frac{\lat}{\mu}\IGauss(\lat;\mu,v)$ is a normalized distribution since it is the distribution of $\lat=y^{-1}$ where $y$ is distributed by $\IGauss(y;\mu^{-1}, v/\mu^2)$.

As we can observe from Eq \eqref{eq:bsdist}, each mixing component has a distinct parametric form and variational parameters are shared between the components, which is different from examples discussed in Appendix \ref{app:finitemix}.
According to \citet{desmond1986relationship,jorgensen1991mixture,balakrishnan2019birnbaum},
the marginal distribution is known as
the Birnbaum-Saunders distribution \citep{birnbaum1969new} shown as below, where $p=\half$.
\begin{align*}
q(\lat|v,\mu)
&= \sum_{\mix} q(\mix) q(\lat|\mix)  \\
&= \half \crl{ \left( \frac{v}{2\pi \lat^3} \right)^{1/2} \exp\sqr{ -\frac{v \lat}{2\mu^2}  - \frac{v}{2\lat}  + \frac{v}{\mu} } \left(1+\frac{\lat}{\mu}\right) } \\
&= \frac{\sqrt{v}}{2 \sqrt{2\pi} }  \sqr{ \frac{1}{\lat^{3/2}} + \frac{1}{\mu \lat^{1/2}} } \exp\sqr{ -\frac{v \lat}{2\mu^2}  - \frac{v}{2\lat}  + \frac{v}{\mu} }  \\
&= \frac{\sqrt{v}}{2 \mu \sqrt{2\pi \mu}  } \sqr{ \left(\frac{\mu}{\lat}\right)^{1/2} + \left(\frac{\mu}{\lat}\right)^{3/2} } \exp\crl{ -\frac{v(\frac{\lat}{\mu}+\frac{\mu}{\lat}-2)}{2\mu}  }
\end{align*}

\begin{lemma}
   The joint distribution of the Birnbaum-Saunders distribution given in \eqref{eq:bsdist} can be written in a conditional EF form.
\end{lemma}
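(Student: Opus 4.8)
The plan is to verify directly that the pair $\big(q(\mix),\,q(\lat\mid\mix)\big)$ matches the template of Definition~\ref{def:cef}, by exhibiting a \emph{single} natural parameter $\vvarpar_\lat$ that is shared across the two mixing components, with only the base measure, the sufficient statistics, and the log-partition allowed to carry the $\mix$-dependence. The whole content of the lemma is that the tilting factor $\lat/\mu$ distinguishing the two components does not introduce a new natural-parameter coordinate.

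First I would dispatch the mixing distribution $q(\mix)$. Since $\mix\in\{0,1\}$ is Bernoulli with $q(\mix)=p^{\mathbb{I}(\mix=0)}(1-p)^{\mathbb{I}(\mix=1)}$, setting $\phi_\mix(\mix)=\mathbb{I}(\mix=1)$ and $\vvarpar_\mix=\log\tfrac{1-p}{p}$ gives $q(\mix)=\exp\sqr{\myang{\phi_\mix(\mix),\vvarpar_\mix}-A_\mix(\vvarpar_\mix)}$ with $A_\mix(\vvarpar_\mix)=\log(1+e^{\vvarpar_\mix})=-\log p$, a regular minimal EF with open natural-parameter set $\Omega_\mix=\real$.

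Next I would put the inverse-Gaussian factor into EF form. Reading off the exponent of $\IGauss(\lat;\mu,v)$, the sufficient statistics are $\vphi_\lat(\lat)=(\lat,\,1/\lat)$ and the natural parameter is $\vvarpar_\lat=\big(-v/(2\mu^2),\,-v/2\big)$, with base measure proportional to $\lat^{-3/2}$ and a log-partition $A_\lat(\vvarpar_\lat)$ absorbing the $\tfrac12\log v$ and $v/\mu$ terms (both expressible through $\vvarpar_\lat$). The crucial step is the $\mix=1$ component $\tfrac{\lat}{\mu}\IGauss(\lat;\mu,v)$: its tilting factor contributes $\log\lat$ to the log-density, which I fold into the base measure, and $-\log\mu$, which, being a function of $\vvarpar_\lat$ alone, I fold into the log-partition. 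Critically, $\vvarpar_\lat$ itself is \emph{unchanged}. Hence both components share the common $\vvarpar_\lat$ once we use the $\mix$-indexed quantities $h_\lat(\lat,\mix)\propto \lat^{-3/2+\mix}$, $\vphi_\lat(\lat,\mix)=(\lat,\,1/\lat)$, and $A_\lat(\vvarpar_\lat,\mix)=A_\lat(\vvarpar_\lat)+\mix\log\mu$.

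Finally I would confirm regularity: integrability over $(0,\infty)$ forces both coordinates of $\vvarpar_\lat$ to be negative, so $\Omega_\lat$ is the open quadrant $\{\eta_1<0,\,\eta_2<0\}$, and since the support of $\mix$ is finite the footnote of Definition~\ref{def:cef} guarantees $\Omega_\lat$ is open and independent of $\mix$. Assembling these pieces shows $q(\lat,\mix)=q(\lat\mid\mix)q(\mix)$ is exactly of the conditional-EF form \eqref{eq:qzw}. The only real obstacle is the bookkeeping of the $\lat/\mu$ factor: one must cleanly separate its $\log\lat$ part (routed to the base measure) from its $-\log\mu$ part (routed to the log-partition) and check that nothing leaks into a new natural-parameter coordinate; this is precisely what makes the representation a genuine \emph{conditional} EF with shared $\vvarpar_\lat$ rather than two unrelated exponential families.
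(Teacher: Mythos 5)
Your proposal is correct and follows essentially the same route as the paper: write the Bernoulli mixing distribution in EF form, read off the shared natural parameter $\{-v/(2\mu^2),-v/2\}$ with sufficient statistics $\{\lat,1/\lat\}$, and absorb the $\mix$-dependence of the tilting factor $\lat/\mu$ into the base measure (the $\log\lat$ part) and the log-partition (the $-\log\mu$ part). Your version is slightly more explicit about why no new natural-parameter coordinate appears and about the openness of $\Omega_\lat$, but the decomposition is identical to the paper's.
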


\begin{proof}
It is obvious that $q(\mix)$ is  Bernoulli distribution with $p=\half$, which is an EF distribution.
Now, we show that $q(\lat|\mix)$ is a conditional EF distribution as below.
\begin{align*}
q(\lat|\mix)
&= \exp\crl{ \mathbb{I}(\mix=0) \sqr{ -\frac{v \lat}{2\mu^2}  - \frac{v}{2\lat}  + \frac{v}{\mu} +\half \log\frac{v}{2\pi \lat^3} }  + \mathbb{I}(\mix=1) \sqr{ -\frac{v \lat}{2\mu^2}  - \frac{v}{2\lat}  + \frac{v}{\mu} +\half \log\frac{v}{2\pi \lat } -\log \mu }   } \\
&= \exp\crl{  -\frac{v \lat}{2\mu^2}  - \frac{v}{2\lat}  + \frac{v}{\mu} +\mathbb{I}(\mix=0) \sqr{\half \log\frac{v}{2\pi \lat^3} }  + \mathbb{I}(\mix=1) \sqr{\half \log\frac{v}{2\pi \lat } -\log \mu }   } \\
&= \frac{1}{\sqrt{2\pi}} \lat^{-3\mathbb{I}(\mix=0)/2 } \lat^{-\mathbb{I}(\mix=1)/2 } \exp\crl{  -\frac{v \lat}{2\mu^2}  - \frac{v}{2\lat}  + \frac{v}{\mu} + \half \log(v)  - \mathbb{I}(\mix=1) \log(\mu)  } 
\end{align*}
The natural parameters and sufficient statistics  are $\{ -\frac{v}{2\mu^2},  - \frac{v}{2} \}$ and $\{ \lat, \frac{1}{\lat} \}$ respectively.
\end{proof}

According to \citet{balakrishnan2019birnbaum}, the expectation parameters are
\begin{align*}
m_1 &=\Unmyexpect{q(\lat)} \sqr{ \lat} 
=  \mu +  \frac{\mu^2}{2v} \\
m_2 &= \Unmyexpect{q(\lat)} \sqr{ \lat^{-1}} 
=  \mu^{-1} +  \frac{1}{2v}
\end{align*}

The sufficient statistics, natural parameters, and expectation parameters are summarized below:
\begin{equation*}
\left[
\begin{array}{c}
\lat\\
\frac{1}{\lat}
\end{array}
\right]
\quad
\left[
\begin{array}{c}
-\frac{v}{2\mu^2}\\
- \frac{v}{2}
\end{array}
\right]
\quad
\left[
\begin{array}{c}
\mu +  \frac{\mu^2}{2v} \\
\mu^{-1} +  \frac{1}{2v}
\end{array}
\right]
\end{equation*}

\begin{lemma}
   The joint distribution  given in \eqref{eq:bsdist} is a minimal conditional-EF.
\end{lemma}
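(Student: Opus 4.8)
The plan is to verify the two one-to-one conditions in the definition of a minimal conditional-EF (Definition \ref{eq:mcef}) directly, using the explicit natural and expectation parameters already exhibited. First I would dispatch the mixing distribution: since $q(\mix)$ is a Bernoulli with the mixing probability \emph{fixed} at $p=\tfrac12$, the set $\Omega_\mix$ of free natural parameters is a single point, so the map $\vvarmean_\mix(\cdot)$ is vacuously one-to-one. (Equivalently, one may note that the Bernoulli is a minimal EF, so the claim would hold even if $p$ were treated as free.) This leaves the substantive part, namely showing that for the fixed $\vvarpar_\mix$ the map $\vvarmean_\lat(\cdot,\vvarpar_\mix):\Omega_\lat\to\mathcal{M}_\lat$ is one-to-one.

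Because the sufficient statistic $\vphi_\lat(\lat,\mix)=(\lat,1/\lat)$ does not depend on $\mix$, the expectation parameter reduces to the marginal moment vector $\vvarmean_\lat=(m_1,m_2)$ of the Birnbaum-Saunders distribution, which was already computed as $m_1=\mu+\tfrac{\mu^2}{2v}$ and $m_2=\mu^{-1}+\tfrac1{2v}$. Moreover, writing $\vvarpar_\lat=(\eta_1,\eta_2)=(-\tfrac{v}{2\mu^2},-\tfrac{v}{2})$, the change of variables $(\mu,v)\mapsto(\eta_1,\eta_2)$ is a bijection from $\{\mu>0,\,v>0\}$ onto $\{\eta_1<0,\,\eta_2<0\}$, invertible via $v=-2\eta_2$ and $\mu=\sqrt{\eta_2/\eta_1}$. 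Hence it suffices to prove that $(\mu,v)\mapsto(m_1,m_2)$ is injective on $\{\mu>0,\,v>0\}$.

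The key step is an explicit inversion of the moment map. Eliminating $\tfrac1{2v}$ between the two equations gives $m_1=\mu+\mu^2\bigl(m_2-\mu^{-1}\bigr)=\mu^2 m_2$, so that $\mu=\sqrt{m_1/m_2}$ is uniquely determined; substituting back yields $\tfrac1{2v}=m_2-\mu^{-1}$, which recovers $v$ uniquely. This shows the map is one-to-one (indeed a bijection onto its range $\mathcal{M}_\lat$), so $\vvarmean_\lat(\cdot,\vvarpar_\mix)$ is one-to-one. Combined with the first step, both conditions of Definition \ref{eq:mcef} hold, and the joint is therefore a minimal conditional-EF.

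I do not expect a serious obstacle here; the algebraic simplification $m_1=\mu^2 m_2$ is what makes the inversion immediate. The only points requiring care are (i) confirming that the parameter-to-natural map is a genuine bijection, so that injectivity in $(\mu,v)$ transfers to injectivity in $\vvarpar_\lat$, and (ii) checking that the recovered $v$ stays positive, i.e. that $m_2-\mu^{-1}=\tfrac1{2v}>0$ holds automatically on the valid range, so the inverse lands in the admissible parameter region rather than outside it.
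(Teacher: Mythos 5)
Your proposal is correct and follows essentially the same route as the paper: both arguments reduce the claim to injectivity of the moment map and establish it by exhibiting the explicit inverse $\mu=\sqrt{m_1/m_2}$, $v=1/(2(m_2-\sqrt{m_2/m_1}))$, with the natural-parameter-to-$(\mu,v)$ bijection handled by transitivity. Your extra check (ii) on positivity of the recovered $v$ is covered in the paper by the observation that $m_1 m_2>1$, which is equivalent to $m_2-\sqrt{m_2/m_1}>0$.
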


\begin{proof}
Since $\vvarpar_\mix$ is known in this case, we only need to show
there exists an one-to-one mapping between the natural parameter  and the expectation parameter.
Just by observing the parameters given above, we can see that there exists an one-to-one mapping between the natural parameter and $\crl{\mu,v}$ since $\mu>0$ and $v>0$.
Furthermore, we know that $m_1 m_2>1$ and $m_1>0$.
We can show that there also exists an one-to-one mapping between  $\crl{\mu,v}$ and the expectation parameter by noticing that 
\begin{align*}
\mu &= \sqrt{ m_1 /m_2 } \\
v &= \frac{1}{2 ( m_2 -\sqrt{m_2/m_1}) }
\end{align*}
Since one-to-one mapping is transitive, we know that mapping between natural and expectation parameters is one-to-one. Hence proved.
\end{proof}

Note that we can use the implicit re-parametrization trick to compute the gradient w.r.t. $\mu$ and $v$.
Furthermore, the expectation parameters $m_1$ and $m_2$ are functions of $\mu,v$ and the gradients can be obtained in terms of the gradient with respect to $\mu$ and $v$ by using the chain rule.

   \section{Student's t-distribution}
   \label{app:tdist}
      
\begin{lemma}
   The joint distribution $\gauss(\vlat|\vmu,\mix\vSigma)\mathcal{IG}(\mix|a,a)$, where $\vlat \in \mathcal{R}^d$, is a curved exponential family distribution. 
   \label{lemma:curvedTdist}
\end{lemma}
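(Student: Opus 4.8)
The plan is to rewrite the joint density $\gauss(\vlat|\vmu,\mix\vSigma)\,\mathcal{IG}(\mix|a,a)$ as a \emph{single} exponential family, read off its sufficient statistics and the induced natural parameters, and then exhibit one nonlinear relation that the natural parameters are forced to satisfy. Since the free parameters $(\vmu,\vSigma,a)$ then parameterize a curved submanifold of dimension strictly smaller than the number of sufficient statistics, the joint is a curved EF in exactly the sense used in the body of the paper (``number of free parameters is less than the number of natural parameters'').

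First I would expand $\log q(\vlat,\mix)=\log\gauss(\vlat|\vmu,\mix\vSigma)+\log\mathcal{IG}(\mix|a,a)$, using $|\mix\vSigma|=\mix^d|\vSigma|$ and expanding the quadratic $(\vlat-\vmu)^T(\mix\vSigma)^{-1}(\vlat-\vmu)$. Collecting every term that factors as (a function of $(\vlat,\mix)$) times (a function of $(\vmu,\vSigma,a)$) gives the sufficient statistics
\begin{align*}
   T(\vlat,\mix)=\crl{\ \frac{\vlat}{\mix},\ \ \frac{\vlat\vlat^T}{\mix},\ \ \frac{1}{\mix},\ \ \log\mix\ },
\end{align*}
with the constant base measure absorbing the parameter-free pieces. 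Using the trace inner product for the matrix statistic, the corresponding natural parameters are $\veta_1=\vSigma^{-1}\vmu$, $\veta_2=-\half\vSigma^{-1}$, $\eta_3=-\half\vmu^T\vSigma^{-1}\vmu-a$, and $\eta_4=-(a+1+\tfrac{d}{2})$ (here the $-\tfrac{d}{2}\log\mix$ from the Gaussian normalizer and the $-(a+1)\log\mix$ from the inverse-gamma combine). I would check the log-partition is finite for $\vSigma\succ0$ and $a>0$, confirming this is a bona-fide EF. Counting, the statistics (hence natural parameters) number $m=d+\tfrac{d(d+1)}{2}+2$, while the free parameters $(\vmu,\vSigma,a)$ number $k=d+\tfrac{d(d+1)}{2}+1=m-1$; for $d=1$ this is the stated $4$ against $3$.

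Next I would make the curvature explicit by eliminating $(\vmu,\vSigma,a)$. The first two equations give $\vSigma^{-1}=-2\veta_2$ and $\vmu=-\half\veta_2^{-1}\veta_1$, so $\vmu^T\vSigma^{-1}\vmu=-\half\veta_1^T\veta_2^{-1}\veta_1$, and eliminating $a$ between $\eta_3$ and $\eta_4$ yields the single relation
\begin{align*}
   \eta_3-\eta_4-\tfrac14\veta_1^T\veta_2^{-1}\veta_1-1-\tfrac{d}{2}=0 .
\end{align*}
This is a genuinely nonlinear constraint (through the term $\veta_1^T\veta_2^{-1}\veta_1$), so the image of the map $(\vmu,\vSigma,a)\mapsto(\veta_1,\veta_2,\eta_3,\eta_4)$ is a curved $k$-dimensional submanifold of the $m$-dimensional natural-parameter space rather than an affine subspace, which is precisely the defining feature that distinguishes a curved EF from a reparameterizable minimal one.

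The step needing most care is verifying that the ambient family really has dimension $m$, i.e.\ that the statistics $\{\vlat/\mix,\ \vlat\vlat^T/\mix,\ 1/\mix,\ \log\mix\}$ are affinely independent (no nonzero linear combination is almost-surely constant). Without this one could not exclude a lower-dimensional minimal representation, and the ``curved, not minimal'' conclusion would collapse. I would establish independence from the distinct asymptotic behaviour of the monomials in $\mix$ together with the genuine quadratic dependence on $\vlat$; given this, the count $k=m-1$ and the nonlinear constraint above complete the proof.
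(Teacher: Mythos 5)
Your proposal is correct and follows essentially the same route as the paper's proof: write the joint as a single exponential family with statistics $\crl{\vlat/\mix,\ \vlat\vlat^T/\mix,\ 1/\mix,\ \log\mix}$, observe that the induced natural parameters outnumber the free parameters $(\vmu,\vSigma,a)$ by one, and exhibit the nonlinear constraint (the paper phrases it as $\varpar_3=-\half\vmu^T\vSigma^{-1}\vmu$ being determined by $-\half\vSigma^{-1}$ and $\vSigma^{-1}\vmu$, which is your relation up to the affine reshuffling of the $1/\mix$ and $\log\mix$ coefficients). Your additional remark that one must check the affine independence of the sufficient statistics to certify the ambient dimension is a point the paper leaves implicit, but it does not change the argument.
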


\begin{proof}
The joint-distribution can be expressed as a four-parameter exponential form as shown below:
 \begin{align*}
  \gauss(\vlat|\vmu,\mix\vSigma)\mathcal{IG}(\mix|a,a) 
    &=  \mathrm{det} \rnd{2\pi \mix\vSigma}^{-\half} \exp \{-\half (\vlat-\vmu)^T \left(\mix\vSigma\right)^{-1}  (\vlat-\vmu) \} \frac{a^a}{\Gamma(a)} \mix^{-a-1} \exp\{ - \frac{a}{\mix} \} \\
  &=  \rnd{2\pi \mix}^{-d/2} \mix^{-1}\exp \{  -\half (\vlat-\vmu)^T \left(\mix\vSigma\right)^{-1}  (\vlat-\vmu) -\half \log \mathrm{det}(\vSigma)  \\
  & \quad \quad \quad  \quad \quad  \quad - \frac{a}{\mix}  - a \log\mix -\rnd{ \log \Gamma(a) - a\log a } \} \\
  &=  \rnd{2\pi \mix}^{-d/2} \mix^{-1}\exp \{  \myang{ -\half \vSigma^{-1},  \mix \vlat \vlat^T } + \myang{ \vSigma^{-1}\vmu,  \mix \vlat  } + \myang{ -\half \vmu^T \vSigma^{-1} \vmu,  \mix^{-1}} \nonumber  \\
  & \quad \quad \quad  \quad \quad  \quad + \myang{-a, \mix^{-1} +\log\mix } -\sqr{ \log \Gamma(a) - a\log a + \half \log \mathrm{det}(\vSigma)} \} \\
  &=  \rnd{2\pi \mix}^{-d/2} \mix^{-1}\exp \{  \myang{ \vVarpar_1,  \mix \vlat \vlat^T } + \myang{ \vvarpar_2,  \mix \vlat  } + \myang{ \varpar_3,  \mix^{-1}}   \\
  & \quad \quad \quad  \quad \quad  + \myang{\varpar_4, \mix^{-1} +\log\mix } -\sqr{ \log \Gamma(-\varpar_4) + \varpar_4 \log (-\varpar_4) - \half \log \mathrm{det}( -2 \vVarpar_1 )} \} ,
  \end{align*}
   where the following are the natural parameters:
   \begin{align*}
   \vVarpar_1:= -\half \vSigma^{-1}, \quad  \vvarpar_2 := \vSigma^{-1}\vmu, \quad \varpar_3 := -\half \vmu^T \vSigma^{-1} \vmu, \quad \varpar_4:=-a
   \end{align*}
 We can see that $\varpar_3$ is fully determined by $\vVarpar_1$ and $\vvarpar_2$, i.e.,
 \begin{align*}
\varpar_3 &=-\half \vmu^T \vSigma^{-1} \vmu = -\half \rnd{ \vSigma \vSigma^{-1}\vmu }^T \rnd{ \vSigma^{-1} \vmu } \nonumber = -\half \rnd{ \rnd{ -2 \vVarpar_1 }^{-1} \vvarpar_2}^T \vvarpar_2 
 \end{align*}
   In a minimal representation we can specify all four parameters freely, but in this case we have less degree of freedom. Therefore, this is a curved EF representation.
 
\end{proof}

Instead of using the above 4 parameter form, we can write the distribution in the conditional EF form given in Definition \ref{def:cef}.

\begin{lemma}
   The joint distribution of  Student's t-distribution  given in \eqref{eq:studentt} can be written in a conditional EF form.
\end{lemma}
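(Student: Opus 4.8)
The plan is to verify directly that the two factors of the joint $\gauss(\vlat|\vmu,\mix\vSigma)\,\mathcal{IG}(\mix|a,a)$ from \eqref{eq:studentt} match the template \eqref{eq:qzw} of Definition \ref{def:cef}, reading off the pieces listed in the T-distribution column of Table \ref{tab:examples}. Concretely, I would exhibit a base measure, sufficient statistic, natural parameter, and log-partition for the mixing distribution $q(\mix)=\mathcal{IG}(\mix|a,a)$, and separately for the conditional $q(\vlat|\mix)=\gauss(\vlat|\vmu,\mix\vSigma)$, checking in each case that the sufficient statistics carry no dependence on the natural parameters and that the natural parameters of the conditional carry no dependence on $\mix$.

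For the mixing distribution I would start from $\mathcal{IG}(\mix|a,a)=\frac{a^a}{\Gamma(a)}\mix^{-a-1}\exp(-a/\mix)$ and rewrite it as $\mix^{-1}\exp\sqr{a(-1/\mix-\log\mix)-(\log\Gamma(a)-a\log a)}$. This identifies $h_\mix(\mix)=\mix^{-1}$, $\phi_\mix(\mix)=-1/\mix-\log\mix$, $\varpar_\mix=a$, and $A_\mix(\varpar_\mix)=\log\Gamma(a)-a\log a$, exactly as in Table \ref{tab:examples}; since $a>0$ is an open constraint, this is a regular EF.

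For the conditional, the main step is to expand the Gaussian exponent $-\tfrac{1}{2\mix}(\vlat-\vmu)^T\vSigma^{-1}(\vlat-\vmu)$ and regroup the $\mix$-dependence. Collecting the $\vlat$-dependent terms gives $\myang{\vlat/\mix,\vSigma^{-1}\vmu}+\myang{\vlat\vlat^T/\mix,-\tfrac12\vSigma^{-1}}$, which is precisely $\myang{\vphi_\lat(\vlat,\mix),\vvarpar_\lat}$ with $\vphi_\lat(\vlat,\mix)=\{\vlat/\mix,\vlat\vlat^T/\mix\}$ and $\vvarpar_\lat=\{\vSigma^{-1}\vmu,-\tfrac12\vSigma^{-1}\}$. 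Everything remaining — the term $\tfrac{1}{2\mix}\vmu^T\vSigma^{-1}\vmu$ together with the normalizer $\tfrac12\log\det(2\pi\mix\vSigma)$ — depends only on $\vvarpar_\lat$ and $\mix$ and is absorbed into $A_\lat(\vvarpar_\lat,\mix)$, with base measure $h_\lat(\vlat,\mix)=1$. Since most of this algebra is already carried out in the proof of Lemma \ref{lemma:curvedTdist}, I would reuse that expansion and merely relabel the factorization rather than recompute it.

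The one point requiring care — and the reason the conditional-EF form is the \emph{right} representation here — is checking that the chosen natural parameters $\vvarpar_\lat=\{\vSigma^{-1}\vmu,-\tfrac12\vSigma^{-1}\}$ are free of $\mix$ while the sufficient statistics $\{\vlat/\mix,\vlat\vlat^T/\mix\}$ absorb all of it. This is exactly the effect of scaling the statistics by $1/\mix$: it isolates the $\mix$-dependence in $\vphi_\lat$ and $A_\lat$ and leaves a standard minimal-Gaussian natural parameter, so that conditioned on $\mix$ the distribution is a regular (indeed minimal) EF, fulfilling Definition \ref{def:cef}. I would close by noting that $\Omega_\mix=\{a>0\}$ and the Gaussian natural-parameter set are both open, so all requirements of the conditional-EF form are met.
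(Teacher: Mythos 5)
Your proposal is correct and follows essentially the same route as the paper's proof: both factor the joint into the inverse-gamma mixing distribution and the conditional Gaussian, read off the same sufficient statistics $\{\vlat/\mix,\vlat\vlat^T/\mix\}$ and natural parameters $\{\vSigma^{-1}\vmu,-\tfrac12\vSigma^{-1}\}$ (and $\phi_\mix(\mix)=-1/\mix-\log\mix$ with $\varpar_\mix=a$), and absorb the remaining $\mix$-dependent terms into $A_\lat(\vvarpar_\lat,\mix)$. Your added remarks on the openness of the parameter sets and the placement of constants in the base measure versus the log-partition are harmless refinements of the same argument.
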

\begin{proof}
   We can rewrite the conditional $q(\vlat|\mix)$ in an EF-form as follows:
\begin{align*}
q(\vlat|\mix) &= \gauss(\vlat|\vmu, \mix\vSigma) \\
&= (2\pi)^{-d/2} \exp(  \crl{ \mathrm{Tr} \left( -\half\vSigma^{-1} \mix^{-1}\vlat\vlat^T \right) + \vmu^T \vSigma^{-1} \mix^{-1}\vlat- \half(  \mix^{-1} \vmu^T \vSigma^{-1}\vmu +     \log \mathrm{det}(\mix\vSigma)  } )
\end{align*}
   The sufficient statistics $\vphi_\lat(\vlat,\mix)=\crl{ \mix^{-1} \vlat, \mix^{-1}\vlat\vlat^T} $ . The natural parameter is $\vvarpar_\lat=\crl{ \vSigma^{-1}\vmu, -\half \vSigma^{-1} }$. Since  $q(\mix)$ is a inverse Gamma distribution, which is a EF distribution as shown below, the joint $q(\vlat,\mix)$ is a conditional EF. The EF form of the inverse gamma distribution is shown below: 
\begin{align*}
q(\mix|a,a) = \mix^{-1} \exp( -( \log(\mix) + \frac{1}{\mix}  )a  - ( \log \Gamma(a) - a\log(a) )  )
\end{align*}
We can read the sufficient statistics $\vphi_\mix(\mix)=-\log(\mix)-\frac{1}{\mix}$ and the natural parameter  $\varpar_\mix=a$ from this form.
\end{proof}

Using the fact that $\Unmyexpect{q(\mix|a)}\sqr{ 1/\mix } = a/a = 1$, and $\Unmyexpect{q(\mix|a)}\sqr{ \log\mix }=\log a-\psi(a)$, we can derive the 
 expectation parameter shown below:
\begin{align*}
   \vm &:= \Unmyexpect{q(\mix|a)q(\lat|\mu, \mix\Sigma) } \sqr{ \mix^{-1} \vlat } = \vmu ,\\
   \vM &:= \Unmyexpect{q(\mix|a)q(\lat|\mu, \mix\Sigma) } \sqr{ \mix^{-1} \vlat \vlat^T } = \vmu \vmu^T + \vSigma, \\
   m_a &:= \Unmyexpect{q(\mix|a)} \sqr{-\frac{1}{\mix} -\log(\mix)} = -1 - \log a + \psi(a)
\end{align*}
The sufficient statistics, natural parameters, and expectation parameters are summarized below:
\begin{equation*}
\left[
\begin{array}{c}
-1/\mix -\log \mix\\
\vlat/\mix\\
\vlat\vlat^T/\mix
\end{array}
\right]
\quad
\left[
\begin{array}{c}
a\\
\vSigma^{-1}\vmu\\
-\half \vSigma^{-1}
\end{array}
\right]
\quad
\left[
\begin{array}{c}
-1-\log a +\psi(a)\\
\vmu\\
\vmu\vmu^T + \vSigma
\end{array}
\right]
\end{equation*}
The following lemma shows that the Student's t-distribution is an MCEF obtained by establishing one-to-one mapping between natural and expectation parameters.

\begin{lemma}
   The joint distribution of Student's t-distribution shown in \eqref{eq:studentt} is a minimal conditional-EF.
   \label{lemma:tdistmcefproof}
\end{lemma}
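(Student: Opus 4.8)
The plan is to verify the two conditions in Definition \ref{eq:mcef} separately: that the mixing map $\vvarmean_\mix(\cdot):\Omega_\mix\to\mathcal{M}_\mix$ is one-to-one, and that for every fixed $\vvarpar_\mix$ the conditional map $\vvarmean_\lat(\cdot,\vvarpar_\mix):\Omega_\lat\to\mathcal{M}_\lat$ is one-to-one. Both can be read off from the explicit parameter expressions already tabulated for this example, so the argument is mostly a matter of exhibiting the relevant inverse maps.

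For the mixing distribution, I would observe that the inverse-gamma factor $q(\mix|a,a)$, written in the exponential form given above, is a scalar one-parameter EF with natural parameter $a\in\Omega_\mix=(1,\infty)$, base measure $\mix^{-1}$, and sufficient statistic $\phi_\mix(\mix)=-1/\mix-\log\mix$. Since $\phi_\mix(\mix)$ is not almost-surely constant, there is no nonzero $c$ making $c\,\phi_\mix(\mix)$ constant, so the representation is minimal in the paper's sense of a minimal EF. Theorem \ref{thm:minimalEF} then gives immediately that the expectation map $a\mapsto m_a=-1-\log a+\psi(a)$ is one-to-one on $\Omega_\mix$. (Equivalently, one can check strict monotonicity directly: $dm_a/da=\psi'(a)-1/a>0$, which follows from the trigamma inequality $\psi'(a)>1/a$ for $a>0$.)

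For the conditional map, I would first note that $\vvarmean_\lat$ does not actually depend on $\vvarpar_\mix=a$: using $\Unmyexpect{q(\mix|a)}\sqr{\mix^{-1}}=1$, the tabulated expectation parameters collapse to $\vm=\vmu$ and $\vM=\vmu\vmu^T+\vSigma$, independently of $a$. This makes the quantifier ``$\forall\vvarpar_\mix$'' trivial and reduces the claim to injectivity of the single map $\{\vSigma^{-1}\vmu,\,-\half\vSigma^{-1}\}\mapsto\{\vmu,\,\vmu\vmu^T+\vSigma\}$ on the open set where $\vSigma\succ\mathbf{0}$. I would prove this by transitivity through the intermediate parametrization $(\vmu,\vSigma)$: the natural parameters recover $(\vmu,\vSigma)$ via $\vSigma^{-1}=-2(-\half\vSigma^{-1})$ and $\vmu=\vSigma(\vSigma^{-1}\vmu)$, while the expectation parameters recover them via $\vmu=\vm$ and $\vSigma=\vM-\vm\vm^T$. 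Both intermediate maps being bijections, their composite is one-to-one. Combining the two parts establishes that the joint is a minimal conditional-EF.

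The genuinely substantive point, and the one I would be most careful about, is the observation that the $\mix$-dependence of the sufficient statistics $\vphi_\lat(\vlat,\mix)=\{\vlat/\mix,\vlat\vlat^T/\mix\}$ disappears after taking expectations, so that the conditional expectation parameters reduce exactly to the standard Gaussian mean/second-moment pair; everything else is the standard minimal-EF bookkeeping for the Gaussian together with a one-line appeal to Theorem \ref{thm:minimalEF} (or the trigamma inequality) for the inverse-gamma factor.
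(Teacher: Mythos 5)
Your proof is correct and follows essentially the same route as the paper's: both arguments split the claim into the two one-to-one conditions of Definition~\ref{eq:mcef}, observe that the conditional expectation parameters $\{\vm,\vM\}=\{\vmu,\vmu\vmu^T+\vSigma\}$ do not depend on $a$ and reduce to the minimal Gaussian case, and reduce the mixing part to the positivity of $\psi'(a)-1/a$. The only (harmless) stylistic difference is that you lead with an appeal to Theorem~\ref{thm:minimalEF} via non-constancy of $\phi_\mix(\mix)=-1/\mix-\log\mix$, whereas the paper argues directly through the trigamma inequality of \citet{batir2005some}, which you also give as your parenthetical alternative.
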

\begin{proof}
The proof is rather simple. First we note that the expectation parameters for $q(\vlat|\mix)$ do not depend on $\vvarpar_\mix:= a$. In fact, the mapping between the last two natural and expectation parameter is one-to-one since they correspond to a Gaussian distribution which has a minimal representation.

The only thing remaining is to show that the mapping between $a$ and $\varmean_\mix(a) := -1-\log a +\psi(a)$ is one-to-one. Since $\nabla_a \varmean_\mix(a)$ is the Fisher information of $q(w)$, we can show this when $\nabla_a \varmean_\mix(a)>0$.  The gradient $\nabla_a \varmean_\mix(a) = \nabla_{a} \psi(a) -1/a$.
According to Eq. 1.4 in \citet{batir2005some}, we have $\nabla_{\alpha} \psi(\alpha) -  1/a - 1/(2a^2) >0$ when $a>0$.
Therefore, $\nabla_{a} \psi(a) - 1/a >0$ which establishes that the Fisher information is positive, therefore the distribution is a minimal EF. This completes the proof.
\end{proof}

\subsection{Derivation of the NGD Update}
Let's consider $q(\mix)=\mathcal{IG}(\mix|a,a)$ and $q(\vlat|\mix)=\gauss(\vlat|\vmu, \mix\vSigma)$.
We denote the log-likelihood for the $n$'th data point by $f_n(\vlat) :=-\log p(\data_n|\vlat)$ with a Student's t-prior on $\vlat$ expressed as a scale mixture of Gaussians
$p(\vlat,\mix)=\mathcal{IG}(\mix|a_0,a_0) \gauss(\vlat|\mathbf{0}, \mix \vI)$.
We use the lower bound defined in the joint-distribution $p(\data, \vlat, w)$: 
\begin{align*}
   \elbofinal (\vvarpar ) & = \Unmyexpect{q(\lat,\mix)} \sqr{\log p(\data,\vlat,\mix) - \log q(\data, \vlat, \mix)} \\ 
   & = \Unmyexpect{q(\lat,\mix)}\sqr{ \sum_{n=1}^{N} \underbrace{ \log p(\data_n|\vlat) }_{:= -f_n(\vlat)} +\log \frac{\gauss(\vlat| \mathbf{0}, \mix \vI)}{ \gauss(\vlat| \vmu, \mix\vSigma )} + \log \frac{ \mathcal{IG}(\mix|a_0,a_0)}{ \mathcal{IG}(\mix|a,a)}},
\end{align*}
Our goal is to compute the gradient of this ELBO with respect to the expectation parameters. 

Since the expectation parameters $\vvarmean_\lat$ only depend on $\vmu$ and $\vSigma$, we can write the gradient with respect to them using the chain rule (similar to the finite mixture of Gaussians case):
\begin{align*}
\nabla_{m}  \elbofinal (\vvarpar )  &=  \nabla_{\mu}  \elbofinal (\vvarpar )  - 2 \nabla_{\Sigma}  \elbofinal (\vvarpar ) \vmu \\ 
\nabla_{M} \elbofinal (\vvarpar )&= \nabla_{\Sigma} \elbofinal (\vvarpar) 
\end{align*}
These gradients of the lower bound can be obtained as follows:
\begin{align*}
\nabla_{\mu}  \elbofinal (\vvarpar )  &= -\sum_{n=1}^{N}\nabla_{\mu}  \Unmyexpect{q(\lat,\mix)}\sqr{ f_n(\vlat)} -  \vmu \\ 
\nabla_{\Sigma} \elbofinal (\vvarpar ) &= -\sum_{n=1}^{N}\nabla_{\Sigma} \Unmyexpect{q(\lat, \mix)}\sqr{f_n(\vlat)} - \half \vI + \half \vSigma^{-1} 
\end{align*}
Plugging these in the natural-gradient updates \eqref{eq:simplengvi}, 
 we get the following updates (we have simplified these in the same way as explained in Appendix \ref{sec:fmGauss}; more details in \citet{khan18a}):
\begin{align*}
\vSigma^{-1} &\leftarrow  (1-\beta)  \vSigma^{-1} + 2 \beta \sum_{n=1}^{N} \nabla_{\Sigma} \Unmyexpect{q(\lat,\mix)}\sqr{f_n(\vlat)} + \beta  \vI \\
\vmu &\leftarrow \vmu -\beta  \sum_{n=1}^{N}\vSigma \left( \nabla_{\mu}  \Unmyexpect{q(\lat)}\sqr{f_n(\vlat)} + \vmu \right) 
\end{align*}
To compute the gradients, the reparametrization trick \cite{kingma2013auto} can be used. However, we can do better by using the extended Bonnet's and Price's theorems for Student's t-distribution \citep{wu-report}.
Assuming that $f_n(\vlat)$ satisfies the assumptions needed for these two theorems, we can use the following stochastic-gradient approximations for the graidents:
\begin{align*}
\nabla_{\mu} \Unmyexpect{q(\mix)\mathcal{N}(\lat|\mu,\mix\Sigma)} \sqr{ f(\vlat) } =& \Unmyexpect{ q(\lat)} \sqr{ \nabla_{\lat} f(\vlat)}  \quad \approx \nabla_z f(\vz),  \\
\nabla_{\Sigma} \Unmyexpect{q(\mix)\mathcal{N}(\lat|\mu,\mix\Sigma)} \sqr{ f(\vlat) } =& \half \Unmyexpect{q(\lat)}\sqr{ u(\vlat) \nabla_{\lat}^2 f(\vlat)  } \quad \approx \half u(\vlat) \nabla_z^2 f(\vlat), \\
=& \half \Unmyexpect{q(\mix,\lat)}\sqr{ \mix \nabla_{\lat}^2 f(\vlat)  } \quad \approx \half \mix \nabla_z^2 f(\vlat),
\end{align*} 
where  $\vlat \in \mathcal{R}^d$ is generated from $q(\vlat)$, $\mix$ is generated from $q(\mix)$ , and
\begin{align*}
   u(\vlat) := \frac{a + \half \left( \vlat-\vmu \right)^T \vSigma^{-1} \left( \vlat-\vmu \right) }{(a+d/2-1)} .
\end{align*}
Using these gradient, we get the following update: 
\begin{align*}
   \vSigma^{-1} &\leftarrow (1-\beta ) \vSigma^{-1} +  \beta \sqr{ u(\vlat) \nabla_z^2 f_n(\vlat) + \vI/N },  \\
   \vmu &\leftarrow \vmu -\beta \vSigma \sqr{ \nabla_z f_n(\vlat) + \vmu /N }. 
\end{align*}

Now we derive the NGD update for $a$.
Recall that the natural parameter is $a$ and the expectation parameter is $m_a=-1 - \log a + \psi(a)$.
The gradient of the lower bound can be expressed as
\begin{align*}
\nabla_{m_a} \elbofinal (\vvarpar )
&=  - \sum_{n=1}^{N} \nabla_{m_a} \Unmyexpect{q(\lat,\mix)}\sqr{ f_n(\vlat)  } + a_0 - a
\end{align*}
which gives us the following update:
\begin{align}
a \leftarrow (1-\beta) a + \beta \left( a_0 - \sum_{n=1}^{N} \nabla_{m_a} \Unmyexpect{q(\lat,\mix)}\sqr{  f_n(\vlat)  }   \right)
\end{align} 
While the gradient with respect to the expectation parameter does not admit a closed-form expression, we can compute the gradient using the re-parametrization trick.
According to \eqref{eq:ng_update_fim}, the gradient $\nabla_{m_a} \Unmyexpect{q(\lat,\mix)}\sqr{  f_n(\vlat)  }$ can be computed as
\begin{align*}
\nabla_{m_a} \Unmyexpect{q(\lat,\mix)}\sqr{  f_n(\vlat)  } &= \left( \nabla_{a} m_a \right)^{-1}  \nabla_{a} \Unmyexpect{q(\lat,\mix)}\sqr{  f_n(\vlat)  } = \left( \nabla_{a} \Unmyexpect{q(\mix)}\sqr{  \phi_\mix(\mix)  }  \right)^{-1}  \nabla_{a} \Unmyexpect{q(\lat,\mix)}\sqr{  f_n(\vlat)  } 
\end{align*}
Note that $\nabla_{a} \Unmyexpect{q(\mix)}\sqr{  \phi_\mix(\mix)  }=\nabla_a ( m_a )$ has a closed-form expression. However we have found that using stochastic approximation for both the numerator and denominator works better. \citet{salimans2013fixed} show that such approximation reduces the variance but introduce a bit bias.  
We use the reparameterization trick for both terms.
Since $q(\mix)$ is (implicitly) re-parameterizable \citep{salimans2013fixed,figurnov2018implicit} , the gradient can be computed as
\begin{align*}
\nabla_{a} \Unmyexpect{q(\mix)}\sqr{  \phi_\mix(\mix)} &= -\int \mathcal{IG}(\mix|a,a) \left( \nabla_{\mix} \sqr{\mix^{-1} +\log \mix} \right) \left( \nabla_{a} \mix   \right) d\mix \quad \approx (\mix^{-2} - \mix^{-1}) \nabla_a \mix
\end{align*} where $\mix$ is generated from $\mathcal{IG}(\mix|a,a)$.
Similarly,
\begin{align*}
 \nabla_{a} \Unmyexpect{q(\lat,\mix)}\sqr{  f_n(\vlat)  }   &= \int \int q(\mix) \left( \nabla_{\mix} q(\vlat|\mix) \right) \left( \nabla_{a} \mix  \right) f_n(\vlat) d\mix d\vlat \\
&= \int \int \mathcal{IG}(\mix|a,a) \left( \nabla_{\mix} \gauss(\vlat|\vmu,\mix\vSigma) \right) \left( \nabla_{a} \mix  \right) f_n(\vlat) d\mix d\vlat
\end{align*}
For stochastic approximation, we generate $\mix$ from $q(\mix)$ and 
let $\hat{\vSigma}=\mix\vSigma$.
The above expression can be approximated as below.
\begin{align*}
 \nabla_{a} \Unmyexpect{q(\lat,\mix)}\sqr{  f_n(\vlat)  } 
&\approx  \int  \left( \nabla_{\mix} \gauss(\vlat|\vmu,\mix\vSigma) \right) \left( \nabla_{a} \mix  \right) f_n(\vlat)  d\vlat \\
   & = \int  \mathrm{Tr} \left( \vSigma \nabla_{\hat{\Sigma}} \gauss(\vlat|\vmu,\hat{\vSigma}) \right) \left( \nabla_{a} \mix  \right) f_n(\vlat)  d\vlat \\
   & = \nabla_{a} \mix \mathrm{Tr} \left(  \vSigma \nabla_{\hat{\Sigma}} \Unmyexpect{\gauss(\lat|\mu,\hat{\Sigma})} \sqr{ f_n(\vlat)}  \right) \\
   & = \frac{\nabla_{a} \mix}{2} \mathrm{Tr} \left(  \vSigma \Unmyexpect{\gauss(\lat|\mu,\hat{\Sigma})} \sqr{ \nabla_{\lat}^2 f_n(\vlat)}  \right),
\end{align*} where we use the Price's theorem $\nabla_{\hat{\Sigma}} \Unmyexpect{\gauss(\lat|\mu,\hat{\Sigma})} \sqr{ f_n(\vlat)} =\half \Unmyexpect{\gauss(\lat|\mu,\hat{\Sigma})} \sqr{ \nabla_{\lat}^2 f_n(\vlat)} $.

We then use stochastic approximation to get the desired update.
\begin{align*}
 \nabla_{a} \Unmyexpect{q(\lat,\mix)}\sqr{  f_n(\vlat)  } 
   & \approx \frac{\nabla_{a} \mix}{2} \mathrm{Tr} \left(  \vSigma \nabla_{\lat}^2 f_n(\vlat)  \right)
\end{align*} where $\vlat$ is generated from $q(\vlat)$ and $\mix$ is generated from $q(\mix)$.

Anther example is the symmetric normal inverse-Gaussian distribution, which can be found at Appendix \ref{app:sym_nig}.

   \section{Multivariate Skew-Gaussian Distribution}
   \label{app:skew_gauss}
      We consider the following variational distribution .
\begin{align}
 q(\vlat,\mix) =  \gauss(\vlat|\vmu +  |\mix| \valpha, \vSigma) \gauss(\mix|0,1)  \label{skew_pdf}
\end{align}
The marginal distribution is known as the multivariate skew Gaussian distribution \citep{azzalini2005skew} as shown in the following lemma.

\begin{lemma}
The marginal distribution $q(\vlat)$ is $2 \Phi\left( \frac{ \left( \vlat - \vmu \right)^T \vSigma^{-1} \valpha }{ \sqrt{ 1+\valpha^T\vSigma^{-1}\valpha} } \right)\gauss( \vlat| \vmu, \vSigma+\valpha\valpha^T )$, where  $\Phi(\cdot)$ is the CDF of the standard univariate Gaussian distribution.
\end{lemma}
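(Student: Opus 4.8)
The plan is to marginalize out $\mix$ directly. First I would exploit the evenness of $\gauss(\mix|0,1)$ together with the appearance of $|\mix|$ to fold the integral over the negative half-line onto the positive one: substituting $\mix \mapsto -\mix$ on $(-\infty,0)$ and using $|\mix|=\mix$ on the positive half shows that $q(\vlat) = 2\int_0^\infty \gauss(\vlat|\vmu + \mix\valpha, \vSigma)\,\gauss(\mix|0,1)\,d\mix$, where the absolute value has disappeared because the integration variable is now nonnegative.

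Next, I would reinterpret the integrand as the joint density $p(\vlat,\mix)$ of a linear-Gaussian model: $\mix \sim \gauss(0,1)$ and $\vlat\mid \mix \sim \gauss(\vmu+\mix\valpha,\vSigma)$. Writing $\vlat = \vmu+\mix\valpha+\vepsilon$ with $\vepsilon\sim\gauss(\vzero,\vSigma)$ independent of $\mix$, the pair $(\vlat,\mix)$ is jointly Gaussian with marginal $p(\vlat) = \gauss(\vlat|\vmu, \vSigma+\valpha\valpha^T)$, cross-covariance $\Cov(\vlat,\mix)=\valpha$, and $\Var(\mix)=1$. I would then factor $p(\vlat,\mix)=p(\vlat)\,p(\mix|\vlat)$ and pull the $\vlat$-marginal out of the integral, giving $q(\vlat) = 2\,\gauss(\vlat|\vmu,\vSigma+\valpha\valpha^T)\,\Pr(\mix\geq 0\mid \vlat)$. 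Since $\mix\mid\vlat$ is univariate Gaussian, the tail probability is $\Pr(\mix\geq 0\mid\vlat)=\Phi\big(\myexpect[\mix|\vlat]/\sqrt{\Var(\mix|\vlat)}\,\big)$, and the standard Gaussian-conditioning formulas give $\myexpect[\mix|\vlat]=\valpha^T(\vSigma+\valpha\valpha^T)^{-1}(\vlat-\vmu)$ and $\Var(\mix|\vlat)=1-\valpha^T(\vSigma+\valpha\valpha^T)^{-1}\valpha$.

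The main work, and the only nontrivial step, is showing that this argument of $\Phi$ equals the one in the claim. I would apply the Sherman--Morrison identity $(\vSigma+\valpha\valpha^T)^{-1}=\vSigma^{-1}-\vSigma^{-1}\valpha\valpha^T\vSigma^{-1}/(1+\valpha^T\vSigma^{-1}\valpha)$. Writing $\beta := \valpha^T\vSigma^{-1}\valpha$, this collapses $\valpha^T(\vSigma+\valpha\valpha^T)^{-1}$ to $\valpha^T\vSigma^{-1}/(1+\beta)$ and $\Var(\mix|\vlat)$ to $1/(1+\beta)$. Substituting into the ratio, the factor $(1+\beta)$ in the numerator and the $\sqrt{1+\beta}$ in the denominator combine to yield exactly $(\vlat-\vmu)^T\vSigma^{-1}\valpha/\sqrt{1+\valpha^T\vSigma^{-1}\valpha}$ (using that $\valpha^T\vSigma^{-1}(\vlat-\vmu)$ is a scalar and $\vSigma^{-1}$ is symmetric). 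Collecting the leading factor of $2$ then gives the stated skew-Gaussian density. I expect the Sherman--Morrison simplification to be the crux; the folding argument and the Gaussian-conditioning step are routine, and the only subtlety to check is that $\Var(\mix|\vlat)=1/(1+\beta)>0$, which holds since $\vSigma\succ\vzero$.
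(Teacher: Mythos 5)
Your proof is correct and follows essentially the same route as the paper's: fold the integral onto $[0,\infty)$ using the evenness of $\gauss(\mix|0,1)$, factor the integrand as $\gauss(\vlat|\vmu,\vSigma+\valpha\valpha^T)$ times the univariate Gaussian conditional of $\mix$ given $\vlat$, and evaluate the resulting half-line integral as $\Phi$ of the conditional mean over the conditional standard deviation. The only difference is cosmetic: the paper obtains the conditional mean $(\vlat-\vmu)^T\vSigma^{-1}\valpha/(1+\valpha^T\vSigma^{-1}\valpha)$ and variance $(1+\valpha^T\vSigma^{-1}\valpha)^{-1}$ by completing the square in $\mix$ directly, whereas you reach the same quantities via the joint-Gaussian conditioning formulas followed by Sherman--Morrison.
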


\begin{proof}
The marginal distribution $\vlat$ is 
 \begin{align*}
q(\vlat)
=&   2 \int_{0}^{+\infty}  \gauss(\mix|0,1)  \gauss(\vlat|\vmu+ \mix \valpha,\vSigma)  d\mix \\
=&   2 \int_{0}^{+\infty}  \gauss(\mix|0,1)  \gauss(\vlat- \mix \valpha|\vmu,\vSigma)  d\mix 
\end{align*} 
By grouping terms related to $\mix$ together and completing a Gaussian form for $\mix$, we obtain the following expression 
\begin{align*}
q(\vlat)
=& 2 \int_{0}^{+\infty} \gauss( \mix| \frac{ \left( \vlat - \vmu \right)^T \vSigma^{-1} \valpha }{1+\valpha^T\vSigma^{-1}\valpha} , \left( 1+\valpha^T \vSigma^{-1} \valpha \right)^{-1} )
  \gauss( \vlat| \vmu, \vSigma+\valpha\valpha^T ) d\mix \\
  =& 2 \Phi\left( \frac{ \left( \vlat - \vmu \right)^T \vSigma^{-1} \valpha }{ \sqrt{ 1+\valpha^T\vSigma^{-1}\valpha} } \right)\gauss( \vlat| \vmu, \vSigma+\valpha\valpha^T ) 
\end{align*} where we move from the first step to the second step using the fact that
\begin{align*}
 \int_{0}^{+\infty} \gauss( \mix|u,\sigma^2) d\mix &= \int_{ -u/\sigma }^{+\infty} \gauss( \mix'|0,1) d\mix' \\
 &= 1 - \Phi(-u/\sigma )\\
 &= \Phi(u/\sigma).
\end{align*}

\end{proof}

\begin{lemma}
 The joint distribution of skew-Gaussian  distribution given at Eq. \eqref{skew_pdf} can be written in a conditional EF form.
   \label{lemma:jointcef_skew}
\end{lemma}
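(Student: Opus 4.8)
The plan is to verify directly that the joint \eqref{skew_pdf} meets the two requirements of Definition \ref{def:cef}: that $q(\mix)$ is a regular EF, and that, conditioned on $\mix$, the factor $q(\vlat|\mix)$ is a regular EF whose sufficient statistics and log-partition are allowed to depend on $\mix$. The mixing factor is immediate. Since $q(\mix) = \gauss(\mix|0,1) = (2\pi)^{-1/2}\exp(-\half\mix^2)$, I would read off the base measure $h_\mix(\mix) = (2\pi)^{-1/2}$, the sufficient statistics $\vphi_\mix(\mix) = \{\mix, \mix^2\}$, the fixed natural parameters $\vvarpar_\mix = \{0, -\half\}$, and $A_\mix(\vvarpar_\mix) = 0$, matching the Skew-Gaussian column of Table \ref{tab:examples}. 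This is a valid (fixed-parameter) regular EF.

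The substantive step is to rewrite the conditional $q(\vlat|\mix) = \gauss(\vlat|\vmu + |\mix|\valpha, \vSigma)$ in EF form. First I would expand the quadratic in the log-density,
\begin{align*}
   -\half(\vlat - \vmu - |\mix|\valpha)^T\vSigma^{-1}(\vlat - \vmu - |\mix|\valpha),
\end{align*}
and group the result by its dependence on $\vlat$. The three pieces that are not constant in $\vlat$ are $-\half\vlat^T\vSigma^{-1}\vlat = \myang{-\half\vSigma^{-1}, \vlat\vlat^T}$, the linear term $\myang{\vSigma^{-1}\vmu, \vlat}$, and the cross term $|\mix|\,\myang{\vSigma^{-1}\valpha, \vlat}$. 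Reading these off gives the sufficient statistics $\vphi_\lat(\vlat,\mix) = \{\vlat, |\mix|\vlat, \vlat\vlat^T\}$ and natural parameters $\vvarpar_\lat = \{\vSigma^{-1}\vmu, \vSigma^{-1}\valpha, -\half\vSigma^{-1}\}$, exactly as tabulated. Everything else — namely $(2\pi)^{-d/2}$ as the base measure $h_\lat(\vlat,\mix)$, together with $\half\log\mathrm{det}\,\vSigma$ and the $\mix$-dependent remainder $\half(\vmu + |\mix|\valpha)^T\vSigma^{-1}(\vmu + |\mix|\valpha)$ — is independent of $\vlat$ and is absorbed into the log-partition $A_\lat(\vvarpar_\lat, \mix)$. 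Since for each fixed $\mix$ this is a genuine multivariate Gaussian in $\vlat$, it is a regular EF, with the conditioning on $\mix$ entering only through $|\mix|$ in the statistic $|\mix|\vlat$ and through $A_\lat$.

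I expect the only real subtlety to be conceptual rather than computational: one must notice that the cross term forces the sufficient statistic $|\mix|\vlat$ to carry an explicit dependence on $\mix$, which would be disallowed for an ordinary EF but is precisely what Definition \ref{def:cef} permits for $\vphi_\lat(\vlat,\mix)$. Correspondingly, the $\vmu$–$\valpha$ interaction $|\mix|\,\vmu^T\vSigma^{-1}\valpha$ is constant in $\vlat$ and therefore belongs in $A_\lat(\vvarpar_\lat,\mix)$ rather than generating a spurious extra statistic. Once these assignments are fixed, both factors match Definition \ref{def:cef} and the joint is a conditional EF, completing the proof. Note that minimality — the one-to-one property required for the MCEF classification — is a separate claim addressed in the companion Lemma \ref{lemma:skew_g_proof}, so I would not attempt it here.
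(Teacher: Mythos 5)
Your proposal is correct and follows essentially the same route as the paper's proof: expand the Gaussian exponent of $q(\vlat|\mix)$, read off the sufficient statistics $\crl{\vlat, |\mix|\vlat, \vlat\vlat^T}$ and natural parameters $\crl{\vSigma^{-1}\vmu, \vSigma^{-1}\valpha, -\half\vSigma^{-1}}$ with the $\mix$-dependent remainder absorbed into $A_\lat(\vvarpar_\lat,\mix)$, and note that $q(\mix)=\gauss(\mix|0,1)$ is an EF with known parameters. Your additional remarks on where the cross term $|\mix|\,\vmu^T\vSigma^{-1}\valpha$ belongs and on deferring minimality to Lemma \ref{lemma:skew_g_proof} are consistent with the paper's treatment.
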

\begin{proof}
   We first rewrite $q(\vlat|\mix)$ in a EF-form as follows:
\begin{align*}
q(\vlat|\mix) &= \gauss(\vlat|\vmu +  |\mix|\valpha, \vSigma) \\
&= (2\pi)^{-d/2} \exp(  \crl{ \mathrm{Tr} \left( -\half\vSigma^{-1} \vlat\vlat^T \right) + |\mix| \valpha^T \vSigma^{-1} \vlat + \vmu^T \vSigma^{-1} \vlat- \half(  (\vmu+|\mix|\valpha)^T \vSigma^{-1}(\vmu+|\mix|\valpha)  +   \log \mathrm{det}(\vSigma)  } )
\end{align*}
   The sufficient statistics $\vphi_\lat(\vlat,\mix)=\crl{  \vlat, |\mix|\vlat , \vlat\vlat^T} $ and the natural parameter  $\vvarpar_\lat=\crl{ \vSigma^{-1}\vmu, \vSigma^{-1}\valpha, -\half \vSigma^{-1} }$ can be read from the form. Since  $q(\mix)$ is a univariate Gaussian distribution with known parameters, which is a EF distribution. Therefore, the joint distribution $q(\vlat,\mix)$ is a conditional EF. 
 \end{proof}
 
Let $c=\sqrt{\frac{2}{\pi}}$.
Using the fact that $\Unmyexpect{q(\mix|0,1)}\sqr{ |\mix| } = c$, we can derive the 
 expectation parameter shown below:
\begin{align*}
   \vm &:= \Unmyexpect{\gauss(\mix|0,1)\gauss(\lat|\mu +|\mix|\alpha, \Sigma) } \sqr{  \vlat } = \vmu + c \valpha ,   \\
 \vm_{\alpha} &:= \Unmyexpect{\gauss(\mix|0,1) \gauss(\lat|\mu+|\mix|\alpha, \Sigma) } \sqr{ |\mix| \vlat } =  c \vmu +  \valpha, \\
   \vM &:= \Unmyexpect{\gauss(\mix|0,1) \gauss(\lat|\mu+|\mix|\alpha, \Sigma) } \sqr{ \vlat \vlat^T } =  \vmu \vmu^T + \valpha\valpha^T + c \left( \vmu\valpha^T + \valpha \vmu^T \right) +  \vSigma 
\end{align*}
The sufficient statistics, natural parameters, and expectation parameters are summarized below:
\begin{equation*}
\left[
\begin{array}{c}
\vlat\\
|\mix| \vlat\\
\vlat\vlat^T
\end{array}
\right]
\quad
\left[
\begin{array}{c}
\vSigma^{-1}\vmu\\
\vSigma^{-1}\valpha\\
-\half \vSigma^{-1}
\end{array}
\right]
\quad
\left[
\begin{array}{c}
\vmu + c \valpha \\
c \vmu + \valpha \\
\vmu \vmu^T + \valpha\valpha^T + c \left( \vmu\valpha^T + \valpha \vmu^T \right) +  \vSigma
\end{array}
\right]
\end{equation*}

The following lemma shows that the skew-Gaussian distribution is indeed a minimal conditional-EF.
\begin{lemma}
\label{lemma:skew_g_proof}
Multivariate skew Gaussians is a minimal conditional-EF.
\end{lemma}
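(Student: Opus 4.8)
The plan is to verify directly the two one-to-one conditions in the MCEF definition (Definition~\ref{eq:mcef}), reusing the natural and expectation parameters tabulated just above the lemma statement. Since here the mixing distribution is $q(\mix)=\gauss(\mix|0,1)$ with fixed, known parameters, the admissible set $\Omega_\mix$ is a single point, so the map $\vvarmean_\mix(\cdot):\Omega_\mix\to\mathcal{M}_\mix$ is trivially one-to-one and nothing further is needed for the $\mix$ block. As in the Student's-$t$ argument (Lemma~\ref{lemma:tdistmcefproof}), all the real work is therefore in showing that the conditional map $\vvarmean_\lat(\cdot,\vvarpar_\mix):\Omega_\lat\to\mathcal{M}_\lat$ is one-to-one; because $\vvarpar_\mix$ is fixed, the universal quantifier over $\vvarpar_\mix$ is vacuous.

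First I would exhibit $\vvarmean_\lat$ as a composition of two bijections and invoke transitivity of injectivity. The natural parameter $\vvarpar_\lat=\{\vSigma^{-1}\vmu,\,\vSigma^{-1}\valpha,\,-\half\vSigma^{-1}\}$ is in one-to-one correspondence with $\{\vmu,\valpha,\vSigma\}$: from the third block recover $\vSigma$ (valid because $\vSigma$ is positive-definite, so $\vSigma^{-1}$ determines $\vSigma$), and then left-multiply the first two blocks by $\vSigma$ to recover $\vmu$ and $\valpha$. Next I would show that $\{\vmu,\valpha,\vSigma\}$ is in one-to-one correspondence with the expectation parameter $\{\vm,\vm_{\alpha},\vM\}$, where $\vm=\vmu+c\valpha$, $\vm_{\alpha}=c\vmu+\valpha$, and $\vM=\vmu\vmu^T+\valpha\valpha^T+\vSigma+c(\vmu\valpha^T+\valpha\vmu^T)$ with $c=\sqrt{2/\pi}$.

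The crux is inverting the first two relations, which form the linear system
\begin{align*}
   \begin{bmatrix}\vm\\\vm_{\alpha}\end{bmatrix}
   = \left(\begin{bmatrix}1 & c\\ c & 1\end{bmatrix}\otimes\vI\right)\begin{bmatrix}\vmu\\\valpha\end{bmatrix}.
\end{align*}
The hard part is ensuring this is invertible, but it follows immediately once we observe $\det\begin{bmatrix}1 & c\\ c & 1\end{bmatrix}=1-c^2=1-2/\pi>0$, so $\vmu$ and $\valpha$ are uniquely recovered as $\vmu=(\vm-c\vm_{\alpha})/(1-c^2)$ and $\valpha=(\vm_{\alpha}-c\vm)/(1-c^2)$. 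With $\vmu,\valpha$ in hand, the covariance follows uniquely from $\vSigma=\vM-\vmu\vmu^T-\valpha\valpha^T-c(\vmu\valpha^T+\valpha\vmu^T)$, completing the second bijection. Composing the two maps shows $\vvarpar_\lat\mapsto\vvarmean_\lat$ is one-to-one, so the representation is a minimal conditional-EF. The only thing to watch is the non-degeneracy $c^2<1$; since $c^2=2/\pi\approx0.64$ this always holds, so the argument goes through.
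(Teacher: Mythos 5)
Your proposal is correct and follows essentially the same route as the paper's proof: reduce to the conditional block since $\vvarpar_\mix$ is fixed, pass through the intermediate parametrization $\{\vmu,\valpha,\vSigma\}$, invert the linear system using $1-c^2=1-2/\pi>0$ to recover $\vmu,\valpha$ (your formulas match the paper's \eqref{eq:skew_maping_1}--\eqref{eq:skew_maping_2}), then recover $\vSigma$ and conclude by transitivity. Your explicit $2\times 2$ determinant argument merely makes precise the invertibility the paper states by exhibiting the inverse formulas directly.
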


\begin{proof}
Since $\vvarpar_\mix$ is known in this case, we only need to show
there exists an one-to-one mapping between the natural parameter  and the expectation parameter.
Just by observing the parameters given above, we can see that there exists an one-to-one mapping between the natural parameter and $\crl{\vmu,\valpha,\vSigma}$.
We can show that there also exists an one-to-one mapping between  $\crl{\vmu,\valpha,\vSigma}$ and the expectation parameter by noticing that 
\begin{align}
\vmu  & = \frac{\vm- c \vm_{\alpha}}{1-c^2} \label{eq:skew_maping_1}\\ 
\valpha  & = \frac{ \vm_\alpha - c \vm}{1-c^2} \label{eq:skew_maping_2}\\ 
\vSigma  & = \vM  - \frac{ \vm\vm^T +\vm_{\alpha}\vm_{\alpha}^T - c \left( \vm_{\alpha}\vm^T +\vm\vm_{\alpha}^T \right)  }{1-c^2} \label{eq:skew_maping_3}
\end{align}
Since one-to-one mapping is transitive, we know that mapping between natural and expectation parameters is one-to-one. Hence proved.
\end{proof}

\subsection{Derivation of the NGD Update}
\label{app:skew_ngd_deriv}

Let's consider the variational approximation using the skew-Gaussian distribution $q(\vlat|\vvarpar)$.
We consider the following model with a Gaussian-prior $\gauss(\vlat|\mathbf{0},\delta^{-1} \vI)$ on $\vlat$,
where the log-likelihood for the $n$'th data point is denoted by $p(\data_n|\vlat)$.
\begin{align*}
p(\mathcal{D},\vlat) = \prod_{n=1}^{N} p(\mathcal{D}_n|\vlat) \gauss(\vlat|\mathbf{0}, \delta^{-1} \vI)
\end{align*}

We use the lower bound defined in the following distribution $p(\mathcal{D}, \vlat)$:
\begin{align*}
\mathcal{L}(\vvarpar) = \Unmyexpect{q(\lat|\varpar)}\sqr{  \sum_{n=1}^{N} \underbrace{ \log p(\data_n|\vlat)}_{:= -f_n(\vlat) } + \log \gauss(\vlat|\mathbf{0},\delta^{-1} \vI)  -\log q(\vz|\vvarpar) },
\end{align*} where $q(\vlat) = 2 \Phi\left( \frac{ \left( \vlat - \vmu \right)^T \vSigma^{-1} \valpha }{ \sqrt{ 1+\valpha^T\vSigma^{-1}\valpha} } \right)\gauss( \vlat| \vmu, \vSigma+\valpha\valpha^T )$ and recall that $\Phi(\cdot)$ denotes the CDF of the standard univariate normal distribution.

Our goal is to compute the gradient of this ELBO with respect to the expectation parameters. 

\subsection{Natural Gradient for $q(\vlat|\mix)$}
We do not need to compute these gradients with respect to the expectation parameters directly. The gradients can be computed in terms of $\crl{\vmu,\valpha, \vSigma}$.

Using the mapping at \eqref{eq:skew_maping_1}-\eqref{eq:skew_maping_3} and the chain rule, we can express the following gradients with respect to the expectation parameters in terms of the gradients with respect to $\vmu$, $\valpha$, and $\vSigma$.
\begin{align*}
 \nabla_{m} \mathcal{L} &=  \frac{1}{1-c^2} \nabla_{\mu} \mathcal{L} - \frac{c}{1-c^2} \nabla_{\alpha} \mathcal{L} - 2 \left( \nabla_\Sigma \mathcal{L}\right) \vmu \\
 \nabla_{m_{\alpha}} \mathcal{L} &= \frac{1}{1-c^2} \nabla_{\alpha} \mathcal{L} - \frac{c}{1-c^2} \nabla_{\mu} \mathcal{L} - 2 \left( \nabla_\Sigma \mathcal{L} \right)\valpha \\
 \nabla_{M} \mathcal{L} &= \nabla_{\Sigma}\mathcal{L} 
\end{align*}

By plugging the gradients into the update in \eqref{eq:simplengvi} and then re-expressing the update in terms of $\vmu$, $\valpha$, $\vSigma$ (we have simplified these in the same way as explained in Appendix \ref{sec:fmGauss}),
we obtain the natural gradient update in terms of $\vmu$, $\valpha$, and $\vSigma$.
\begin{align}
 \vSigma^{-1} & \leftarrow \vSigma^{-1} - 2 \beta \nabla_{\Sigma}\mathcal{L}  \label{eq:ngd_skew_sigma} \\
 \vmu & \leftarrow \vmu + \beta \vSigma \left( \frac{1}{1-c^2} \nabla_{\mu} \mathcal{L} - \frac{c}{1-c^2} \nabla_{\alpha} \mathcal{L} \right)  \label{eq:ngd_skew_mu} \\
 \valpha &\leftarrow \valpha + \beta \vSigma \left( \frac{1}{1-c^2} \nabla_{\alpha} \mathcal{L} - \frac{c}{1-c^2} \nabla_{\mu} \mathcal{L} \right)  \label{eq:ngd_skew_alpha}
\end{align}

Recall that  the lower bound is
\begin{align*}
\mathcal{L}(\vvarpar) = \Unmyexpect{q(z|\varpar)}\sqr{  - \sum_{n=1}^{N} f_n(\vlat)  + \underbrace{ \log \gauss(\vz|\mathbf{0},\delta^{-1} \vI)}_{\textrm{ prior }} \underbrace{ -\log q(\vz|\vvarpar) }_{ \textrm{ entropy }  }  },
\end{align*} 

For the prior term, there is a closed-form expression for gradient computation.
\begin{align*}
\Unmyexpect{q(\lat)} \sqr{ \log \gauss(\vlat|\mathbf{0},\delta^{-1} \vI) }
&= -\frac{d}{2} \log (2\pi) + \frac{d \delta}{2} - \frac{\delta}{2} \left( \valpha^T\valpha + 2 c  \vmu^T \valpha + \mathrm{Tr}\left(\vSigma  \right) +  \vmu^T\vmu \right) 
\end{align*}
It is easy to show that the gradients about the prior term are 
\begin{align*}
 \vg_\mu^{\text{prior}}  &=  - \delta \left( \vmu+ c \valpha \right) \\
 \vg_\alpha^{\text{prior}}  &=  - \delta \left( \valpha+ c \vmu \right)\\
 \vg_\Sigma^{\text{prior}}  &=   - \frac{\delta}{2} \vI
\end{align*}

For the entropy term, by \citet{contreras2012kullback,arellano2013shannon}, it can be expressed as follows.
\begin{align}
\Unmyexpect{q(\lat)} \sqr{ -\log q(\vlat|\vvarpar) }
 =   \frac{d}{2} (\log(2\pi)+1) + \half \log |\vSigma+\valpha\valpha^T|  -  2 \Unmyexpect{\mathcal{N}(\lat_2|0, \valpha^T \vSigma^{-1} \valpha)} \Big[  \Phi\left(  \lat_2\right)  \log  \left( \Phi\left( \lat_2 \right) \right) \Big] - \log(2) \label{eq:skew_ent_exact}
\end{align}

We can use the re-parametrization trick to approximate the gradients about the entropy term. However, the exact gradients usually works better.
Using the expression in \eqref{eq:skew_ent_exact}, the gradients of the entropy term are given as follows: 
\begin{align}
\vg_\mu^{\text{entropy}}  &= 0 \\
\vg_\alpha^{\text{entropy}} 
 &= \left( \vSigma + \valpha \valpha^T \right)^{-1} \valpha -  \Unmyexpect{\mathcal{N}\left(\lat_3|0, \alpha^T \Sigma^{-1} \alpha  / \left( 1+ \alpha^T \Sigma^{-1} \alpha \right) \right)} \sqr{ \frac{ \log  \left( \Phi\left(\lat_3 \right) \right)} {\sqrt{ 2 \pi \left( 1+\valpha^T \vSigma^{-1} \valpha \right)  } }    \frac{ 2 \lat_3  \vSigma^{-1} \valpha } { \valpha^T \vSigma^{-1} \valpha  } }  \label{eq:skew_ent_grad_alpha} \\
\vg_\Sigma^{\text{entropy}}
  &= \half \left( \vSigma + \valpha \valpha^T \right)^{-1}   +  \Unmyexpect{\mathcal{N}\left(\lat_3|0, \alpha^T \Sigma^{-1} \alpha / \left( 1+ \alpha^T \Sigma^{-1} \alpha \right) \right)} \sqr{ \frac{ \log  \left( \Phi\left( \lat_3 \right) \right)} {\sqrt{ 2 \pi \left( 1+\valpha^T \vSigma^{-1} \valpha \right)  } }    \frac{ \lat_3  \vSigma^{-1} \valpha \valpha^T \vSigma^{-1} } { \valpha^T \vSigma^{-1} \valpha  } }   \label{eq:skew_ent_grad_Sigma}
\end{align}
where the expectations involve 1d integrations, which can be computed by Gauss-Hermite quadrature.

The remaining thing is to compute the gradients about $\Unmyexpect{q(\lat)}\sqr{ f_n(\vlat) }$.
To compute the gradients, the reparametrization trick can be used. However, we can do better by using 
the extended Bonnet's and Price's theorems for skew-Gaussian distribution \citep{wu-report}.
Assuming that $f_n(\vlat)$ satisfies the assumptions needed for these two theorems, we obtain the following gradient expression:
\begin{align*}
\vg_{1}^n &:=  \nabla_{\mu} \Unmyexpect{q(\lat)} \sqr{ f_n(\vlat) } = \Unmyexpect{ q(\lat)} \sqr{ \nabla_{\lat} f_n(\vlat)} \approx   \nabla_{\lat} f_n(\vlat) \\
\vg_{2}^n &:= \nabla_{\alpha} \Unmyexpect{q(\lat)} \sqr{ f_n(\vlat) } \nonumber \\
&= \Unmyexpect{ q(\lat)} \sqr{ u(\vlat) \nabla_{\lat} f_n(\vlat)} +  v \Unmyexpect{ \gauss(\hat{\lat}|\mu, \Sigma)} \sqr{ \nabla_{\hat{\lat}} f_n(\hat{\vlat})}  \approx u(\vlat) \nabla_{\lat} f_n(\vlat) +  v \nabla_{\hat{\lat}} f_n(\hat{\vlat})\\
&= \Unmyexpect{ q(\mix,\lat)} \sqr{ |\mix| \nabla_{\lat} f_n(\vlat)}  \approx |\mix| \nabla_{\lat} f_n(\vlat) \\
\vg_{3}^n &:= 2 \nabla_{\Sigma} \Unmyexpect{q(\lat)} \sqr{ f_n(\vlat) } =  \Unmyexpect{q(\lat)}\sqr{ \nabla_{\lat}^2 f_n(\vlat)  } \approx  \nabla_{\lat}^2 f_n(\vlat)
\end{align*} 
where 
$v = \frac{c}{( 1+\valpha^T \vSigma^{-1} \valpha )}$, 
$u(\vlat)  := \frac{\left( \vlat-\vmu \right)^T \vSigma^{-1} \valpha }{1 + \valpha^T \vSigma^{-1} \valpha }$, and
$\mix  \sim \gauss(\mix|0,1), \,
  \hat{\vlat}  \sim \gauss(\hat{\vlat}|\vmu,\vSigma), \,
  \vlat = \hat{\vlat} + |\mix|\valpha$.

Putting together, 
we can express the gradients in the following form, which will be used for deriving the extended variational Adam update.
\begin{align}
 \nabla_{\mu} \mathcal{L}(\vvarpar) 
 &= - \sum_{n=1}^{N}  \left( \vg_{1}^n \underbrace{- \vg_\mu^{\text{entropy}}/N}_{0} \right) \underbrace{- \delta ( \vmu+c\valpha)}_{ \vg_{\mu}^{\textrm{prior}} }   \nonumber \\
 &= - \sum_{n=1}^{N} \underbrace{ \vg_{1}^n }_{ := \vg_{\mu}^n } - \delta  ( \vmu+c\valpha)   \label{eq:ske_g_update_mu}
 \\
 \nabla_{\alpha} \mathcal{L}(\vvarpar)&= - \sum_{n=1}^{N} \left( \underbrace{  \vg_{2}^n  -  \vg_\alpha^{\text{entropy}}/N }_{ := \vg_{\alpha}^n } \right) \underbrace{- \delta  ( \valpha+c\vmu)}_{ \vg_{\alpha}^{\textrm{prior}} } \label{eq:ske_g_update_alpha} \\
 \nabla_{\Sigma} \mathcal{L}(\vvarpar) &= - \half \sum_{n=1}^{N} \left( \vg_{3}^n - 2 \vg_\Sigma^{\text{entropy}}/N \right) \underbrace{-\frac{\delta}{2} \vI}_{  \vg_{\Sigma}^{\textrm{prior}} }  +  \half (\vSigma^{-1} -\vSigma^{-1}) \nonumber \\
&= - \half \sum_{n=1}^{N} \left( \underbrace{ \vg_{3}^n - 2 \vg_\Sigma^{\text{entropy}}/N + \vSigma^{-1}/N }_{:= \vg_{S}^n}\right) -\frac{\delta}{2} \vI  +  \half \vSigma^{-1} \label{eq:ske_g_update_sigma}
\end{align}

For stochastic approximation, we can sub-sampling a data point $n$ and use MC samples to approximate $\vg_{1}^n$, $\vg_{2}^n$, and $\vg_{3}^n$.
Plugging these stochastic gradients into 
\eqref{eq:ngd_skew_sigma}-\eqref{eq:ngd_skew_alpha}, we obtain the NGD update:
\begin{align*}
 \vSigma^{-1} & \leftarrow (1-\beta) \vSigma^{-1} + \beta (\delta \vI + N \vg_{S}^n) \\
 \vmu & \leftarrow  \vmu - \beta \vSigma (\frac{N}{1-c^2} ( \vg_{\mu}^n - c \vg_{\alpha}^n) + \delta \vmu) \\
 \valpha & \leftarrow  \valpha - \beta \vSigma (\frac{N}{1-c^2} ( \vg_{\alpha}^n - c \vg_{\mu}^n) + \delta \valpha)
\end{align*}

   \section{Multivariate Exponentially Modified Gaussian Distribution}
   \label{app:exp_gauss}
      We consider the following mixture distribution.
\begin{align*}
 q(\vlat,\mix) =  \gauss(\vlat|\vmu +  \mix \valpha, \vSigma) \expdist(\mix|1) 
\end{align*}
The marginal distribution is a multivariate extension of the exponentially modified Gaussian distribution \citep{grushka1972characterization} and the Gaussian minus exponential distribution \citep{carr2009saddlepoint} due to the following lemma.

\begin{lemma}
The marginal distribution $q(\vlat)$ ($\valpha \neq \mathbf{0}$) is
\begin{align*}
q(\vlat)
=  \frac{ \sqrt{2\pi}\mathrm{det}\left(2\pi\vSigma \right)^{-\half} }{  \sqrt{\valpha^T \vSigma^{-1}\valpha} } \Phi\left( \frac{ \left( \vlat-\vmu \right)^T \vSigma^{-1} \valpha -1 }{ \sqrt{ \valpha^T \vSigma^{-1} \valpha }  } \right)\exp\crl{  \half\sqr{ \frac{ \left( \left( \vlat-\vmu \right)^T \vSigma^{-1} \valpha -1 \right)^2  }{ \valpha^T \vSigma^{-1} \valpha } - \left(\vlat-\vmu\right)^T \vSigma^{-1} \left(\vlat-\vmu \right)  }  },
\end{align*} where  $\Phi(\cdot)$ is the CDF of the standard univariate Gaussian distribution.

In the univariate case, the marginal distribution becomes the exponentially modified Gaussian distribution when $\alpha>0$ and the Gaussian minus exponential distribution when $\alpha<0$.
\end{lemma}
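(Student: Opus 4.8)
The plan is to compute the marginal $q(\vlat) = \int_0^{\infty} \gauss(\vlat|\vmu + \mix\valpha, \vSigma)\,\expdist(\mix|1)\,d\mix$ directly, exploiting the fact that the integrand depends on $\mix$ only through a scalar quadratic exponent, so the whole integral collapses to a half-line Gaussian integral. First I would write $\expdist(\mix|1) = e^{-\mix}$ on $\mix>0$, set $\vr := \vlat - \vmu$, and expand $(\vr - \mix\valpha)^T\vSigma^{-1}(\vr - \mix\valpha) = \vr^T\vSigma^{-1}\vr - 2\mix\,\vr^T\vSigma^{-1}\valpha + \mix^2\,\valpha^T\vSigma^{-1}\valpha$. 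Collecting the $\mix$-dependent terms of the total log-integrand (including the $-\mix$ coming from the exponential density) gives a scalar quadratic $-\tfrac{a}{2}\mix^2 + (b-1)\mix$, where $a := \valpha^T\vSigma^{-1}\valpha > 0$ and $b := \vr^T\vSigma^{-1}\valpha$.

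Next I would complete the square in $\mix$, writing $-\tfrac{a}{2}\mix^2 + (b-1)\mix = -\tfrac{a}{2}\rnd{\mix - \tfrac{b-1}{a}}^2 + \tfrac{(b-1)^2}{2a}$, which pulls the $\mix$-independent factor $\exp\crl{(b-1)^2/(2a)}$ outside the integral. The remaining integral $\int_0^{\infty}\exp\crl{-\tfrac{a}{2}(\mix - \mu_0)^2}\,d\mix$ with $\mu_0 := (b-1)/a$ is recognized as a truncated Gaussian integral: normalizing against $\gauss(\mix|\mu_0, a^{-1})$ and using $\int_0^{\infty}\gauss(\mix|\mu_0,\sigma^2)\,d\mix = \Phi(\mu_0/\sigma)$ with $\sigma = a^{-1/2}$ identifies it as $\sqrt{2\pi/a}\,\Phi(\mu_0\sqrt{a}) = \sqrt{2\pi/a}\,\Phi\rnd{(b-1)/\sqrt{a}}$.

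Substituting $a$, $b$, and $\vr$ back and merging the leftover $-\tfrac{1}{2}\vr^T\vSigma^{-1}\vr$ with $(b-1)^2/(2a)$ into a single exponential then reproduces the stated closed form exactly, the prefactor $\sqrt{2\pi}\,\mathrm{det}(2\pi\vSigma)^{-1/2}/\sqrt{\valpha^T\vSigma^{-1}\valpha}$ being the original Gaussian normalizer $\mathrm{det}(2\pi\vSigma)^{-1/2}$ times the $\sqrt{2\pi/a}$ from the $\mix$-integral. For the univariate claims I would set $d=1$: when $\alpha > 0$, $\vlat$ is $\vmu + \mix\alpha$ plus independent Gaussian noise, i.e.\ the sum of a Gaussian and a positively scaled exponential, which is by definition the exponentially modified Gaussian; when $\alpha < 0$ the scaled-exponential contribution enters with a negative sign, giving the Gaussian-minus-exponential distribution. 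In each case I would just match the specialized density above against the known textbook forms.

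The only delicate point, and really a bookkeeping matter rather than a genuine obstacle, is tracking signs and limits in the truncated integral: the lower limit $0$ standardizes to $-\mu_0\sqrt{a}$, so the half-line mass is $\Phi\rnd{(b-1)/\sqrt{a}}$ and \emph{not} its complement, and the positivity $a = \valpha^T\vSigma^{-1}\valpha > 0$ (guaranteed by $\valpha\neq\mathbf{0}$ and $\vSigma\succ\mathbf{0}$) is what makes both the square completion and the Gaussian integral well defined. Everything else is routine algebra.
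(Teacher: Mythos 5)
Your proposal is correct and follows essentially the same route as the paper's proof: complete the square in $\mix$ inside the exponent, pull out the $\mix$-independent factor, and evaluate the remaining half-line Gaussian integral via $\int_0^{+\infty}\gauss(\mix|u,\sigma^2)\,d\mix=\Phi(u/\sigma)$. The sign/limit bookkeeping you flag is handled identically in the paper, and your explicit $a,b$ notation just makes the "grouping terms related to $\mix$" step more transparent.
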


\begin{proof}
The marginal distribution $\vlat$ is 
\begin{align}
&  q(\vlat|\vmu,\valpha,\vSigma) =    \int_{0}^{+\infty}  \expdist(\mix|0,1)  \gauss(\vlat|\vmu+\mix\valpha,\vSigma)  d\mix \nonumber \\
&\text{By grouping terms related to $\mix$ together and completing a Gaussian form for $\mix$, we obtain the following expression.} \nonumber \\
=& \frac{ \mathrm{det}\left(2\pi\vSigma \right)^{-\half} }{  \sqrt{\frac{\valpha^T \vSigma^{-1}\valpha}{2\pi}} } \int_{0}^{+\infty} \gauss( \mix| \frac{ \left( \vlat - \vmu \right)^T \vSigma^{-1} \valpha-1 }{\valpha^T\vSigma^{-1}\valpha} , \frac{1}{ \valpha^T \vSigma^{-1} \valpha } ) \exp\crl{  \half\sqr{ \frac{ \left( \left( \vlat-\vmu \right)^T \vSigma^{-1} \valpha -1 \right)^2  }{ \valpha^T \vSigma^{-1} \valpha } - \left(\vlat-\vmu\right)^T \vSigma^{-1} \left(\vlat-\vmu \right)  }  }
 d\mix \label{eq:exp_gauss_eq1}\\
=&  \frac{ \sqrt{2\pi}\mathrm{det}\left(2\pi\vSigma \right)^{-\half} }{  \sqrt{\valpha^T \vSigma^{-1}\valpha} } \Phi\left( \frac{ \left( \vlat-\vmu \right)^T \vSigma^{-1} \valpha -1 }{ \sqrt{ \valpha^T \vSigma^{-1} \valpha }  } \right)\exp\crl{  \half\sqr{ \frac{ \left( \left( \vlat-\vmu \right)^T \vSigma^{-1} \valpha -1 \right)^2  }{ \valpha^T \vSigma^{-1} \valpha } - \left(\vlat-\vmu\right)^T \vSigma^{-1} \left(\vlat-\vmu \right)  }  }
 \label{eq:exp_gauss_eq2}
\end{align}
where we move from Eq. \eqref{eq:exp_gauss_eq1}  to Eq. \eqref{eq:exp_gauss_eq2} using the fact that
$ \int_{0}^{+\infty} \gauss( \mix|u,\sigma^2) d\mix
= \Phi(u/\sigma)
$.
\end{proof}

Similar to the skew-Gaussian case, in this example, the sufficient statistics $\vphi_\lat(\vlat,\mix)=\crl{  \vlat, \mix\vlat , \vlat\vlat^T} $ and the natural parameter  $\vvarpar_\lat=\crl{ \vSigma^{-1}\vmu, \vSigma^{-1}\valpha, -\half \vSigma^{-1} }$ can be read from $q(\vlat|\mix)=\gauss(\vlat|\vmu+\mix\valpha,\vSigma)$.  The joint distribution $q(\vlat,\mix)$ is a conditional EF since  $q(\mix)$ is an exponential distribution with known parameters, which is an EF distribution. Likewise, we can show that the joint distribution is also a minimal conditional EF.
We can derive the expectation parameter shown below:
\begin{align*}
   \vm &:= \Unmyexpect{\expdist(\mix|1)\gauss(\lat|\mu +\mix\alpha, \Sigma) } \sqr{  \vlat } = \vmu +  \valpha ,   \\
 \vm_{\alpha} &:= \Unmyexpect{\expdist(\mix|1) \gauss(\lat|\mu+\mix\alpha, \Sigma) } \sqr{ \mix \vlat } =  \vmu + 2 \valpha, \\
   \vM &:= \Unmyexpect{\expdist(\mix|1) \gauss(\lat|\mu+\mix\alpha, \Sigma) } \sqr{ \vlat \vlat^T } =  \vmu \vmu^T +  2\valpha\valpha^T +  \left( \vmu\valpha^T + \valpha \vmu^T \right) +  \vSigma 
\end{align*}
The sufficient statistics, natural parameters, and expectation parameters are summarized below:
\begin{equation*}
\left[
\begin{array}{c}
\vlat\\
\mix \vlat\\
\vlat\vlat^T
\end{array}
\right]
\quad
\left[
\begin{array}{c}
\vSigma^{-1}\vmu\\
\vSigma^{-1}\valpha\\
-\half \vSigma^{-1}
\end{array}
\right]
\quad
\left[
\begin{array}{c}
\vmu +  \valpha \\
 \vmu +  2 \valpha \\
\vmu \vmu^T + 2 \valpha\valpha^T +  \left( \vmu\valpha^T + \valpha \vmu^T \right) +  \vSigma
\end{array}
\right]
\end{equation*}

\subsection{Derivation of the NGD Update}
As shown in Appendix \ref{app:skew_ngd_deriv}, we consider the variational approximation using the exponentially modified Gaussian distribution $q(\vlat|\vvarpar)$.
We consider the same model as Appendix \ref{app:skew_ngd_deriv} with a Gaussian-prior $\gauss(\vlat|\mathbf{0},\delta^{-1} \vI)$ on $\vlat$.
The lower bound is defined as below:
\begin{align*}
\mathcal{L}(\vvarpar) = \Unmyexpect{q(z|\varpar)}\sqr{  \sum_{n=1}^{N} \underbrace{ \log p(\data_n|\vlat)}_{:= -f_n(\vlat) } + \log \gauss(\vlat|\mathbf{0},\delta^{-1} \vI)  -\log q(\vlat|\vvarpar) },
\end{align*} where
$ q(\vlat) =\frac{ \sqrt{2\pi}\mathrm{det}\left(2\pi\vSigma \right)^{-\half} }{  \sqrt{\valpha^T \vSigma^{-1}\valpha} } \Phi\left( \frac{ \left( \vlat-\vmu \right)^T \vSigma^{-1} \valpha -1 }{ \sqrt{ \valpha^T \vSigma^{-1} \valpha }  } \right)\exp\crl{  \half\sqr{ \frac{ \left( \left( \vlat-\vmu \right)^T \vSigma^{-1} \valpha -1 \right)^2  }{ \valpha^T \vSigma^{-1} \valpha } - \left(\vlat-\vmu\right)^T \vSigma^{-1} \left(\vlat-\vmu \right)  }  }$
and recall that $\Phi(\cdot)$ denotes the CDF of the standard univariate normal distribution.

Our goal is to compute the gradient of this ELBO with respect to the expectation parameters. 

\subsection{Natural Gradient for $q(\vlat|\mix)$}
We do not need to compute these gradients with respect to the expectation parameters directly. The gradients can be computed in terms of $\crl{\vmu,\valpha, \vSigma}$.

Similarly, by the chain rule, we can express the following gradients with respect to the expectation parameters in terms of the gradients with respect to $\vmu$, $\valpha$, and $\vSigma$.
\begin{align*}
 \nabla_{m} \mathcal{L} &=  2 \nabla_{\mu} \mathcal{L} -  \nabla_{\alpha} \mathcal{L} - 2 \left( \nabla_\Sigma \mathcal{L}\right) \vmu \\
 \nabla_{m_{\alpha}} \mathcal{L} &=  \nabla_{\alpha} \mathcal{L} -  \nabla_{\mu} \mathcal{L} - 2 \left( \nabla_\Sigma \mathcal{L} \right)\valpha \\
 \nabla_{M} \mathcal{L} &= \nabla_{\Sigma}\mathcal{L} 
\end{align*}

By plugging the gradients into the update in \eqref{eq:simplengvi} and then re-expressing the update in terms of $\vmu$, $\valpha$, $\vSigma$,
we obtain the natural gradient update in terms of $\vmu$, $\valpha$, and $\vSigma$.
\begin{align}
 \vSigma^{-1} & \leftarrow \vSigma^{-1} - 2 \beta \nabla_{\Sigma}\mathcal{L} \label{eq:ngd_exp_sigma} \\
 \vmu & \leftarrow \vmu + \beta \vSigma \left( 2 \nabla_{\mu} \mathcal{L} -  \nabla_{\alpha} \mathcal{L} \right) \label{eq:ngd_exp_mu}\\
 \valpha &\leftarrow \valpha + \beta \vSigma \left(  \nabla_{\alpha} \mathcal{L} - \nabla_{\mu} \mathcal{L} \right)  \label{eq:ngd_exp_alpha}
\end{align}

Recall that  the lower bound is
\begin{align*}
\mathcal{L}(\vvarpar) = \Unmyexpect{q(z|\varpar)}\sqr{  - \sum_{n=1}^{N} f_n(\vlat)  + \underbrace{ \log \gauss(\vlat|\mathbf{0},\delta^{-1} \vI)}_{\textrm{ prior }} \underbrace{ -\log q(\vlat|\vvarpar) }_{ \textrm{ entropy }  }  },
\end{align*} 

For the prior term, there is a closed-form expression for gradient computation.
\begin{align*}
\Unmyexpect{q(\lat)} \sqr{ \log \gauss(\vlat|\mathbf{0},\delta^{-1} \vI) }
&= -\frac{d}{2} \log (2\pi) + \frac{d \delta}{2} - \frac{\delta}{2} \left( 2\valpha^T\valpha + 2  \vmu^T \valpha + \mathrm{Tr}\left(\vSigma  \right) +  \vmu^T\vmu \right) 
\end{align*}
It is easy to show that the gradients about the prior term are 
\begin{align*}
 \vg_\mu^{\textrm{prior}}  &=  - \delta \left( \vmu+  \valpha \right) \\
 \vg_\alpha^{\textrm{prior}}  &=  - \delta \left( 2 \valpha+  \vmu \right)\\
 \vg_\Sigma^{\textrm{prior}}  &=   - \frac{\delta}{2} \vI
\end{align*}

For the entropy term,  it can be expressed as follows.
\begin{align}
\Unmyexpect{q(\lat)}\sqr{ -\log q(\vlat) } 
= \half \crl{   \log \mathrm{det}\left(2\pi\vSigma\right)  + \log\left( \frac{ \valpha^T \vSigma^{-1} \valpha}{2\pi} \right) - \frac{1}{\valpha^T \vSigma^{-1} \valpha  } + \left( d+1 \right)   } - \Unmyexpect{\expdist(\mix|1)\gauss(\lat_2|\mix \sqrt{\valpha^T\vSigma^{1}\valpha},\, 1)}\sqr{ \log \phi\left( \lat_2 - \frac{ 1  }{\sqrt{\valpha^T\vSigma^{1}\valpha } } \right)   } \label{eq:exp_ent_exact}
\end{align}

Using the expression in \eqref{eq:exp_ent_exact}, the gradients of the entropy term are given as follows: 
\begin{align}
\vg_\mu^{\text{entropy}}  &= 0 \\
\vg_\alpha^{\text{entropy}} 
&= \frac{\vSigma^{-1}\valpha }{  \valpha^T \vSigma^{-1} \valpha }
+\frac{\vSigma^{-1}\valpha }{\left( \valpha^T \vSigma^{-1} \valpha \right)^2 }
- \Unmyexpect{\expdist(\mix|1)\gauss(\lat_2|0,1)}\sqr{ \frac{ \exp\left( - \frac{t^2}{2} - \log\phi(t) \right) }{\sqrt{2\pi} }   \left( \mix + \frac{1}{\valpha^T\vSigma^{-1}\valpha} \right) \left( \frac{ \vSigma^{-1}\valpha }{ \sqrt{\valpha^T\vSigma^{-1}\valpha} } \right) } \label{eq:exp_ent_grad_alpha} \\
\vg_\Sigma^{\text{entropy}} 
&=  \frac{\vSigma^{-1}}{2}
- \frac{\vSigma^{-1} \valpha \valpha^T \vSigma^{-1} }{2\valpha^T\vSigma^{-1}\valpha}
- \frac{\vSigma^{-1} \valpha \valpha^T \vSigma^{-1} }{2\left(\valpha^T\vSigma^{-1}\valpha\right)^2}
+ \Unmyexpect{\expdist(\mix|1)\gauss(\lat_2|0,1)}\sqr{ \frac{ \exp\left( - \frac{t^2}{2} - \log\phi(t) \right) }{\sqrt{2\pi} }   \left( \mix + \frac{1}{\valpha^T\vSigma^{-1}\valpha} \right) \left( \frac{ \vSigma^{-1} \valpha \valpha^T \vSigma^{-1} }{2 \sqrt{\valpha^T\vSigma^{-1}\valpha} } \right) } \label{eq:exp_ent_grad_Sigma}
\end{align}
where $t=\lat_2+ \frac{\mix \valpha^T\vSigma^{-1}\valpha- 1}{\sqrt{\valpha^T\vSigma^{-1}\valpha} }$
and the expectations involve 2d integrations, which can be computed by Gauss-Hermite quadrature and  Gauss-Laguerre quadrature.

The remaining thing is to compute the gradients about $\Unmyexpect{q(\lat)}\sqr{ f_n(\vlat) }$.
To compute the gradients, the reparametrization trick can be used. However, we can use
the extended Bonnet's and Price's theorems \citep{wu-report}.
Assuming that $f_n(\vlat)$ satisfies the assumptions needed for these two theorems, we obtain the following gradient expression:
\begin{align*}
\vg_{1}^n &:=  \nabla_{\mu} \Unmyexpect{q(\lat)} \sqr{ f_n(\vlat) } = \Unmyexpect{ q(\lat)} \sqr{ \nabla_{\lat} f_n(\vlat)} \approx   \nabla_{\lat} f_n(\vlat) \\
\vg_{2}^n &:= \nabla_{\alpha} \Unmyexpect{q(\lat)} \sqr{ f_n(\vlat) } \\
&= \Unmyexpect{ q(\lat)} \sqr{ u(\vlat) \nabla_{\lat} f_n(\vlat)} +  v \Unmyexpect{ \gauss(\hat{\lat}|\mu, \Sigma)} \sqr{ \nabla_{\hat{\lat}} f_n(\hat{\vlat})}  \approx u(\vlat) \nabla_{\lat} f_n(\vlat) +  v \nabla_{\hat{\lat}} f_n(\hat{\vlat})\\
&= \Unmyexpect{ q(\mix,\lat)} \sqr{ \mix \nabla_{\lat} f_n(\vlat)}  \approx \mix \nabla_{\lat} f_n(\vlat) \\
\vg_{3}^n &:= 2 \nabla_{\Sigma} \Unmyexpect{q(\lat)} \sqr{ f_n(\vlat) } =  \Unmyexpect{q(\lat)}\sqr{ \nabla_{\lat}^2 f_n(\vlat)  } \approx  \nabla_{\lat}^2 f_n(\vlat)
\end{align*} 
where 
$v = \frac{1}{(\valpha^T \vSigma^{-1} \valpha )}$, 
$u(\vlat)  := \frac{\left( \vlat-\vmu \right)^T \vSigma^{-1} \valpha - 1 }{ \valpha^T \vSigma^{-1} \valpha }$, and
$\mix  \sim \expdist(\mix|1), \,
  \hat{\vlat}  \sim \gauss(\hat{\vlat}|\vmu,\vSigma), \,
  \vlat = \hat{\vlat} + \mix\valpha$.

Putting together, 
we can express the gradients in the following form, which will be used for deriving the extended variational Adam update.
\begin{align}
 \nabla_{\mu} \mathcal{L}(\vvarpar) 
 &= - \sum_{n=1}^{N}  \left( \vg_{1}^n \underbrace{- \vg_\mu^{\text{entropy}}/N}_{0} \right) \underbrace{- \delta ( \vmu+\valpha)}_{ \vg_{\mu}^{\textrm{prior}} }   \nonumber \\
 &= - \sum_{n=1}^{N} \underbrace{ \vg_{1}^n }_{ := \vg_{\mu}^n } - \delta ( \vmu+\valpha)   \label{eq:exp_g_update_mu}
 \\
 \nabla_{\alpha} \mathcal{L}(\vvarpar)&= - \sum_{n=1}^{N} \left( \underbrace{ \vg_{2}^n  -  \vg_\alpha^{\text{entropy}}/N }_{ := \vg_{\alpha}^n }\right) \underbrace{- \delta (2 \valpha+\vmu)}_{ \vg_{\alpha}^{\textrm{prior}} } \label{eq:exp_g_update_alpha} \\
 \nabla_{\Sigma} \mathcal{L}(\vvarpar) &= - \half \sum_{n=1}^{N} \left( \vg_{3}^n - 2 \vg_\Sigma^{\text{entropy}}/N \right) \underbrace{-\frac{\delta}{2} \vI}_{  \vg_{\Sigma}^{\textrm{prior}} }  +  \half (\vSigma^{-1} -\vSigma^{-1}) \nonumber \\
&= - \half \sum_{n=1}^{N} \left( \underbrace{ \vg_{3}^n - 2 \vg_\Sigma^{\text{entropy}}/N + \vSigma^{-1}/N }_{:= \vg_{S}^n}\right) -\frac{\delta}{2} \vI  +  \half \vSigma^{-1} \label{eq:exp_g_update_sigma}
\end{align}

Similarly, for stochastic approximation, we can sub-sampling a data point $n$ and use MC samples to approximate $\vg_{1}^n$, $\vg_{2}^n$, and $\vg_{3}^n$.
Plugging these stochastic gradients into 
\eqref{eq:ngd_exp_sigma}-\eqref{eq:ngd_exp_alpha}, we obtain the NGD update:
\begin{align*}
 \vSigma^{-1} & \leftarrow (1-\beta) \vSigma^{-1} + \beta (\delta \vI + N \vg_{S}^n) \\
 \vmu & \leftarrow  \vmu - \beta \vSigma (N ( 2\vg_{\mu}^n -  \vg_{\alpha}^n) + \delta \vmu) \\
 \valpha & \leftarrow  \valpha - \beta \vSigma (N ( \vg_{\alpha}^n -  \vg_{\mu}^n) + \delta \valpha)
\end{align*}

  \section{Multivariate Normal Inverse-Gaussian Distribution}
   \label{app:nig}
      We consider the following mixture distribution \cite{barndorff1997normal}, which is a Gaussian variance-mean mixture distribution. For simplicity,  we assume $\lambda$ is known. 
\begin{align*}
 q(\mix,\vlat) = \gauss(\vlat|\vmu+\mix\valpha,\mix\vSigma) \IGauss(\mix|1,\lambda)
\end{align*}
where $\IGauss(\mix|1,\lambda)=\left(\frac{\lambda}{2\pi \mix^3} \right)^{\half}\exp\crl{ -\frac{\lambda}{2}\left(\mix+\mix^{-1}\right) + \lambda }$ denotes the inverse Gaussian distribution.

\begin{lemma}
The marginal distribution is
\begin{align*}
  q(\vlat) 
=\frac{\lambda^{\half}}{(2\pi)^{\frac{d+1}{2}}} \mathrm{det}\left(  \vSigma \right)^{-1/2} \exp\sqr{ \left(\vlat-\vmu \right)^T \vSigma^{-1} \valpha +\lambda } \frac{ 2 \mathcal{K}_{\frac{d+1}{2}} \left( \sqrt{ \left( \valpha^T \vSigma^{-1} \valpha  + \lambda \right) \left( \left(\vlat-\vmu \right)^T \vSigma^{-1} \left(\vlat-\vmu \right) + \lambda \right) } \right) }{ \left(\sqrt{  \frac{ \valpha^T \vSigma^{-1} \valpha + \lambda }{ \left( \vlat-\vmu \right)^T \vSigma^{-1} \left( \vlat-\vmu \right) + \lambda } } \right)^{\frac{-d-1}{2}} },
\end{align*}
where $\mathcal{K}_v(x)$ denotes the modified Bessel function of the second kind.
\end{lemma}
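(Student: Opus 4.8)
The plan is to integrate the mixing variable $\mix$ out of the joint density directly, reducing the one-dimensional integral to the classical integral representation of the modified Bessel function of the second kind. First I would write $q(\vlat) = \int_0^\infty \gauss(\vlat|\vmu+\mix\valpha,\mix\vSigma)\,\IGauss(\mix|1,\lambda)\,d\mix$ and substitute the explicit densities, collecting the $\mix$-dependent algebraic prefactors: the Gaussian contributes $(2\pi)^{-d/2}\mix^{-d/2}\det(\vSigma)^{-1/2}$ (since $\det(\mix\vSigma)^{-1/2}=\mix^{-d/2}\det(\vSigma)^{-1/2}$) and the inverse-Gaussian contributes $(\lambda/2\pi)^{1/2}\mix^{-3/2}$, so the combined power of $\mix$ outside the exponential is $\mix^{-(d+3)/2}$.

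The key algebraic step is to expand the Gaussian exponent. Writing $\vr := \vlat-\vmu$, the quadratic form is $\tfrac{1}{\mix}(\vr-\mix\valpha)^T\vSigma^{-1}(\vr-\mix\valpha) = \tfrac{1}{\mix}\vr^T\vSigma^{-1}\vr - 2\,\vr^T\vSigma^{-1}\valpha + \mix\,\valpha^T\vSigma^{-1}\valpha$. The cross term is independent of $\mix$ and factors out as $\exp(\vr^T\vSigma^{-1}\valpha)$. Combining the remaining $\mix^{-1}$ and $\mix$ pieces with the inverse-Gaussian exponent $-\tfrac{\lambda}{2}(\mix+\mix^{-1})+\lambda$, the full $\mix$-dependent exponent becomes $-\tfrac{1}{2\mix}(\vr^T\vSigma^{-1}\vr+\lambda) - \tfrac{\mix}{2}(\valpha^T\vSigma^{-1}\valpha+\lambda)$, up to the constant $\vr^T\vSigma^{-1}\valpha+\lambda$ that I pull outside the integral.

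At this point the integral has the form $\int_0^\infty \mix^{\nu-1}\exp(-\tfrac12(\beta\mix+\gamma/\mix))\,d\mix$ with $\nu-1=-(d+3)/2$, $\beta=\valpha^T\vSigma^{-1}\valpha+\lambda$ (the coefficient of $\mix$), and $\gamma=\vr^T\vSigma^{-1}\vr+\lambda$ (the coefficient of $\mix^{-1}$). I would then invoke the standard identity $\int_0^\infty x^{\nu-1}\exp(-\tfrac12(\beta x+\gamma/x))\,dx = 2(\gamma/\beta)^{\nu/2}\mathcal{K}_\nu(\sqrt{\beta\gamma})$, set $\nu=-(d+1)/2$, and use the symmetry $\mathcal{K}_\nu=\mathcal{K}_{-\nu}$ to replace the order by $(d+1)/2$. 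Substituting $\sqrt{\beta\gamma}$ as the Bessel argument and rewriting $(\gamma/\beta)^{\nu/2}=(\beta/\gamma)^{(d+1)/4}$ into the stated ratio yields the claimed fraction, while the scalar prefactors combine as $(2\pi)^{-d/2}(\lambda/2\pi)^{1/2}=\lambda^{1/2}(2\pi)^{-(d+1)/2}$, matching the leading constant, and $\det(\vSigma)^{-1/2}$ and $\exp((\vlat-\vmu)^T\vSigma^{-1}\valpha+\lambda)$ carry over verbatim.

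The main obstacle is purely bookkeeping: keeping the power of $\mix$ aligned so that $\nu-1=-(d+3)/2$ exactly, and correctly handling the sign of $\nu$ (which is negative) via the evenness of $\mathcal{K}_\nu$ in its order, so that the ratio $(\beta/\gamma)^{(d+1)/4}$ appears as the reciprocal of $\left(\sqrt{\beta/\gamma}\right)^{-(d+1)/2}$ exactly as written. Once the integral identity is matched, no further analysis is needed.
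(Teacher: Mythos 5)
Your proposal is correct and follows essentially the same route as the paper's proof: substitute the two densities, factor the cross term $(\vlat-\vmu)^T\vSigma^{-1}\valpha$ and the constant $\lambda$ out of the exponent, recognize the remaining one-dimensional integral as the normalizing constant of a generalized inverse Gaussian (equivalently, the stated integral representation of $\mathcal{K}_\nu$) with order $\nu=-(d+1)/2$, and finish with the symmetry $\mathcal{K}_\nu=\mathcal{K}_{-\nu}$. The bookkeeping of the $\mix^{-(d+3)/2}$ prefactor, the coefficients $\beta$ and $\gamma$, and the leading constants all check out.
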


\begin{proof}
By definition, we can compute the marginal distribution as follows.
\begin{align*}
&  q(\vlat) 
  = \int_{0}^{+\infty} q(\vlat|\mix) q(\mix) d\mix \\
  =& \int_{0}^{+\infty} \mathrm{det}\left( 2\pi \mix \vSigma \right)^{-1/2} \exp\crl{  -\half \sqr{  \left( \vlat-\vmu-\mix\valpha \right)^T \left(\mix\vSigma \right)^{-1}  \left( \vlat-\vmu-\mix\valpha\right) } }  
  \left( \frac{\lambda}{2\pi \mix^3} \right)^{1/2}\exp\sqr{ -\frac{\lambda}{2} \left( \mix + \frac{1}{\mix} \right) + \lambda } d\mix \\
 & \text{By grouping all terms related to $\mix$ together, we have} \\
  =&\frac{\lambda^{\half}}{(2\pi)^{\frac{d+1}{2}}} \mathrm{det}\left(  \vSigma \right)^{-1/2} \exp\sqr{ \left(\vlat-\vmu \right)^T \vSigma^{-1} \valpha + \lambda } \int_{0}^{+\infty} 
  \mix^{-\frac{d+3}{2}}
  \exp\crl{  -\half \sqr{ \mix \left(  \valpha^T \vSigma^{-1} \valpha + \lambda \right)  + \frac{  \left( \vlat-\vmu \right)^T \vSigma^{-1} \left( \vlat-\vmu \right) + \lambda}{\mix} }    }
  d\mix\\
 & \text{By completing a generalized inverse Gaussian form, we have} \\
 =&\frac{\lambda^{\half}}{(2\pi)^{\frac{d+1}{2}}} \mathrm{det}\left(  \vSigma \right)^{-1/2} \exp\sqr{ \left(\vlat-\vmu \right)^T \vSigma^{-1} \valpha +\lambda } \frac{ 2 \mathcal{K}_{\frac{-d-1}{2}} \left( \sqrt{ \left( \valpha^T \vSigma^{-1} \valpha  + \lambda \right) \left( \left(\vlat-\vmu \right)^T \vSigma^{-1} \left(\vlat-\vmu \right) + \lambda \right) } \right) }{ \left(\sqrt{  \frac{ \valpha^T \vSigma^{-1} \valpha + \lambda }{ \left( \vlat-\vmu \right)^T \vSigma^{-1} \left( \vlat-\vmu \right) + \lambda } } \right)^{\frac{-d-1}{2}} } \\
 & \text{We obtain the last step by the fact that $\mathcal{K}_v(x)=\mathcal{K}_{-v}(x)$} \\
 =&\frac{\lambda^{\half}}{(2\pi)^{\frac{d+1}{2}}} \mathrm{det}\left(  \vSigma \right)^{-1/2} \exp\sqr{ \left(\vlat-\vmu \right)^T \vSigma^{-1} \valpha +\lambda } \frac{ 2 \mathcal{K}_{\frac{d+1}{2}} \left( \sqrt{ \left( \valpha^T \vSigma^{-1} \valpha  + \lambda \right) \left( \left(\vlat-\vmu \right)^T \vSigma^{-1} \left(\vlat-\vmu \right) + \lambda \right) } \right) }{ \left(\sqrt{  \frac{ \valpha^T \vSigma^{-1} \valpha + \lambda }{ \left( \vlat-\vmu \right)^T \vSigma^{-1} \left( \vlat-\vmu \right) + \lambda } } \right)^{\frac{-d-1}{2}} } 
\end{align*}
\end{proof}

Similarly,  the sufficient statistics $\vphi_\lat(\vlat,\mix)=\crl{  \vlat/\mix, \vlat , \vlat\vlat^T/\mix} $ and the natural parameter  $\vvarpar_\lat=\crl{ \vSigma^{-1}\vmu, \vSigma^{-1}\valpha, -\half \vSigma^{-1} }$ can be read from $q(\vlat|\mix)=\gauss(\vlat|\vmu+\mix\valpha,\mix\vSigma)$. The joint distribution $q(\vlat,\mix)$ is a conditional EF because $q(\mix)$ is an inverse Gaussian distribution with known parameters, which is a EF distribution.  Likewise, we can show that the joint distribution is also a minimal conditional EF.

We can derive the expectation parameter shown below:
\begin{align*}
   \vm &:= \Unmyexpect{\IGauss(\mix|1,\lambda)\gauss(\lat|\mu +\mix\alpha, \mix\Sigma) } \sqr{  \mix^{-1} \vlat } = \left( 1+ \lambda^{-1} \right) \vmu +  \valpha ,   \\
 \vm_{\alpha} &:= \Unmyexpect{\IGauss(\mix|1,\lambda) \gauss(\lat|\mu+\mix\alpha, \mix\Sigma) } \sqr{ \vlat } =  \vmu +  \valpha, \\
   \vM &:= \Unmyexpect{\IGauss(\mix|1,\lambda) \gauss(\lat|\mu+\mix\alpha,\mix \Sigma) } \sqr{ \mix^{-1} \vlat \vlat^T } = 
\left( 1+\lambda^{-1} \right) \vmu \vmu^T + \valpha\valpha^T +  \vmu\valpha^T + \valpha \vmu^T  +  \vSigma
\end{align*}

The sufficient statistics, natural parameters, and expectation parameters are summarized below:
\begin{equation*}
\left[
\begin{array}{c}
 \vlat/\mix\\
 \vlat\\
\vlat\vlat^T/\mix
\end{array}
\right]
\quad
\left[
\begin{array}{c}
\vSigma^{-1}\vmu\\
\vSigma^{-1}\valpha\\
-\half \vSigma^{-1}
\end{array}
\right]
\quad
\left[
\begin{array}{c}
\left( 1+\lambda^{-1} \right) \vmu +  \valpha \\
 \vmu + \valpha \\
\left( 1+\lambda^{-1} \right) \vmu \vmu^T + \valpha\valpha^T +  \vmu\valpha^T + \valpha \vmu^T  +  \vSigma
\end{array}
\right]
\end{equation*}

\subsection{Derivation of the NGD Update}
We consider the variational approximation using the normal inverse Gaussian distribution $q(\vlat|\vvarpar)$.
We consider the same model as Appendix \ref{app:fmg_elbo} with a Gaussian-prior $\gauss(\vlat|\mathbf{0},\delta^{-1} \vI)$ on $\vlat$.
The lower bound is defined as below:
\begin{align*}
\mathcal{L}(\vvarpar) &= \Unmyexpect{q(z|\varpar)}\sqr{  \sum_{n=1}^{N}  \log p(\data_n|\vlat) + \log \gauss(\vlat|\mathbf{0},\delta^{-1} \vI)  -\log q(\vlat|\vvarpar) }\\
&= \Unmyexpect{q(\lat)}\sqr{ - h(\vlat)}, \textrm{ where } h(\vlat)  :=-\sqr{  \log \frac{\gauss(\vlat|\mathbf{0},\delta^{-1} \vI)}{q(\vlat)} + \sum_{n} \log p(\mathcal{D}_n|\vlat)   } .
\end{align*}
Our goal is to compute the gradient of this ELBO with respect to the expectation parameters. 

\subsection{Natural Gradient for $q(\vlat|\mix)$ }
Likewise, we do not need to compute these gradients with respect to the expectation parameters directly. The gradients can be computed in terms of $\crl{\vmu,\valpha, \vSigma}$.
Similarly, by the chain rule, we can express the following gradients with respect to the expectation parameters in terms of the gradients with respect to $\vmu$, $\valpha$, and $\vSigma$.
\begin{align*}
 \nabla_{m} \mathcal{L} &=  \lambda \nabla_{\mu} \mathcal{L} -  \lambda \nabla_{\alpha} \mathcal{L} - 2 \left( \nabla_\Sigma \mathcal{L}\right) \vmu \\
 \nabla_{m_{\alpha}} \mathcal{L} &= \left( 1+\lambda \right) \nabla_{\alpha} \mathcal{L} - \lambda  \nabla_{\mu} \mathcal{L} - 2 \left( \nabla_\Sigma \mathcal{L} \right)\valpha \\
 \nabla_{M} \mathcal{L} &= \nabla_{\Sigma}\mathcal{L} 
\end{align*}

By plugging the gradients into the update in \eqref{eq:simplengvi} and then re-expressing the update in terms of $\vmu$, $\valpha$, $\vSigma$,
we obtain the natural gradient update in terms of $\vmu$, $\valpha$, and $\vSigma$.
\begin{align}
 \vSigma^{-1} & \leftarrow \vSigma^{-1} - 2 \beta \nabla_{\Sigma}\mathcal{L} \label{eq:ngd_nig_sigma} \\
 \vmu & \leftarrow \vmu + \beta \vSigma \left( \lambda \nabla_{\mu} \mathcal{L} -  \lambda \nabla_{\alpha} \mathcal{L} \right) \label{eq:ngd_nig_mu}\\
 \valpha &\leftarrow \valpha + \beta \vSigma \left( (1+\lambda) \nabla_{\alpha} \mathcal{L} - \lambda \nabla_{\mu} \mathcal{L} \right)  \label{eq:ngd_nig_alpha}
\end{align}

We can compute gradients with respect to $\vmu$, $\valpha$, and $\vSigma$ by the extended Bonnet's and Price's theorem \citep{wu-report}.
In \citet{wu-report}, they discuss the conditions of the target function $h(\vlat)$ when it comes to applying these theorems.

\begin{align*}
  \nabla_{\mu}   \elbofinal (\vvarpar )
=& -  \Unmyexpect{q(\lat)} \sqr{    \nabla_{\lat} h(\vlat)  } \quad \approx - \nabla_\lat h(\vlat) \\
  \nabla_{\alpha}   \elbofinal (\vvarpar )
=& -\Unmyexpect{ q(\mix,\lat)} \sqr{ \mix \nabla_{\lat} h_n(\vlat)}  \approx -\mix \nabla_{\lat} h_n(\vlat) \\
=& -\Unmyexpect{ q(\lat)} \sqr{ u(\vlat) \nabla_{\lat} h_n(\vlat)} \approx -u(\vlat) \nabla_{\lat} h_n(\vlat) \\
  \nabla_{\Sigma}   \elbofinal (\vvarpar )
=& - \half \Unmyexpect{q(\lat)} \sqr{   \mix \nabla_{\lat}^2 h(\vlat)  }  \quad \approx - \frac{\mix}{2} \nabla_\lat^2 h(\vlat) . \\
=& - \half \Unmyexpect{q(\lat)} \sqr{   u(\vlat) \nabla_{\lat}^2 h(\vlat)  }  \quad \approx - \frac{u(\vlat)}{2} \nabla_\lat^2 h(\vlat) . 
\end{align*}

where 
$u(\vlat) := 
\sqrt{  \frac{ \left( \vlat-\vmu \right)^T \vSigma^{-1} \left( \vlat-\vmu \right) + \lambda }{ \valpha^T \vSigma^{-1} \valpha + \lambda } }  \frac{ \mathcal{K}_{\frac{d-1}{2}} \left( \sqrt{ \left( \valpha^T \vSigma^{-1} \valpha  + \lambda \right) \left( \left(\vlat-\vmu \right)^T \vSigma^{-1} \left(\vlat-\vmu \right) + \lambda \right) } \right)} { \mathcal{K}_{\frac{d+1}{2}} \left( \sqrt{ \left( \valpha^T \vSigma^{-1} \valpha  + \lambda \right) \left( \left(\vlat-\vmu \right)^T \vSigma^{-1} \left(\vlat-\vmu \right) + \lambda \right) } \right)} $ and
$\mix  \sim \IGauss(\mix|1,\lambda), \,
  \vlat  \sim \gauss(\vlat|\vmu+\mix\valpha,\mix\vSigma).$

Directly calculating the ratio between the modified Bessel functions of the second kind is expensive and numerically unstable when $v$ is large. 
However, the ratio between consecutive order has a tight and algebraic bound \cite{ruiz2016new} as given below.
When $v \in \mathcal{R}$ and $v\geq \half$, the bound of the ratio is
\begin{align*}
 D_{2v-1}(v,x) \leq \frac{\mathcal{K}_{v-1} \left(x\right)}{\mathcal{K}_{v} \left(x\right)} \leq  D_0(v,x)
\end{align*} where function $D_\alpha(v,x)$ is defined as below.
\begin{align*}
D_{\alpha}(v,x) & := \frac{x}{\psi_{\alpha}(v,x)+\sqrt{ \left(\psi_{\alpha}(v,x)\right)^2 + x^2}} \\
\psi_{\alpha}(v,x) &:=(v-\half) - \frac{\tau_\alpha(v)}{2\sqrt{\left(\tau_\alpha(v)\right)^2+x^2}}, \,\,\, \tau_\alpha(v) := v-\frac{\alpha+1}{2} 
\end{align*}

For natural number $v \in  \mathbb{N}$,  a tighter bound (see Eq (3.10) at \citet{yang2017approximating}) with higher computation cost can  be used, where we make use of the following relationship due to Eq \eqref{eq:bessel_rl_1} and \eqref{eq:bessel_rl_2}.
\begin{align*}
\frac{\mathcal{K}_{v-1} \left(x\right)}{\mathcal{K}_{v} \left(x\right)} & =  
\frac{\mathcal{K}_{v+1} \left(x\right)}{\mathcal{K}_{v} \left(x\right)} - \frac{2v}{x} 
\end{align*}

To compute the ratio, we propose to use the following approximation when $v\geq \half$.
A similar approach to approximate the ratio between two modified Bessel functions of the first kind is used in \citet{oh2019radial} and \citet{kumar2018mises}.
\begin{align}
\frac{\mathcal{K}_{v-1} \left(x\right)}{\mathcal{K}_{v} \left(x\right) }  \approx  \frac{D_{2v-1}(v,x)+D_0(v,x)}{2} \label{eq:ratio_bessel_sec}
\end{align}

Now, we discuss how to compute $\nabla_x \log \mathcal{K}_{v} \left(x\right) $ and $\nabla_x^2 \log \mathcal{K}_{v} \left(x\right) $. The first term appears when we compute $\nabla_\lat h(\vlat)$. Similarly, the second term appears when we compute $\nabla_\lat^2 h(\vlat)$. 

First, we make use of the recurrence forms of the modified Bessel function of the second kind for $v \in \mathcal{R}$ (see page 20 at \citet{Richard-notes}).
\begin{align}
 \nabla_x \mathcal{K}_{v} \left(x\right) &= -\mathcal{K}_{v-1} \left(x\right) - \frac{v}{x} \mathcal{K}_{v} \left(x\right) \label{eq:bessel_rl_1}\\
 \nabla_x \mathcal{K}_{v} \left(x\right) &= \frac{v}{x} \mathcal{K}_{v} \left(x\right) -  \mathcal{K}_{v+1} \left(x\right) \label{eq:bessel_rl_2}
\end{align}
Using these recurrence forms, we have
\begin{align}
\frac{\nabla_{x} \mathcal{K}_{v} \left(x\right)}{\mathcal{K}_{v} \left(x\right)} &=  -\frac{\mathcal{K}_{v-1} \left(x\right)}{\mathcal{K}_{v} \left(x\right)} -\frac{v}{x} \label{eq:bessel_ratio_first_grad}
\end{align}
Furthermore, we have the following result due to the recurrence forms.
\begin{align}
 \nabla_x^2 \mathcal{K}_{v} \left(x\right)
&=\nabla_x \sqr{ \overbrace{-\mathcal{K}_{v-1} \left(x\right) - \frac{v}{x} \mathcal{K}_{v} \left(x\right)}^{ \nabla_x \mathcal{K}_{v} \left(x\right)} } \nonumber \\
 &= -\nabla_x\mathcal{K}_{v-1} \left(x\right) - \frac{v}{x} \nabla_x \mathcal{K}_{v} \left(x\right) + \frac{v}{x^2}  \mathcal{K}_{v} \left(x\right)\nonumber\\
 &= -\left( \underbrace{ \frac{v-1}{x} \mathcal{K}_{v-1} \left(x\right) -  \mathcal{K}_{v} \left(x\right) }_{ \nabla_x\mathcal{K}_{v-1} \left(x\right) } \right) - \frac{v}{x} \nabla_x \mathcal{K}_{v} \left(x\right) + \frac{v}{x^2}  \mathcal{K}_{v} \left(x\right)\nonumber\\
 &= -  \frac{v-1}{x} \mathcal{K}_{v-1} \left(x\right) + \frac{v+x^2}{x^2}  \mathcal{K}_{v} \left(x\right)  - \frac{v}{x} \nabla_x \mathcal{K}_{v} \left(x\right) \nonumber\\
 &= -  \frac{v-1}{x} \mathcal{K}_{v-1} \left(x\right) + \frac{v+x^2}{x^2}  \mathcal{K}_{v} \left(x\right)  - \frac{v}{x} \left( \underbrace{-\mathcal{K}_{v-1} \left(x\right) - \frac{v}{x} \mathcal{K}_{v} \left(x\right)}_{ \nabla_x \mathcal{K}_{v} \left(x\right)} \right) \nonumber\\
 &= \frac{1}{x} \mathcal{K}_{v-1} \left(x\right) + \frac{v+x^2+v^2}{x^2}  \mathcal{K}_{v} \left(x\right) \nonumber
\end{align} which implies that
\begin{align}
\frac{\nabla_x^2 \mathcal{K}_{v} \left(x\right)}{ \mathcal{K}_{v}} = \frac{1}{x} \frac{\mathcal{K}_{v-1} \left(x\right) }{ \mathcal{K}_{v} \left(x\right) }+ \frac{v+x^2+v^2}{x^2}  \label{eq:bessel_ratio_second_grad}
\end{align}

Using Eq \eqref{eq:bessel_ratio_first_grad} and \eqref{eq:bessel_ratio_second_grad}, we have
\begin{align*}
 \nabla_x \log \mathcal{K}_{v} \left(x\right) &=  \frac{\nabla_x \mathcal{K}_{v} \left(x\right)}{\mathcal{K}_{v} \left(x\right)} \\
&= -\frac{\mathcal{K}_{v-1} \left(x\right)}{ \mathcal{K}_{v} \left(x\right)} - \frac{v}{x}  \\
\nabla_x^2 \log \mathcal{K}_{v} \left(x\right)
& =\nabla_x \sqr{ \frac{\nabla_x \mathcal{K}_{v} \left(x\right)}{\mathcal{K}_{v} \left(x\right)} } \\
&= \frac{\nabla_x^2 \mathcal{K}_{v} \left(x\right)}{\mathcal{K}_{v} \left(x\right)} - \left( \frac{\nabla_x \mathcal{K}_{v} \left(x\right) }{\mathcal{K}_{v} \left(x\right)} \right)^2 \\
&= \frac{1}{x} \frac{ \mathcal{K}_{v-1} \left(x\right) }{ \mathcal{K}_{v} \left(x\right)  } + \frac{v+x^2+v^2}{x^2} - \left( -\frac{\mathcal{K}_{v-1} \left(x\right)}{ \mathcal{K}_{v} \left(x\right)} - \frac{v}{x}\right)^2 \\
&= \frac{1-2v}{x} \left( \frac{\mathcal{K}_{v-1} \left(x\right)}{\mathcal{K}_{v} \left(x\right)} \right) - \left(  \frac{\mathcal{K}_{v-1} \left(x\right)}{\mathcal{K}_{v} \left(x\right)}\right)^2 + \frac{v}{x^2} + 1
\end{align*} where the ratio can be approximated by Eq \eqref{eq:ratio_bessel_sec}.

   \section{Multivariate Symmetric Normal Inverse-Gaussian Distribution}
\label{app:sym_nig}
      The symmetric normal inverse-Gaussian distribution is a scale mixture distribution.
A difference between the distribution at Appendix \ref{app:nig} is shown in red. Such difference allows this distribution to have heavy tails.
\begin{align*}
q(\mix,\vlat) = \gauss(\vlat|\vmu,\textcolor{red}{ \mix^{-1} } \vSigma) \IGauss(\mix|1,\lambda)
\end{align*} where we assume $\lambda$ is fixed for simplicity and 
$\IGauss(\mix|1,\lambda)=\left(\frac{\lambda}{2\pi \mix^3} \right)^{\half}\exp\crl{ -\frac{\lambda}{2}\left(\mix+\mix^{-1}\right) + \lambda }$.

Similarly, we can show that the marginal distribution is 
\begin{align*}
q(\vlat)= \frac{\lambda^{\half}}{(2\pi)^{\frac{d+1}{2}}} \mathrm{det}\left(  \vSigma \right)^{-1/2} \exp( \lambda ) \frac{ 2 \mathcal{K}_{\frac{d-1}{2}} \left( \sqrt{ \lambda  \left( \left(\vlat-\vmu \right)^T \vSigma^{-1} \left(\vlat-\vmu \right) + \lambda \right) } \right) }{ \left(\sqrt{  \frac{ \left( \vlat-\vmu \right)^T \vSigma^{-1} \left( \vlat-\vmu \right) + \lambda  }{ \lambda } } \right)^{\frac{d-1}{2}} }
\end{align*}

Furthermore, the mixture distribution is a minimal conditional EF distribution. 
The sufficient statistics, natural parameters, and expectation parameters are summarized below:
\begin{equation*}
\left[
\begin{array}{c}
 \mix \vlat\\
\mix \vlat\vlat^T
\end{array}
\right]
\quad
\left[
\begin{array}{c}
\vSigma^{-1}\vmu\\
-\half \vSigma^{-1}
\end{array}
\right]
\quad
\left[
\begin{array}{c}
 \vmu  \\
 \vmu \vmu^T  +  \vSigma
\end{array}
\right]
\end{equation*}

Likewise, we can derive the natural-gradient update in terms of $\vmu$ and $\vSigma$ as shown below.
\begin{align*}
 \vSigma^{-1} & \leftarrow \vSigma^{-1} - 2 \beta \nabla_{\Sigma}\mathcal{L}\\ 
 \vmu & \leftarrow \vmu + \beta \vSigma  \nabla_{\mu} \mathcal{L}  
\end{align*}

Now, we discuss the gradient computation of the following lower bound.
\begin{align*}
\mathcal{L}(\vvarpar) &= \Unmyexpect{q(z|\varpar)}\sqr{  \sum_{n=1}^{N}  \underbrace{\log p(\data_n|\vlat)}_{:-f_n(\vlat)} + \log \gauss(\vlat|\mathbf{0},\delta^{-1} \vI)  -\log q(\vlat|\vvarpar) }
\end{align*}

For the prior term, we have
\begin{align}
 \Unmyexpect{q(\lat)}\sqr{ \log \gauss(\vlat|\mathbf{0}, \delta^{-1}\vI ) } = -\frac{d}{2} \log(2\pi) + \frac{d \delta}{2} - \frac{\delta}{2} \left( \vmu^T\vmu  + \left( 1+\lambda^{-1} \right) \mathrm{Tr}(\vSigma) \right) \label{eq:snig_prior}
\end{align}
Due to Eq \eqref{eq:snig_prior}, the closed-form gradients are given below.
\begin{align*}
 \vg_\mu^{\textrm{prior}}  &=  -\delta \vmu\\
 \vg_\Sigma^{\textrm{prior}}  &= -\frac{\delta\left( 1+ \lambda^{-1} \right)}{2} \vI
\end{align*}

In this example, we can re-express the entropy term as below.
\begin{align}
 \Unmyexpect{q(\lat)}\sqr{ - \log q(\vlat) } 
=& \frac{(d+1) }{2}\left(\log(2\pi) - \log \sqrt{\lambda} \right)   +  \frac{\log \mathrm{det}\left(  \vSigma \right)}{2} -\lambda    -\Unmyexpect{\IGauss(\mix|1,\lambda)\chisq(\lat_1|d)}\sqr{\log\left( \frac{2  \mathcal{K}_{\frac{d-1}{2}} \left( \sqrt{  \lambda \left( \mix^{-1} \lat_1 + \lambda \right) } \right)}{ \left( \mix^{-1} \lat_1  + \lambda \right)^{\frac{d-1}{4}}  } \right)} \label{eq:snig_ent}
\end{align} where $\chisq(\lat_1|d)$ denotes the chi-squared distribution with $d$ degrees of freedom. The expectation can be computed by the inverse Gaussian quadrature \cite{choi2018high} and the generalized  Gauss-Laguerre quadrature.

By Eq \eqref{eq:snig_ent}, the closed-form gradients of the entropy term are shown below.
\begin{align*}
\vg_\mu^{\text{entropy}}  &= 0 \\
\vg_\Sigma^{\text{entropy}}  &= \frac{\vSigma^{-1}}{2}
\end{align*}

The remaining step is to compute the gradients about $\Unmyexpect{q(\lat)}\sqr{ f_n(\vlat) }$.
To compute the gradients, we use the extended Bonnet's and Price's theorems \citep{wu-report}.
Assuming that $f_n(\vlat)$ satisfies the assumptions needed for these two theorems, we obtain the following gradient expression:
\begin{align*}
\vg_{1}^n &:=  \nabla_{\mu} \Unmyexpect{q(\lat)} \sqr{ f_n(\vlat) } = \Unmyexpect{ q(\lat)} \sqr{ \nabla_{\lat} f_n(\vlat)} \approx   \nabla_{\lat} f_n(\vlat) \\
\vg_{2}^n &:= 2 \nabla_{\Sigma} \Unmyexpect{q(\lat)} \sqr{ f_n(\vlat) } \\
&=  \Unmyexpect{q(\lat)}\sqr{ u(\vlat) \nabla_{\lat}^2 f_n(\vlat)  } \approx u(\vlat) \nabla_{\lat}^2 f_n(\vlat) \\
&=  \Unmyexpect{q(\mix,\lat)}\sqr{ \mix \nabla_{\lat}^2 f_n(\vlat)  } \approx \mix^{-1} \nabla_{\lat}^2 f_n(\vlat) 
\end{align*} 
where 
$u(\vlat)  :=   
\sqrt{  \frac{ \left( \vlat-\vmu \right)^T \vSigma^{-1} \left( \vlat-\vmu \right) + \lambda }{  \lambda } }  \frac{ \mathcal{K}_{\frac{d-3}{2}} \left( \sqrt{   \lambda \left( \left(\vlat-\vmu \right)^T \vSigma^{-1} \left(\vlat-\vmu \right) + \lambda \right) } \right)} { \mathcal{K}_{\frac{d-1}{2}} \left( \sqrt{  \lambda  \left( \left(\vlat-\vmu \right)^T \vSigma^{-1} \left(\vlat-\vmu \right) + \lambda \right) } \right)} $, where the ratio between the Bessel functions can be approximated by Eq \eqref{eq:ratio_bessel_sec} when $d\geq 2$, and
$\mix  \sim \IGauss(\mix|1,\lambda), \,
  \vlat  \sim \gauss(\vlat|\vmu,\mix^{-1} \vSigma).$

Putting together, 
we can express the gradients in the following form:
\begin{align}
 \nabla_{\mu} \mathcal{L}(\vvarpar) 
 &= - \sum_{n=1}^{N} \underbrace{ \vg_{1}^n }_{ := \vg_{\mu}^n } - \delta \vmu   \label{eq:snig_g_update_mu}
 \\
 \nabla_{\Sigma} \mathcal{L}(\vvarpar)
&= - \half \sum_{n=1}^{N}  \underbrace{ \vg_{2}^n }_{:= \vg_{S}^n} -\frac{\delta(1+\lambda^{-1})}{2} \vI  +  \half \vSigma^{-1} \label{eq:snig_g_update_sigma}
\end{align}

Similarly, for stochastic approximation, we can sub-sampling a data point $n$ and use MC samples to approximate $\vg_{1}^n$ and $\vg_{2}^n$.
Plugging these stochastic gradients into 
\begin{align*}
 \vSigma^{-1} & \leftarrow (1-\beta) \vSigma^{-1} + \beta (\delta \left( 1+\lambda^{-1} \right) \vI + N \vg_{S}^n) \\
 \vmu & \leftarrow  \vmu - \beta \vSigma (N  \vg_{\mu}^n  + \delta \vmu) 
\end{align*}

   \section{Matrix-Variate Gaussian Distribution}
   \label{app:MGauss}
      We first show that MVG is a multi-linear exponential-family distribution.
\begin{lemma}
 Matrix Gaussian distribution is a member of the multi-linear exponential family.
\end{lemma}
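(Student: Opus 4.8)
The plan is to exhibit the matrix-normal density in the form \eqref{eq:qz_multilinear} and to identify three blocks of natural parameters in which the $\vLat$-dependent exponent is separately affine. First I would start from the vectorized representation $\mgauss(\vLat|\vW,\vU,\vV)=\gauss(\textrm{vec}(\vLat)\,|\,\textrm{vec}(\vW),\vV\otimes\vU)$ and expand the quadratic form $\textrm{vec}(\vLat-\vW)^T(\vV^{-1}\otimes\vU^{-1})\textrm{vec}(\vLat-\vW)$ in matrix-trace notation, which yields the exponent
\begin{align}
  -\half\trace\!\sqr{\vV^{-1}\vLat^T\vU^{-1}\vLat} + \trace\!\sqr{\vV^{-1}\vW^T\vU^{-1}\vLat} - \half\trace\!\sqr{\vV^{-1}\vW^T\vU^{-1}\vW},
\end{align}
together with the $\vLat$-free normalizer $-\tfrac{np}{2}\log(2\pi)-\tfrac{n}{2}\log|\vV|-\tfrac{p}{2}\log|\vU|$. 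Only the first two trace terms depend on $\vLat$; everything else I would absorb into the log-partition $A_\lat(\vvarpar)$ (with base measure $h_\lat(\vLat)\equiv 1$).

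Next I would choose the blocks
\begin{align}
  \vvarpar_1 := \vU^{-1}\vW\vV^{-1}, \qquad \vvarpar_2 := -\half\vU^{-1}, \qquad \vvarpar_3 := -\half\vV^{-1},
\end{align}
and rewrite the $\vLat$-dependent part as $f(\vLat,\vvarpar)=\myang{\vvarpar_1,\vLat} - 2\,\trace\!\sqr{\vvarpar_3\vLat^T\vvarpar_2\vLat}$, the cross term collapsing to $\myang{\vvarpar_1,\vLat}$ since $\trace[\vV^{-1}\vW^T\vU^{-1}\vLat]=\trace[(\vU^{-1}\vW\vV^{-1})^T\vLat]$. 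Multilinearity is then immediate by inspection: $f$ is affine in $\vvarpar_1$ with $\vphi_1(\vLat,\vvarpar_{-1})=\vLat$ and $r_1 = -2\,\trace[\vvarpar_3\vLat^T\vvarpar_2\vLat]$; affine in $\vvarpar_2$ with $\vphi_2(\vLat,\vvarpar_{-2})=\vLat\vV^{-1}\vLat^T$ and $r_2=\myang{\vvarpar_1,\vLat}$; and affine in $\vvarpar_3$ with $\vphi_3(\vLat,\vvarpar_{-3})=\vLat^T\vU^{-1}\vLat$ and $r_3=\myang{\vvarpar_1,\vLat}$. The only bookkeeping to verify carefully is that each $\myang{\vvarpar_j,\vphi_j}$ reproduces the intended term as an inner product over symmetric matrices, which follows from symmetry of $\vU^{-1},\vV^{-1}$ and the cyclic property of the trace.

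The step I expect to require the most care is the quadratic term: it is genuinely \emph{bilinear} in $(\vvarpar_2,\vvarpar_3)$, so the sufficient statistics $\vphi_2$ and $\vphi_3$ depend on the complementary block ($\vphi_2$ on $\vvarpar_3$, $\vphi_3$ on $\vvarpar_2$). This $\vvarpar_{-j}$-dependence is exactly the feature that pushes the distribution outside the ordinary EF class but keeps it inside the multilinear one, so I would stress that the decomposition lives within the definition's allowance for $\vphi_j(\vLat,\vvarpar_{-j})$. Finally, although the Lemma only asserts multilinear-EF membership, I would flag the companion fact needed downstream: the global map $\vvarpar\mapsto q$ is \emph{not} injective because of the scaling symmetry $(\vU,\vV)\mapsto(\alpha\vU,\alpha^{-1}\vV)$, yet each per-block expectation map $\vvarmean_j(\vvarpar)=\myexpect_{q}[\vphi_j(\vLat,\vvarpar_{-j})]$ should remain one-to-one once $\vvarpar_{-j}$ is fixed — which is precisely why the minimal-multilinear framework applies here where the ordinary minimal-EF framework would fail.
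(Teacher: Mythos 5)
Your proof is correct and rests on the same basic idea as the paper's: expand the matrix-normal exponent in trace form and exhibit it as separately affine in three parameter blocks. The decomposition you choose is genuinely different, however. The paper takes $\vVarpar_1=\vW$, $\vVarpar_2=\vU^{-1}$, $\vVarpar_3=\vV^{-1}$ and packages the entire $\vLat$-dependent exponent as the single multilinear expression $\trace\bigl(\vVarpar_3(-\half\vLat+\vVarpar_1)^T\vVarpar_2\vLat\bigr)$, so that the block-1 sufficient statistic is $\vphi_1=\vVarpar_2\vLat\vVarpar_3$ (it depends on the other blocks) and the block-1 expectation parameter is $\vVarpar_2\vVarpar_1\vVarpar_3=\vU^{-1}\vW\vV^{-1}$. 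You instead take $\vvarpar_1=\vU^{-1}\vW\vV^{-1}$ --- the analogue of $\vSigma^{-1}\vmu$ for an ordinary Gaussian, and, amusingly, exactly the paper's \emph{expectation} parameter for that block --- which makes $\vphi_1=\vLat$ free of the other blocks, confines the cross-dependence to the two quadratic blocks, and yields $\myexpect[\vLat]=\vW$ as the first expectation parameter. The two parameterizations are related by an invertible change of coordinates, so either certifies multilinear-EF membership; the only practical consequence is that the paper's Appendix NGD update equations are written in its coordinates, so continuing to the update derivation in your coordinates would require redoing the chain-rule bookkeeping. Your closing observation --- that the scaling symmetry $(\vU,\vV)\mapsto(\alpha\vU,\alpha^{-1}\vV)$ makes the global parameterization non-injective while each per-block expectation map remains one-to-one given $\vvarpar_{-j}$, which is precisely why the multilinear (rather than ordinary minimal-EF) framework is needed --- is correct and is not made explicit in the paper's proof; it is a worthwhile addition, though note the lemma itself only asserts membership, so that claim would still need its own verification for the minimality statement used downstream.
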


\begin{proof}
Let $\vVarpar_{1}= \vW$, $\vVarpar_{2}= \vU^{-1}$, and $\vVarpar_{3}= \vV^{-1}$.  The distribution on $\vLat \in \mathcal{R}^{d \times p}$ can be expressed as follows.
\begin{align*}
& \mgauss(\vLat|\vW,\vU,\vV) \\ \nonumber
& = \left( 2\pi \right)^{-dp/2} \exp \sqr{  -\half \mathrm{Tr} \left( \vV^{-1} (\vLat-\vW)^T \vU^{-1} (\vLat-\vW)  \right)  - \left( d/2  \log \mathrm{Det}(\vV)  + p/2  \log \mathrm{Det}(\vU)   \right) } \\ \nonumber
&=  \left( 2\pi \right)^{-dp/2} \exp \Big\{    \mathrm{Tr}\left( \vVarpar_{3} \left( -\half \vLat + \vVarpar_{1}  \right) ^T \vVarpar_{2} \vLat  \right)  \nonumber\\ 
&\quad\quad\quad -  \half\Big[  \mathrm{Tr}\left( \vVarpar_{3} \vVarpar_{1}^T \vVarpar_{2} \vVarpar_{1}  \right) + d   \log \mathrm{Det}(\vVarpar_{3})  + p   \log \mathrm{Det}(\vVarpar_{2})    \Big] \Big\} .
\end{align*} 
The function $ \mathrm{Tr}\left( \vVarpar_{3} \left( -\half \vLat + \vVarpar_{1}  \right) ^T \vVarpar_{2} \vLat  \right)$ is linear with respect each $\vVarpar_j$ given others.
\end{proof}


We now derive the NGVI update using our new expectation parameterization.
We can obtain function $\phi_1$, $\phi_2$, and $\phi_3$ from the multi-linear function
\begin{align*} 
f(\vZ,\vVarpar) := \mathrm{Tr}\left( \vVarpar_{3} \left( -\half \vLat + \vVarpar_{1}  \right) ^T \vVarpar_{2} \vLat  \right).
\end{align*}
For example, we can obtain function $\phi_1$ from $f(\vZ,\vVarpar)$ as shown below:
\begin{align*}
f(\vZ,\vVarpar) & =  \myang{ \vVarpar_{1}, \underbrace{  \vVarpar_{2} \vLat  \vVarpar_{3}}_{\phi_1\rnd{\boldsymbol{\Lat},\boldsymbol{\Lambda}_{-1}}} } \underbrace{ - \half \mathrm{Tr}\left(  \vVarpar_{3} \vLat^T \vVarpar_{2} \vLat \right) }_{r_1\rnd{ \boldsymbol{\Lat}, \boldsymbol{\Lambda}_{-1}} }.
\end{align*}
Similarly, we can obtain functions $\phi_2$ and $\phi_3$.
The corresponding expectation parameters of the Matrix Gaussian distribution can then be derived as below:
\begin{align*}
\vVarmean_{1} &= \Unmyexpect{\mgauss(\Lat|W,U,V)} \sqr{ \vVarpar_{2} \vLat  \vVarpar_{3} } =  \vVarpar_{2}\vVarpar_{1}\vVarpar_{3} \\
\vVarmean_{2} &=  \Unmyexpect{\mgauss(\Lat|W,U,V)} \sqr{  -\half \vLat \vVarpar_{3} \vLat^T +\vLat \vVarpar_{3} \vVarpar_{1}^T    } =   \half \left(  \vVarpar_{1} \vVarpar_{3} \vVarpar_{1}^T - p \vVarpar_{2}^{-1}  \right) \\ 
\vVarmean_{3} &= \Unmyexpect{\mgauss(\Lat|W,U,V)} \sqr{  -\half \vLat^T \vVarpar_{2} \vLat +\vVarpar_{1}^T   \vVarpar_{2}  \vLat  } = \half \left(  \vVarpar_{1}^T \vVarpar_{2} \vVarpar_{1} - d \vVarpar_{3}^{-1}  \right)
\end{align*}
We can then compute the gradient with respect to the expectation parameters using chain-rule:
\begin{align*}
 \nabla_{\Varmean_{1}}   \Unmyexpect{q(\Lat|\varpar)}  \sqr{ h(\vLat) } &= \left( \vVarpar_{2} \right) ^{-1} \nabla_{W}   \Unmyexpect{\mgauss(\Lat|W,U,V)}  \sqr{ h(\vLat) }  \left( \vVarpar_{3} \right)^{-1} \\
 \nabla_{\Varmean_{2}}    \Unmyexpect{q(\Lat|\varpar)}  \sqr{ h(\vLat) }   &=\frac{-2}{p} \nabla_{U}   \Unmyexpect{\mgauss(\Lat|W,U,V)}  \sqr{ h(\vLat) }  \\
 \nabla_{\Varmean_{3}}  \Unmyexpect{q(\Lat|\varpar)} \sqr{ h(\vLat) }    &=\frac{-2}{d} \nabla_{V}     \Unmyexpect{\mgauss(\Lat|W,U,V)}  \sqr{ h(\vLat) }   
\end{align*} 
We will now express the gradients in terms of the gradient of the function $h(\vLat)$. This leads to a simple update because gradient of $h(\vZ)$ can be obtained using automatic gradients (or backpropagation when using a neural network).
Let $\vlat=\mathrm{vec}(\vLat)$ and $\vLat=\mathrm{Mat}(\vlat)$. The distribution can be re-expressed as a multivariate Gaussian distribution 
$\gauss(\vlat| \vmu, \vSigma)$, where $\vmu=\mathrm{vec}(\vW)$, $\vSigma=\vV \otimes \vU$, and $\otimes$ denotes the Kronecker product.
Furthermore, the lower bound can be re-expressed as $\Unmyexpect{\gauss(\lat|\mu,\Sigma)} \sqr{ -\hat{h}(\vlat) }$, where $\hat{h}(\vlat) = h(\vLat)$. We make use of the Bonnet's and Price's theorems \citep{opper2009variational}:
\begin{align*}
\nabla_{\mu}  \Unmyexpect{\gauss(\lat|\mu,\Sigma)} \sqr{ \hat{h}(\vlat) }  & =   \Unmyexpect{\gauss(\lat|\mu,\Sigma)} \sqr{  \nabla_{\lat} \hat{h}(\vlat) } \\
\nabla_{\Sigma} \Unmyexpect{\gauss(\lat|\mu,\Sigma)} \sqr{  \nabla_{\lat} \hat{h}(\vlat) }  & =  \half  \Unmyexpect{\gauss(\lat|\mu,\Sigma)} \sqr{  \nabla_{\lat}^2 \hat{h}(\vlat) } 
\end{align*} 

Since $\vSigma=\vV \otimes \vU$, we have the following  result. 
\begin{align*}
& \mathrm{Tr} \sqr{ \left( \nabla_{U_{ij}} \vSigma \right)   \Unmyexpect{\gauss(\lat|\mu,\Sigma)} \sqr{   \nabla_{\lat} \hat{h}(\vlat) \nabla_{\lat} \hat{h}(\vlat)^T   } } \\
= &   \mathrm{Tr} \sqr{   \Unmyexpect{\gauss(\lat|\mu,\Sigma)} \sqr{  \left( \nabla_{U_{ij}} \vSigma \right)  \nabla_{\lat} \hat{h}(\vlat) \nabla_{\lat} \hat{h}(\vlat)^T   } } \\  
= &   \mathrm{Tr} \sqr{   \Unmyexpect{\gauss(\lat|\mu,\Sigma)} \sqr{  \nabla_{\lat} \hat{h}(\vlat)^T \left( \nabla_{U_{ij}} \vSigma \right) \nabla_{\lat} \hat{h}(\vlat)    } } \\  
= &  \mathrm{Tr} \sqr{   \Unmyexpect{\gauss(\lat|\mu,\Sigma)} \sqr{  \nabla_{\lat} \hat{h}(\vlat)^T \left( \nabla_{U_{ij}} (\vV \otimes \vU) \right) \nabla_{\lat} \hat{h}(\vlat)    } } \\  
=  & \mathrm{Tr} \sqr{   \Unmyexpect{\gauss(\lat|\mu,\Sigma)} \sqr{  \nabla_{\lat} \hat{h}(\vlat)^T \left(  \vV \otimes \nabla_{U_{ij}}\vU \right) \nabla_{\lat} \hat{h}(\vlat)    } }  \\
=  & \mathrm{Tr} \sqr{   \Unmyexpect{\mgauss(\Lat|W,U,V)} \sqr{ \mathrm{vec}( \nabla_{\Lat} h(\vLat))^T \left(  \vV \otimes \nabla_{U_{ij}}\vU \right)\mathrm{vec}( \nabla_{\Lat} h(\vLat))    } }  \\
= &  \mathrm{Tr} \sqr{   \Unmyexpect{\mgauss(\Lat|W,U,V)} \sqr{ \mathrm{vec}( \nabla_{\Lat} h(\vLat) )^T \mathrm{vec}\left(    (\nabla_{U_{ij}}\vU) \nabla_{\Lat} h(\vLat)  \vV^T \right)   } }  \\
= &  \mathrm{Tr} \sqr{   \Unmyexpect{\mgauss(\Lat|W,U,V)} \sqr{  \nabla_{\Lat} h(\vLat)^T\left(    (\nabla_{U_{ij}}\vU) \nabla_{\Lat} h(\vLat)  \vV^T \right)   } }  \\
= &  \mathrm{Tr} \sqr{   \Unmyexpect{\mgauss(\Lat|W,U,V)} \sqr{ \nabla_{\Lat} h(\vLat)  \vV^T \nabla_{\Lat} h(\vLat)^T    (\nabla_{U_{ij}}\vU)   } }  \\
= &  \mathrm{Tr} \sqr{   \Unmyexpect{\mgauss(\Lat|W,U,V)} \sqr{ \nabla_{\Lat} h(\vLat)  \vV \nabla_{\Lat} h(\vLat)^T    (\nabla_{U_{ij}}\vU)   } } 
\end{align*} where we use the identity of the Kronecker product, $(\vB^T \otimes \vA) \underbrace{\mathrm{vec}(\vX)}_{  \vx } = \mathrm{vec}(\vA \vX \vB) $ to move from step 6 to step 7.

Therefore, we have the following identity.
\begin{align*}
 \left( \nabla_{U} \vSigma \right)   \Unmyexpect{\gauss(\lat|\mu,\Sigma)} \sqr{   \nabla_{\lat} \hat{h}(\vlat) \nabla_{\lat} \hat{h}(\vlat)^T   }  
= &     \Unmyexpect{\mgauss(\Lat|W,U,V)} \sqr{ \nabla_{\Lat} h(\vLat)  \vV \nabla_{\Lat} h(\vLat)^T     }  
\end{align*}

Similarly, we have
\begin{align*}
\left( \nabla_{V} \vSigma \right)   \Unmyexpect{\gauss(\lat|\mu,\Sigma)} \sqr{   \nabla_{\lat} \hat{h}(\vlat) \nabla_{\lat} \hat{h}(\vlat)^T   }  
&=  \Unmyexpect{\mgauss(\Lat|W,U,V)} \sqr{     \nabla_{\Lat} h(\vLat)^T    \vU    \nabla_{\Lat} h(\vLat) }. \nonumber
\end{align*}

These identities can be used to express the gradient with respect to the expectation parameters in terms of the gradient with respect to $\vZ$:
\begin{align}
\nabla_{W}  \Unmyexpect{\mgauss(\Lat|W,U,V)}  \sqr{ h(\vLat) } &= \mathrm{Mat} \left(  \Unmyexpect{\gauss(\lat|\mu,\Sigma)} \sqr{   \nabla_{\lat} \hat{h}(\vlat)   } \right)\nonumber \\
&=  \Unmyexpect{\mgauss(\Lat|W,U,V)} \sqr{\nabla_{\Lat} h(\vLat)   } \nonumber \\
\nabla_{U}  \Unmyexpect{\mgauss(\Lat|W,U,V)}  \sqr{ h(\vLat) } &  = \left( \nabla_{U} \vSigma \right) \nabla_{\Sigma} \Unmyexpect{\gauss(\lat|\mu,\Sigma)} \sqr{   \nabla_{\lat} \hat{h}(\vlat)   } \nonumber   \\
&= \half \left( \nabla_{U} \vSigma \right)   \Unmyexpect{\gauss(\lat|\mu,\Sigma)} \sqr{  \nabla_{\lat}^2   \hat{h}(\vlat)   } \nonumber \\
&\approx \half \left( \nabla_{U} \vSigma \right)   \Unmyexpect{\gauss(\lat|\mu,\Sigma)} \sqr{   \nabla_{\lat} \hat{h}(\vlat) \nabla_{\lat} \hat{h}(\vlat)^T   } \label{aeq:gn1} \\ 
&= \half  \Unmyexpect{\mgauss(\Lat|W,U,V)} \sqr{     \nabla_{\Lat} h(\vLat)   \vV    \nabla_{\Lat} h(\vLat)^T } \nonumber \\ 
\nabla_{V}  \Unmyexpect{\mgauss(\Lat|W,U,V)}  \sqr{ h(\vLat) } &  = \left( \nabla_{V} \vSigma \right) \nabla_{\Sigma} \Unmyexpect{\gauss(\lat|\mu,\Sigma)} \sqr{   \nabla_{\lat} \hat{h}(\vlat)   }  \nonumber \\
&= \half \left( \nabla_{V} \vSigma \right)   \Unmyexpect{\gauss(\lat|\mu,\Sigma)} \sqr{  \nabla_{\lat}^2   \hat{h}(\vlat)   } \nonumber \\
&\approx \half \left( \nabla_{V} \vSigma \right)   \Unmyexpect{\gauss(\lat|\mu,\Sigma)} \sqr{   \nabla_{\lat} \hat{h}(\vlat) \nabla_{\lat} \hat{h}(\vlat)^T   } \label{aeq:gn2} \\
&= \half  \Unmyexpect{\mgauss(\Lat|W,U,V)} \sqr{     \nabla_{\Lat} h(\vLat)^T    \vU    \nabla_{\Lat} h(\vLat) }. \nonumber
\end{align}
To avoid computation of the Hessian, we have used the Gauss-Newton approximation  \citep{khan18a} in Eq \eqref{aeq:gn1} and Eq. \eqref{aeq:gn2}. 
 
We choose the step-size as $\beta=\{\beta_1, p \beta_2, d \beta_2\}$. The  update with the Gauss-Newton approximation can be expressed as
\begin{align*}
\vVarpar_{1} &\leftarrow  \vVarpar_{1} - \beta_1 \left( \vVarpar_{2} \right) ^{-1}     \Unmyexpect{\mgauss(\Lat|W,U,V)} \sqr{\nabla_{\Lat} h(\vLat)   } \left( \vVarpar_{3} \right)^{-1} \\
\vVarpar_{2} &\leftarrow \vVarpar_{2} + \beta_2 \Unmyexpect{\mgauss(\Lat|W,U,V)} \sqr{     \nabla_{\Lat} h(\vLat)   \vV    \nabla_{\Lat} h(\vLat)^T }  \\
\vVarpar_{3}  &\leftarrow \vVarpar_{3} + \beta_2  \Unmyexpect{\mgauss(\Lat|W,U,V)} \sqr{     \nabla_{\Lat} h(\vLat)^T    \vU   \nabla_{\Lat} h(\vLat) }
\end{align*}  
We can re-express these in terms of $\{\vW,\vU^{-1},\vV^{-1}\}$ to get the final updates:
\begin{align*}
\vW  &\leftarrow \vW - \beta_1  \vU   \Unmyexpect{\mgauss(\Lat|W,U,V)} \sqr{\nabla_{\Lat} h(\vLat)   }  \vV \\
\left( \vU \right)^{-1}  &\leftarrow  \left( \vU \right)^{-1} +  \beta_2 \Unmyexpect{\mgauss(\Lat|W,U,V)} \sqr{     \nabla_{\Lat} h(\vLat)   \vV   \nabla_{\Lat} h(\vLat)^T }  \\
\left( \vV \right)^{-1}  &\leftarrow  \left( \vV \right)^{-1} + \beta_2  \Unmyexpect{\mgauss(\Lat|W,U,V)} \sqr{     \nabla_{\Lat} h(\vLat)^T    \vU   \nabla_{\Lat} h(\vLat) }
\end{align*}

   \section{Extensions to Variational Adam}
    \label{app:vadam}
      For simplicity, we consider a case when variational parameters of $q(\mix|\vvarpar_\mix)$ are fixed.
Since $\vvarpar_\mix$ is fixed, using the same derivation as \citet{khan18a}, we obtain the following natural-gradient update with the natural momentum ($0\leq m<1$).
\begin{align}
   \vvarpar_\lat^{t+1} = \frac{1}{1-m} \vvarpar_\lat^{t} - \frac{m}{1-m} \vvarpar_\lat^{t-1} + \frac{\beta}{1-m} \nabla_{\varmean_\lat} \elbofinal(\vvarpar_\lat)\Big|_{\vvarpar_\lat=\vvarpar_\lat^{t}} \label{eq:ngd_mom}
\end{align} 

We assume the model prior is a Gaussian prior $p(\vlat)=\gauss(\vlat|\mathbf{0}, \delta^{-1} \vI )$ to derive extensions of the variational Adam update, 
where the variational distribution is a Gaussian mixture distribution such as skew Gaussian, exponentially modified Gaussian, symmetric normal inverse-Gaussian, and Student's t-distribution.

\subsection{Extension for Skew Gaussian}
 Re-expressing the update \eqref{eq:ngd_mom} in terms of $\vmu$, $\valpha$, $\vSigma$ (the same derivation as \citet{khan18a}), we obtain the following update:
\begin{align*}
 \vSigma_{t+1}^{-1} & = \vSigma_t^{-1} - 2 \frac{\beta}{1-m} \nabla_{\Sigma_t}\mathcal{L} + \frac{m}{1-m} \left(  \vSigma_{t}^{-1}  - \vSigma_{t-1}^{-1}\right) \\
 \vmu_{t+1} & = \vmu_t + \frac{\beta}{1-m} \vSigma_{t+1} \left( \frac{1}{1-c^2} \nabla_{\mu_t} \mathcal{L} - \frac{c}{1-c^2} \nabla_{\alpha_t} \mathcal{L} \right) + \frac{m}{1-m} \vSigma_{t+1} \vSigma_{t-1}^{-1} (\vmu_t - \vmu_{t-1} )  \\
 \valpha_{t+1} &= \valpha_{t} + \frac{\beta}{1-m} \vSigma_{t+1} \left( \frac{1}{1-c^2} \nabla_{\alpha_t} \mathcal{L} - \frac{c}{1-c^2} \nabla_{\mu_t} \mathcal{L} \right) + \frac{m}{1-m} \vSigma_{t+1} \vSigma_{t-1}^{-1} (\valpha_t - \valpha_{t-1} )
\end{align*}
where we use a skew Gaussian distribution as the variational distribution,
$\nabla_{\mu_t} \mathcal{L}$,
$\nabla_{\alpha_t} \mathcal{L}$, and
$\nabla_{\Sigma_t} \mathcal{L}$ are defined at
\eqref{eq:ske_g_update_mu} -\eqref{eq:ske_g_update_sigma}.

We make use of the same approximations as \citet{khan18a} such as the gradient-magnitude of the Hessian approximation, the square root approximation, $\vSigma_{t-1}\approx \vSigma_{t}$, and a diagonal covariance structure in $\vSigma$ to obtain an extension of the variational skew-Adam update.
 Recall that $\vg_{\alpha}^{\textrm{entropy}}$ and $\vg_{\Sigma}^{\textrm{entropy}}$ are defined at \eqref{eq:skew_ent_grad_alpha} and \eqref{eq:skew_ent_grad_Sigma} and $c=\sqrt{\frac{2}{\pi}}$.
Using the same algebra manipulation used in \citet{khan18a}, we obtain the variational Adam  update with Gaussian prior $p(\vlat)=\gauss(\vlat|\mathbf{0},\delta^{-1} \vI)$,
where  $\vSigma^{-1}=\mathrm{Diag}(N\vs+\delta)$, $v = \frac{c}{( 1+\valpha^T \vSigma^{-1} \valpha )}$,  and $u(\vlat)  = \frac{\left( \vlat-\vmu \right)^T \vSigma^{-1} \valpha }{1 + \valpha^T \vSigma^{-1} \valpha }$.
\fbox{
			\begin{minipage}{.85\textwidth}
				\textbf{Skew Gaussian extension}
				\begin{algorithmic}[1]
					\WHILE{not converged}
            \STATE $\hat{\vlat} \leftarrow \vmu {\, +\, \vsigma\circ \vepsilon}$, where $\vepsilon \sim \gauss(\mathbf{0},\vI)$, $\vsigma\leftarrow 1/\sqrt{N\vs + \delta}$
            \STATE $\vlat \leftarrow \hat{\vlat} {\, +\, |\mix| \valpha}$, where $\mix \sim \gauss(0,1)$
               \STATE Randomly sample a data example $\data_i$
               \STATE $\vg_{\mu} \leftarrow - \nabla \log p(\data_i|\vlat) $
               \STATE \textbf{option I:} $\hat{\vg}_{\alpha} \leftarrow - |\mix| \nabla \log p(\data_i|\vlat)$
	      \STATE \textbf{option II:} $\hat{\vg}_{\alpha} \leftarrow - \sqr{ v \nabla \log p(\data_i|\hat{\vlat}) + u(\vlat) \nabla \log p(\data_i|\vlat) } $
               \STATE $\vg_{\alpha} \leftarrow \hat{\vg}_{\alpha} - \vg_{\alpha}^{\textrm{entropy}}/N $  
               \STATE $\vg_{s} \leftarrow  \vg_{\mu} \circ \vg_{\mu} -  \mathrm{diag}\left(2 \vg_{\Sigma}^{\textrm{entropy}}\right)/N + (\vs + \delta/N)$  
               
					\STATE $\vm_{\mu} \leftarrow \gamma_1 \, \vm_{\mu} + (1-\gamma_1) \, \rnd{ \frac{\vg_{\mu}- c\vg_{\alpha}}{1-c^2} { \,\, + \,\, \delta \vmu/N } }$ 
					\STATE $\vm_{\alpha} \leftarrow \gamma_1 \, \vm_{\alpha} + (1-\gamma_1) \, \rnd{ \frac{\vg_{\alpha}- c\vg_{\mu}}{1-c^2} { \,\, + \,\, \delta \valpha/N } }$ 
               \STATE $\vs \leftarrow \gamma_2 \, \vs + (1-\gamma_2) \, \vg_s $ 
					\STATE $\hat{\vm}_\mu \leftarrow \vm_\mu/(1-\gamma_1^t), \quad \hat{\vm}_\alpha \leftarrow \vm_\alpha/(1-\gamma_1^t), \quad \hat{\vs} \leftarrow  (\vs+ \delta/N)/(1-\gamma_2^t)$
               \STATE $\vmu \leftarrow \vmu - \beta \,\, \hat{\vm}_\mu / \sqrt{\hat{\vs}} $, \quad
                $\valpha \leftarrow \valpha - \beta \,\, \hat{\vm}_\alpha / \sqrt{\hat{\vs}})$
               
					\STATE $t \leftarrow t + 1$
					\ENDWHILE
				\end{algorithmic}
	\end{minipage}
	}

\subsection{Extension for Exponentially Modified Gaussian}
Similarly, re-expressing the update \eqref{eq:ngd_mom} in terms of $\vmu$, $\valpha$, $\vSigma$, we obtain the following update:
\begin{align*}
 \vSigma_{t+1}^{-1} & = \vSigma_t^{-1} - 2 \frac{\beta}{1-m} \nabla_{\Sigma_t}\mathcal{L} + \frac{m}{1-m} \left(  \vSigma_{t}^{-1}  - \vSigma_{t-1}^{-1}\right) \\
 \vmu_{t+1} & = \vmu_t + \frac{\beta}{1-m} \vSigma_{t+1} \left( 2 \nabla_{\mu_t} \mathcal{L} - \nabla_{\alpha_t} \mathcal{L} \right) + \frac{m}{1-m} \vSigma_{t+1} \vSigma_{t-1}^{-1} (\vmu_t - \vmu_{t-1} )  \\
 \valpha_{t+1} &= \valpha_{t} + \frac{\beta}{1-m} \vSigma_{t+1} \left(  \nabla_{\alpha_t} \mathcal{L} -  \nabla_{\mu_t} \mathcal{L} \right) + \frac{m}{1-m} \vSigma_{t+1} \vSigma_{t-1}^{-1} (\valpha_t - \valpha_{t-1} )
\end{align*}
where we use an exponentially modified Gaussian distribution as the variational distribution, 
$\nabla_{\mu_t} \mathcal{L}$,
$\nabla_{\alpha_t} \mathcal{L}$, and
$\nabla_{\Sigma_t} \mathcal{L}$ are defined at
\eqref{eq:exp_g_update_mu} -\eqref{eq:exp_g_update_sigma}.

Likewise, we can obtain the variational Adam update with  Gaussian prior $p(\vlat)=\gauss(\vlat|\mathbf{0},\delta^{-1} \vI)$ as shown below, where 
$\vg_{\alpha}^{\textrm{entropy}}$ and $\vg_{\Sigma}^{\textrm{entropy}}$ are defined at \eqref{eq:exp_ent_grad_alpha} and \eqref{eq:exp_ent_grad_Sigma}, 
$\vSigma^{-1}=\mathrm{Diag}(N\vs+\delta)$,
$v = \frac{1}{(\valpha^T \vSigma^{-1} \valpha )}$, 
and  $u(\vlat)  = \frac{\left( \vlat-\vmu \right)^T \vSigma^{-1} \valpha - 1 }{ \valpha^T \vSigma^{-1} \valpha }$
\fbox{
			\begin{minipage}{.85\textwidth}
				\textbf{Exponentially Modified Gaussian extension}
				\begin{algorithmic}[1]
					\WHILE{not converged}
            \STATE $\hat{\vlat} \leftarrow \vmu {\, +\, \vsigma\circ \vepsilon}$, where $\vepsilon \sim \gauss(\mathbf{0},\vI)$, $\vsigma\leftarrow 1/\sqrt{N\vs + \delta}$
            \STATE $\vlat \leftarrow \hat{\vlat} {\, +\, \mix \valpha}$, where $\mix \sim \expdist(1)$
               \STATE Randomly sample a data example $\data_i$
               \STATE $\vg_{\mu} \leftarrow - \nabla \log p(\data_i|\vlat) $
               \STATE \textbf{option I:} $\hat{\vg}_{\alpha} \leftarrow - \mix \nabla \log p(\data_i|\vlat)$
	      \STATE \textbf{option II:} $\hat{\vg}_{\alpha} \leftarrow - \sqr{ v \nabla \log p(\data_i|\hat{\vlat}) + u(\vlat) \nabla \log p(\data_i|\vlat) } $
               \STATE $\vg_{\alpha} \leftarrow \hat{\vg}_{\alpha} - \vg_{\alpha}^{\textrm{entropy}}/N $  
               \STATE $\vg_{s} \leftarrow  \vg_{\mu} \circ \vg_{\mu} -  \mathrm{diag}\left(2 \vg_{\Sigma}^{\textrm{entropy}}\right)/N + (\vs + \delta/N)$  
               
					\STATE $\vm_{\mu} \leftarrow \gamma_1 \, \vm_{\mu} + (1-\gamma_1) \, \rnd{ 2 \vg_{\mu}- \vg_{\alpha} { \,\, + \,\, \delta \vmu/N } }$ 
					\STATE $\vm_{\alpha} \leftarrow \gamma_1 \, \vm_{\alpha} + (1-\gamma_1) \, \rnd{ \vg_{\alpha}- \vg_{\mu} { \,\, + \,\, \delta \valpha/N } }$ 
               \STATE $\vs \leftarrow \gamma_2 \, \vs + (1-\gamma_2) \, \vg_s $ 
					\STATE $\hat{\vm}_\mu \leftarrow \vm_\mu/(1-\gamma_1^t), \quad \hat{\vm}_\alpha \leftarrow \vm_\alpha/(1-\gamma_1^t), \quad \hat{\vs} \leftarrow (\vs+  \delta/N)/(1-\gamma_2^t)$
               \STATE $\vmu \leftarrow \vmu - \beta \,\, \hat{\vm}_\mu / \sqrt{\hat{\vs} }$, \quad
                $\valpha \leftarrow \valpha - \beta \,\, \hat{\vm}_\alpha / \sqrt{\hat{\vs}}$
               
					\STATE $t \leftarrow t + 1$
					\ENDWHILE
				\end{algorithmic}
	\end{minipage}
	}

\subsection{Extension for Student's t-distribution}
Likewise, re-expressing the update \eqref{eq:ngd_mom} in terms of $\vmu$, $\vSigma$, we obtain the following update:
\begin{align*}
 \vSigma_{t+1}^{-1} & = \vSigma_t^{-1} - 2 \frac{\beta}{1-m} \nabla_{\Sigma_t}\mathcal{L} + \frac{m}{1-m} \left(  \vSigma_{t}^{-1}  - \vSigma_{t-1}^{-1}\right) \\
 \vmu_{t+1} & = \vmu_t + \frac{\beta}{1-m} \vSigma_{t+1}  \nabla_{\mu_t} \mathcal{L}  + \frac{m}{1-m} \vSigma_{t+1} \vSigma_{t-1}^{-1} (\vmu_t - \vmu_{t-1} )  
\end{align*}
where we use a Student's t-distribution with fixed $\alpha>1$ as the variational distribution, $\nabla_{\mu_t}\mathcal{L}$
and 
$\nabla_{\Sigma_t}\mathcal{L}$
are defined at
\eqref{eq:st_grad_mu} - \eqref{eq:st_grad_sigma}.

Now, we consider the following lower bound ($\vlat \in \mathcal{R}^d$).
\begin{align*}
\mathcal{L}(\vvarpar) = \Unmyexpect{q(z|\varpar)}\sqr{  \sum_{n=1}^{N} \underbrace{\log p(\data_n|\vlat)}_{-f_n(\vlat)} + \log \gauss(\vz|\mathbf{0},\delta^{-1} \vI)  -\log q(\vz|\vvarpar) }.
\end{align*} where
\begin{align*}
 q(\vlat)  &=   \mathrm{det}\left(\pi \vSigma \right)^{-1/2}  \frac{   \Gamma(\alpha+d/2)  \left(   2\alpha + \left(\vlat-\vmu\right)^T \vSigma^{-1}  \left(\vlat-\vmu\right)    \right)^{-\alpha-d/2} }{   \Gamma(\alpha)     \left( 2\alpha\right)^{-\alpha} } .
\end{align*}

We use the results from \citet{kotz2004multivariate}.
\begin{align*}
\Unmyexpect{q(\lat|\varpar)}\sqr{ -\log q(\vlat|\vvarpar) } &= \half \log |\vSigma| +  \log \frac{ (2a \pi)^{d/2}}{ \Gamma(d/2)} + \log\frac{ \Gamma(d/2) \Gamma(a)}{\Gamma(d/2+a)} + (a+d/2) \left(  \psi \left( a + d/2 \right) -  \psi \left( a \right) \right)\\
\Unmyexpect{q(\lat|\varpar)}\sqr{ \log \gauss(\vlat|\mathbf{0},\delta^{-1} \vI ) } &= -\frac{d}{2}\log (2\pi) + \frac{d \log(\delta)}{2} - \frac{\delta}{2} \vmu^T \vmu - \frac{\delta}{2} \frac{a}{a-1} \mathrm{Tr}\left(\vSigma\right)
\end{align*}
where $\psi(\cdot)$ is the digamma function.

The remaining thing is to compute the gradients about $\Unmyexpect{q(\lat)}\sqr{ f_n(\vlat) }$.
To compute the gradients, the reparametrization trick can be used. However, we can do better by the extended
Bonnet's and Price's theorems for Student's t-distribution \citep{wu-report}.
Assuming that $f_n(\vlat)$ satisfies the assumptions needed for these two theorems, we obtain the following gradient expression:
\begin{align*}
\vg_{1}^n &:=  \nabla_{\mu} \Unmyexpect{q(\lat)} \sqr{ f_n(\vlat) } = \Unmyexpect{ q(\lat)} \sqr{ \nabla_{\lat} f_n(\vlat)} \approx   \nabla_{\lat} f_n(\vlat) \\
\vg_{2}^n &:= 2 \nabla_{\Sigma} \Unmyexpect{q(\lat)} \sqr{ f_n(\vlat) } \\
&=  \Unmyexpect{q(\lat)}\sqr{ u(\vlat) \nabla_{\lat}^2 f_n(\vlat)  } \approx  u(\vlat) \nabla_{\lat}^2 f_n(\vlat) \\
&=  \Unmyexpect{q(\mix,\lat)}\sqr{ \mix \nabla_{\lat}^2 f_n(\vlat)  } \approx  \mix \nabla_{\lat}^2 f_n(\vlat) 
\end{align*} 
where $\vlat \in \mathcal{R}^d$ is generated from $q(\vlat)$, $\mix$ is generated from $q(\mix)$ , and
\begin{align*}
   u(\vlat) := \frac{a + \half \left( \vlat-\vmu \right)^T \vSigma^{-1} \left( \vlat-\vmu \right) }{(a+d/2-1)}  
\end{align*}

The gradients of $\mathcal{L}(\vvarpar)$ can be expressed as
\begin{align}
\nabla_{\mu} \mathcal{L}(\vvarpar)&= -\sum_{n=1}^{N}  \vg_1^n - \delta \vmu \label{eq:st_grad_mu} \\
\nabla_{\Sigma} \mathcal{L}(\vvarpar)&= -\half \sum_{n=1}^{N}  \vg_2^n - \frac{\delta}{2}  \frac{a}{a-1} \vI + \half \vSigma^{-1}  \label{eq:st_grad_sigma}
\end{align}

Likewise, we can obtain the variational Adam update with Gaussian prior $p(\vlat)=\gauss(\vlat|\mathbf{0},\delta^{-1} \vI)$ as shown below,
where 
$\vlat \in \mathcal{R}^d$,
$\vSigma^{-1}=\mathrm{Diag}(N\vs+\frac{\alpha\delta}{\alpha-1})$, and
$u(\vlat) = \frac{a + \half \left( \vlat-\vmu \right)^T \vSigma^{-1} \left( \vlat-\vmu \right) }{(a+d/2-1)}$.
\fbox{
			\begin{minipage}{.85\textwidth}
				\textbf{Student's t ($\alpha>1$) extension}
				\begin{algorithmic}[1]
					\WHILE{not converged}
            \STATE $\vlat \leftarrow \vmu {\, +\,  \vsigma\circ \vepsilon}$, where $\mix \sim \mathcal{IG}(\alpha,\alpha)$, $\vepsilon \sim \gauss(\mathbf{0},\vI)$, $\vsigma\leftarrow \sqrt{\mix/(N\vs + \frac{\alpha \delta}{\alpha-1} )}$
               \STATE Randomly sample a data example $\data_i$
               \STATE $\vg_{\mu} \leftarrow - \nabla \log p(\data_i|\vlat) $
               \STATE \textbf{option I:}
                $\vg_{s} \leftarrow  \mix \vg_{\mu} \circ  \vg_{\mu}   $  
	      \STATE \textbf{option II:}
                $\vg_{s} \leftarrow  u(\vlat) \vg_{\mu} \circ \vg_{\mu} $  
               
					\STATE $\vm_{\mu} \leftarrow \gamma_1 \, \vm_{\mu} + (1-\gamma_1) \, \rnd{  \vg_{\mu} { \,\, + \,\, \delta \vmu/N } }$ 
               \STATE $\vs \leftarrow \gamma_2 \, \vs + (1-\gamma_2) \, \vg_s $ 
					\STATE $\hat{\vm}_\mu \leftarrow \vm_\mu/(1-\gamma_1^t),  \quad \hat{\vs} \leftarrow (\vs+  \frac{\alpha\delta}{N (\alpha-1)} )/(1-\gamma_2^t)$
               \STATE $\vmu \leftarrow \vmu - \beta \,\, \hat{\vm}_\mu / \sqrt{\hat{\vs} }$
               
					\STATE $t \leftarrow t + 1$
					\ENDWHILE
				\end{algorithmic}
	\end{minipage}
	}

\subsection{Extension for Symmetric Normal Inverse-Gaussian Distribution}
 Re-expressing the update \eqref{eq:ngd_mom} in terms of $\vmu$ and $\vSigma$ (the same derivation as \citet{khan18a}), we obtain the following update:
\begin{align*}
 \vSigma_{t+1}^{-1} & = \vSigma_t^{-1} - 2 \frac{\beta}{1-m} \nabla_{\Sigma_t}\mathcal{L} + \frac{m}{1-m} \left(  \vSigma_{t}^{-1}  - \vSigma_{t-1}^{-1}\right) \\
 \vmu_{t+1} & = \vmu_t + \frac{\beta}{1-m} \vSigma_{t+1} \nabla_{\mu_t} \mathcal{L}  + \frac{m}{1-m} \vSigma_{t+1} \vSigma_{t-1}^{-1} (\vmu_t - \vmu_{t-1} ) 
\end{align*}
where we use a symmetric normal inverse-Gaussian distribution  with fixed $\lambda>0$ as the variational distribution, $\nabla_{\mu_t}\mathcal{L}$
and $\nabla_{\Sigma_t} \mathcal{L}$ are defined at
\eqref{eq:snig_g_update_mu} -\eqref{eq:snig_g_update_sigma}.

Likewise, we can obtain the variational Adam update with Gaussian prior $p(\vlat)=\gauss(\vlat|\mathbf{0},\delta^{-1} \vI)$ as shown below.
where $\vlat \in \mathcal{R}^d$,
$\vSigma^{-1}=\mathrm{Diag}(N\vs+\delta(1+\lambda^{-1}))$, 
				and \begin{align*}u(\vlat) &=
\sqrt{  \frac{ \left( \vlat-\vmu \right)^T \vSigma^{-1} \left( \vlat-\vmu \right) + \lambda }{  \lambda } } \,\, \frac{ \mathcal{K}_{\frac{d-3}{2}} \left( \sqrt{   \lambda \left( \left(\vlat-\vmu \right)^T \vSigma^{-1} \left(\vlat-\vmu \right) + \lambda \right) } \right)} { \mathcal{K}_{\frac{d-1}{2}} \left( \sqrt{  \lambda  \left( \left(\vlat-\vmu \right)^T \vSigma^{-1} \left(\vlat-\vmu \right) + \lambda \right) } \right)} 
				    \end{align*}
Recall that the ratio about the Bessel functions can be approximated by Eq \eqref{eq:ratio_bessel_sec} when $d\geq 2$.

\fbox{
			\begin{minipage}{.85\textwidth}
				\textbf{Symmetric Normal Inverse-Gaussian ($\lambda>0$) extension}
				\begin{algorithmic}[1]
					\WHILE{not converged}
            \STATE $\vlat \leftarrow \vmu {\, +\,  \vsigma\circ \vepsilon}$, where $\mix \sim \IGauss(1,\lambda)$, $\vepsilon \sim \gauss(\mathbf{0},\vI)$, $\vsigma\leftarrow \sqrt{1/\sqr{\mix(N\vs +  \delta (1+\lambda^{-1})) } }$
               \STATE Randomly sample a data example $\data_i$
               \STATE $\vg_{\mu} \leftarrow - \nabla \log p(\data_i|\vlat) $
               \STATE \textbf{option I:}
                $\vg_{s} \leftarrow  \mix^{-1} \vg_{\mu} \circ  \vg_{\mu}   $  
	      \STATE \textbf{option II:}
                $\vg_{s} \leftarrow  u(\vlat) \vg_{\mu} \circ \vg_{\mu} $  
               
					\STATE $\vm_{\mu} \leftarrow \gamma_1 \, \vm_{\mu} + (1-\gamma_1) \, \rnd{  \vg_{\mu} { \,\, + \,\, \delta \vmu/N } }$ 
               \STATE $\vs \leftarrow \gamma_2 \, \vs + (1-\gamma_2) \, \vg_s $ 
					\STATE $\hat{\vm}_\mu \leftarrow \vm_\mu/(1-\gamma_1^t),  \quad \hat{\vs} \leftarrow (\vs+  \frac{\delta(1+\lambda^{-1})}{N} )/(1-\gamma_2^t)$
               \STATE $\vmu \leftarrow \vmu - \beta \,\, \hat{\vm}_\mu / \sqrt{\hat{\vs} }$
               
					\STATE $t \leftarrow t + 1$
					\ENDWHILE
				\end{algorithmic}
	\end{minipage}
	}

\end{appendix}

\end{document}